\documentclass[opre,nonblindrev]{informs3}

\OneAndAHalfSpacedXI 


\usepackage{endnotes}

%


\usepackage{natbib}
 \bibpunct[, ]{(}{)}{,}{a}{}{,}%

\usepackage{appendix}
\RequirePackage[colorlinks,citecolor=blue,urlcolor=blue]{hyperref}
\RequirePackage{graphicx}
\usepackage[compact]{titlesec}
\usepackage[capitalise]{cleveref}

\crefname{assumption}{Assumption}{Assumptions}
\Crefname{assumption}{Assumption}{Assumptions}

\usepackage{lmodern}
\usepackage{xcolor}
\usepackage{graphicx}
\usepackage{enumerate}
\usepackage{natbib}
\usepackage{url} 
\usepackage{mathtools}
\usepackage{multirow}

\TheoremsNumberedThrough     
\ECRepeatTheorems

\EquationsNumberedThrough    


\usepackage{booktabs} 

\usepackage{algorithmic}
\usepackage[ruled,linesnumbered, boxed]{algorithm2e}

\usepackage[most]{tcolorbox}



\usepackage{amsmath,amsfonts,bm}









\def\eqref#1{Eq.~(\ref{#1})}
\def\Eqref#1{Eq.~(\ref{#1})}








\def\1{\bm{1}}

\def\eps{{\epsilon}}







\def\vzero{{\bm{0}}}

\def\va{{\bm{a}}}
\def\vb{{\bm{b}}}

\def\vd{{\bm{d}}}
\def\ve{{\bm{e}}}

\def\vm{{\bm{m}}}

\def\vr{{\bm{r}}}
\def\vs{{\bm{s}}}

\def\vu{{\bm{u}}}
\def\vv{{\bm{v}}}

\def\vx{{\bm{x}}}


\def\mA{{\bm{A}}}
\def\mB{{\bm{B}}}
\def\mC{{\bm{C}}}
\def\mD{{\bm{D}}}
\def\mE{{\bm{E}}}

\def\mI{{\bm{I}}}

\def\mK{{\bm{K}}}

\def\mP{{\bm{P}}}
\def\mQ{{\bm{Q}}}

\def\mU{{\bm{U}}}
\def\mV{{\bm{V}}}
\def\mW{{\bm{W}}}
\def\mX{{\bm{X}}}
\def\mY{{\bm{Y}}}
\def\mZ{{\bm{Z}}}

\DeclareMathAlphabet{\mathsfit}{\encodingdefault}{\sfdefault}{m}{sl}
\SetMathAlphabet{\mathsfit}{bold}{\encodingdefault}{\sfdefault}{bx}{n}

\def\gA{{\mathcal{A}}}
\def\gB{{\mathcal{B}}}

\def\gD{{\mathcal{D}}}

\def\gF{{\mathcal{F}}}

\def\gM{{\mathcal{M}}}
\def\gN{{\mathcal{N}}}
\def\gO{{\mathcal{O}}}
\def\gP{{\mathcal{P}}}

\def\gS{{\mathcal{S}}}

\def\gZ{{\mathcal{Z}}}










\newcommand{\E}{\mathbb{E}}

\newcommand{\R}{\mathbb{R}}




\DeclareMathOperator{\Tr}{Tr}

\usepackage{xspace}

\newcommand{\pa}[1]{\left({#1}\right)}
\newcommand{\br}[1]{\left[{#1}\right]}
\newcommand{\brk}[1]{\left\{{#1}\right\}}
\newcommand{\norm}[1]{\left\lVert#1\right\rVert}

\newcommand{\abs}[1]{\left\lvert#1\right\rvert}

\def\vzero{{\bf{0}}}

\newcommand{\PAB}{\mP^\pi_{AB}}
\newcommand{\PA}{\mP^\pi_A}
\newcommand{\PB}{\mP^\pi_B}
\newcommand{\muAB}{{\mu^\pi_{AB}}}
\newcommand{\muA}{{\mu^\pi_A}}
\newcommand{\SEP}{\gP_{\textrm{SEP}}}
\newcommand{\PN}{\mP^\pi_{1:N}}

\begin{document}


\RUNAUTHOR{Chen and Peng}

\RUNTITLE{Markov Entanglement}

\TITLE{Multi-agent Markov Entanglement}

\ARTICLEAUTHORS{%
\AUTHOR{Shuze Chen}
\AFF{Graduate School of Business, Columbia University, New York, NY 10027, \EMAIL{shuze.chen@columbia.edu}
 }
\AUTHOR{Tianyi Peng}
\AFF{Graduate School of Business, Columbia University, New York, NY 10027, \EMAIL{tianyi.peng@columbia.edu}}
} 

\ABSTRACT{%

Value decomposition has long been a fundamental technique in multi-agent dynamic programming and reinforcement learning (RL). Specifically, the value function of a global state $(s_1,s_2,\ldots,s_N)$ is often approximated as the sum of local functions: $V(s_1,s_2,\ldots,s_N)\approx\sum_{i=1}^N V_i(s_i)$. This approach traces back to the index policy in restless multi-armed bandit problems and has found various applications in modern RL systems. However, the theoretical justification for why this decomposition works so effectively remains underexplored.

In this paper, we uncover the underlying mathematical structure that enables value decomposition. We demonstrate that a multi-agent Markov decision process (MDP) permits value decomposition \emph{if and only if} its transition matrix is not ``entangled''---a concept analogous to quantum entanglement in quantum physics. Drawing inspiration from how physicists measure quantum entanglement, we introduce how to measure the ``Markov entanglement'' for multi-agent MDPs and show that this measure can be used to bound the decomposition error in general multi-agent MDPs.

Using the concept of Markov entanglement, we prove that a widely-used class of index policies is weakly entangled and enjoys a sublinear $\gO(\sqrt{N})$ scale of decomposition error for $N$-agent systems. Finally, we show how Markov entanglement can be efficiently estimated in practice, providing practitioners with an empirical proxy for the quality of value decomposition.
}%


\KEYWORDS{Multi-agent Reinforcement Learning, Policy Evaluation, Weakly Coupled Markov Decision Process, Restless Multi-armed Bandit} 

\maketitle

%


\makeatletter
\def\proof{\@ifnextchar[{\@proof}{\@proof[Proof]}}
\def\@proof[#1]{\begin{trivlist}\item[\hspace*{1em}\hskip\labelsep{\it #1.\enskip}]\ignorespaces}
\makeatother
\def\endproof{\ifmmode\eqno\qed\else\qed\par\fi\end{trivlist}\addvspace{0pt}} 
\def\qed{\hfill\ensuremath{\square}} 


\section{Introduction}
Learning the value function given certain policy, or \emph{policy evaluation}, is one of the most fundamental tasks in RL. Significant attention has been paid to single-agent policy evaluation (\citealt{Sutton18RL, bertsekas1996neuro, john96analysis}). However, when it comes to multi-agent reinforcement learning (MARL), single-agent methodologies typically suffer from \emph{the curse of dimensionality}: the state space of the system scales exponentially with the number of agents. To tackle this problem, one common technique is to decompose the global value function,
\[V(s_1,s_2,\ldots,s_N)\approx\sum_{i=1}^N V_i(s_i)\,,\]
where $V_i$ is some local function that can be learned independently by each agent. It quickly follows that this decomposition greatly reduce the computation complexity from exponential to linear dependency on the number of agents $N$.

The remaining question is whether this decomposition is effective. This is non-trivial due to the coupling of agents---individual agent's action and transition depend on other agents. For example, in a ride-hailing platform, if one driver took the order, then other drivers are not allowed fulfill the same order. As a result, value decomposition may lose information and introduce bias without considering the global constraints. 

In the past several decades, both positive and negative results have been reported. Back to the last century, \cite{Whittle88restless, weber90on} apply Lagrange relaxations to decompose the global value and obtain the well-known Whittle index policy. The Lagrange decomposition idea has also been proved successful in many other important multi-agent tasks such as network revenue management (\citealt{adelman2007dynamic,zhang2009approximate}), resource allocation (\citealt{kadota2016minimizing, santiage23thebest}), and online matching (\citealt{brown2022dynamic, david23on, shar2023weakly, yash24blind}). However, Lagrange decomposition relies on the knowledge of system dynamics and \cite{daniel08relaxation} demonstrates its decomposition error can be arbitrarily bad for general weakly-coupled MDPs. In more recent days, practitioners apply online (deep) reinforcement learning to train a local value function for each individual agent. Albeit little theoretical understanding, this decomposition witnessed great empirical success in practice. For example, ride-hailing platforms conduct policy iteration with global value approximated by the summation of local value functions learned by individual drivers. This practice gives birth to state-of-the-art dispatching policies and have been well recognized by the operations research community, such as DiDi Chuxing (\citealt{qin20ride}\textcolor{blue}{, [Daniel H. Wagner Prize]}) and Lyft (\citealt{xabi24better}\textcolor{blue}{, [Franz Edelman Laureates]}). Intervention policies based on similar value decomposition idea also demonstrate substantial empirical advantage and have been deployed by a behavioral health platform in Kenya (\citealt{baek2023policy}\textcolor{blue}{, [Pierskalla
Award]}). In broader MARL literature, value decomposition serves as one key component of centralized training and decentralized execution (CTDE) paradigm, achieving state-of-the-art performance (\citealt{peter18value,  rashid2020monotonic}). In particular, agents optimize their local value functions and combining them to obtain the global optimal policy, adhering to the individual-global max (IGM) principle. However, recent research has started reflecting on invalidity and potential flaw of this principle in practice (\citealt{hong2022rethinking, dou2022understanding}). 

Despite all these empirical success and failures, there remains little theoretical understanding of whether and how we can decompose the value function in multi-agent Markov systems. 

\subsection{This Paper}
In this paper, we will uncover the underlying mathematical structure that enables/disables value decomposition. Our new theoretical framework quantifies the inter-dependence of agents in multi-agent MDPs and systematically characterizes the effectiveness of value decomposition. For simplicity, we will demonstrate the main results through two-agent MDPs indexed by agent $A$ and $B$. We later extend our results to general $N$-agent MDPs in section~\ref{sec: multi-agent results}. 

We start with a trivial example where two agents are independent, i.e. each following independent MDPs. It's clear that the global value function can be decomposed as the summation of value functions of local MDPs. As two agents are independent, it holds $P^\pi(s_A^\prime,s_B^\prime\mid s_A,s_B)=P^\pi(s_A^\prime\mid s_A) \cdot P^\pi(s_B^\prime\mid s_B)$, or in matrix form, 
\[\PAB=\mP^\pi_A\otimes\mP^\pi_B\,,\]
where $\otimes$ is the tensor product or Kronecker product of matrices. The important question is whether we can extend beyond this trivial case of independent systems.

\paragraph{\textbf{A Sufficient and Necessary Condition}} 
We introduce a new condition called ``Markov Entanglement" to describe the intrinsic structure of the transition dynamics in multi-agent MDPs. 

\begin{center}
\begin{tcolorbox}[colback=blue!5!white,colframe=blue!75!black,title=Markov Entanglement]
  Consider a two-agent MDP with transition $\mP^\pi_{AB}$. If there exists
  \[\PAB = \sum_{j=1}^K x_j\mP_A^{(j)}\otimes \mP_B^{(j)}\,,\]
  then $\PAB$ is separable; otherwise is entangled.
\end{tcolorbox}
\end{center}
Compared with the preceding example of independent subsystems, Markov entanglement offers an intuitive interpretation: a two-agent MDP is separable if it can be expressed as a \emph{linear combination of independent systems}. We then demonstrate,
\[\textrm{separable }\PAB  \Longleftrightarrow \textrm{ decomposable } \mV^\pi_{AB}\,,\]
where $\mV^\pi_{AB}$ is decomposable if there exist local value functions $\mV_A,\mV_B$ such that $V^\pi_{AB}(s_A,s_B)=V_A(s_A)+V_B(s_B)$ for all $(s_A,s_B)$. This result sharply unravels the secret structure of system dynamics governing value decomposition. As a sufficient condition, our finding strictly generalizes the previous independent subsystem example, extending it to scenarios involving interacting and coupled agents. As a necessary condition, we prove that exact value decomposition under any reward kernel requires the system dynamics to be separable. Taken together, this result provides a \textit{complete characterization} of when exact value function decomposition is possible in multi-agent MDPs.

More interestingly, our Markov entanglement condition turns out be a mathematical counterpart of quantum entanglement in quantum physics, whose definition is provided below.
\begin{center}
\begin{tcolorbox}[colback=orange!5!white,colframe=orange!75!black,title=Quantum Entanglement]
  Consider a two-party quantum state $\rho_{AB}$. If there exists
  \[\rho_{AB} = \sum_{j=1}^K x_j\rho_A^{(j)}\otimes \rho_B^{(j)}\,,\quad x_j\geq 0\,,\]
  then $\rho_{AB}$ is separable; otherwise is entangled.
\end{tcolorbox}
\end{center}
The quantum state is represented by a \emph{density matrix}, a positive semi-definite matrix with unit trace, analogous to transition matrix in the Markov world. The concept of quantum entanglement describes the inter-dependence of particles in a quantum system, while Markov entanglement describes that of agents in a Markov system. 

Finally, we introduce several novel proof techniques concerning the sufficient and necessary condition, including an ``absorbing'' technique for separable transition matrices and a novel characterization of the linear space spanned by tensor products of transition matrices. We believe these techniques hold independent interest for the broader RL community.

\paragraph{\textbf{Decomposition Error in General Multi-agent MDPs}}
Despite the precise characterization of Markov entanglement and exact value decomposition, general multi-agent MDPs can exhibit arbitrary complexity, with agents intricately entangled. This raises a critical question: \emph{can value decomposition serve as a meaningful approximation in such scenarios?} To address this, we introduce a mathematical quantification to measure the Markov entanglement in general multi-agent MDPs,
\[E(\PAB)\coloneqq\min_{\mP\in\gP_{\rm{SEP}}} d(\PAB, \mP)\,,\]
where $\gP_{\rm{SEP}}$ is the set of all separable transition matrices and $d(\cdot,\cdot)$ is some distance measure. In other words, the degree of Markov entanglement is determined by its distance to the closest separable transition matrix. This concept can also find its counterpart in quantum physics, with the measure of quantum entanglement defined as 
\[E(\rho_{AB})\coloneqq \min_{\rho\in\rho_{\rm{SEP}}} d(\rho_{AB}, \rho)\,,\]
where $\rho_{\rm{SEP}}$ is the set of all separable quantum states. In quantum physics, various distance measures have been designed for density matrices and capture different physical interpretations (\citealt{nielsen2010quantum}). In the Markov world, we analogously design distance measures for transition matrices and relate them to the value decomposition error, 
\[\Big\|\textrm{decomposition error of }\mV^\pi_{AB} \Big\| = \gO\Big(E(\PAB)\Big)\,.\]
where $\norm{\cdot}$ depends on the distance we use to measure Markov entanglement. 
We explore diverse distance measures including the well-known total variation distance and its stationary distribution weighted variant. We also design a novel agent-wise distance incorporating the multi-agent structure, which may be of independent interest to the MARL community. We further demonstrate how different distance measures capture the entanglement from different perspectives and give birth to the decomposition error in different norms.

\paragraph{\textbf{Applications of Markov Entanglement}}
We then apply Markov entanglement theory to several structured multi-agent MDPs. We prove a widely-used class of index policies is asymptotically separable, exhibiting a sublinear decomposition error scaling as $\gO(\sqrt N)$ with the number of agents $N$. This result theoretically justifies the practical effectiveness of value decomposition for index policies. Our proof builds on innovations that integrate Markov entanglement with mean-field analysis. 

For practitioners, we show that Markov entanglement can be efficiently estimated and serves as a surrogate to test whether value decomposition is feasible. Finally, we empirically demonstrate the low-entangled structure in several practical scenarios including a ride-hailing simulator.

\subsection{Other Related Work}
In the first section, we have reviewed typical empirical works on value decomposition. Here, we complement that discussion with related literature on theoretical insights. 

Prior theoretical research has extensively investigated the decomposition of optimal value functions in multi-agent settings. A prominent area involves Lagrange relaxation, with the Restless Multi-Armed Bandit (RMAB, \citealt{Whittle88restless}) as a foundational model. The per-agent decomposition error is proven to decay asymptotically to zero (\citealt{weber90on, Weber_Weiss_1991, Verloop2016AsymptoticallyOP}), justifying the asymptotic optimality of the well-known Whittle Index policy (\citealt{Whittle88restless}). The decay rate is further refined to quadratic or exponential under various conditions (\citealt{gast23exponential, gast24linear, brown2022dynamic, zhang2021restless, zhang2022near}). Other work generalizes to Weakly-Coupled MDPs (WCMDPs), deriving guarantees based on system structure (\citealt{santiago21, david25fluid, gast2022reoptimization}). However, \cite{daniel08relaxation} showed Lagrange relaxation can have arbitrarily large errors and proposed an alternative decomposition called Approximate Linear Programs (ALP). ALP is proven to have tighter error, a finding further explored by \cite{david23on}. Despite these advancements, characterizing decomposition error for general multi-agent MDPs remains challenging for both approaches. In contrast, our Markov entanglement theory analyzes value decomposition for general multi-agent MDPs under arbitrary policies, including optimal ones. Notably, we show sublinear decomposition error not only for the optimal Whittle Index policy but for any index policy as well.

Another line of theoretical work has concentrated on policy optimization via value decomposition. Despite the reported empirical successes, rigorous theoretical analysis remains challenging. \cite{baek2023policy} derived an approximation ratio for a specific index policy on a two-state RMAB. \cite{wang2021towards, dou2022understanding} analyzed the convergence of the CTDE paradigm under strong exploration assumptions, while also highlighting scenarios of divergence. In contrast, our work instead focuses on policy evaluation rather than optimization. This enables us to derive clear and interpretable bounds on the decomposition error for general finite-state multi-agent MDPs that only require the existence of a stationary distribution.

Finally, we note that value decomposition can be viewed as single-agent policy evaluation with linear one-hot feature approximation. While extensive research has analyzed the convergence of single-agent policy evaluation with linear functions (\citealt{john96analysis, jalaj21afinite, bertsekas1996neuro, srikant2019finite, liu2021temporal}), these results don't explain how the limit point effectively approximates the global value---which is the central focus of our work. Thus, our paper addresses an aspect orthogonal to this literature.

\subsection{Notations}
We abbreviate subscripts $(\vs)\coloneqq(s_{1:N})\coloneqq(s_1,s_2,\ldots,s_N)$. Particularly, for two-agent case, when the context is clear, we abbreviate $(\vs)\coloneqq(s_{AB})\coloneqq(s_A,s_B)$. Let $[N]=\{1,2,\ldots,N\}$ and $\mathbb Z^+$ be the set of positive integers. For a vector or matrix $\vx$, $\vx\geq0$ is equivalent to every element of $\vx$ to be non-negative and $\abs{\vx}$ denotes the element-wise absolute value. For (semi-)norm $\|\cdot\|_\alpha$ and norm $\|\cdot\|_\beta$, we define the $\alpha,\beta$-norm for matrix $\mA$ as $\|\mA\|_{\alpha,\beta}=\sup_{\|\vx\|_\beta=1}\|\mA\vx\|_\alpha$. We further abbreviate $\|\mA\|_\alpha\coloneqq\|\mA\|_{\alpha,\alpha}$. 
\section{Model}\label{sec: model}
We consider a standard two-agent MDP $\gM_{AB}(\gS,\gA,\mP,\vr_A,\vr_B,\gamma)$ with joint state space $\gS=\gS_A\times \gS_B$ and joint action space $\gA=\gA_A\times\gA_B$ where $A,B$ represent two agents. For simplicity, let $|\gS_A|=|\gS_B|=|S|$ and $|\gA_A|=|\gA_B|=|A|$. For agents at global state $\vs=(s_A,s_B)$ with action $\va=(a_A,a_B)$ taken, the system will transit to $\vs^\prime=(s_A^\prime,s_B^\prime)$ according to transition kernel $\vs^\prime\sim\mP(\cdot\mid \vs,\va)$ and each agent $i\in\{A,B\}$ will receive its local reward $r_i(s_i,a_i)$. The global reward $r_{AB}$ is defined as the summation of local rewards $r_{AB}(\vs,\va)\coloneqq r_A(s_A,a_A)+r_B(s_B,a_B)$, or in vector form, 
\[\vr_{AB}\in\R^{|S|^2|A|^2}\coloneqq \vr_A\otimes \ve + \ve\otimes \vr_B\,,\] where $\otimes$ is the tensor product and $\ve=\mathbf{1}\in\R^{|S||A|}$ is the vector of all ones. This reward structure is broadly satisfied in various piratical scenarios. For example, a ride-hailing platform's overall revenue is the summation of revenue of each driver. Many well-established models also include this reward structure, such as RMAB (\citealt{Whittle88restless,weber90on}) and WCMDPs (\citealt{brown2022dynamic,daniel08relaxation}). 
In Appendix~\ref{sec: shared reward}, we extend our results to multi-agent MDP model where the global cannot be decomposed. 
We further assume the local rewards are bounded, i.e. for agent $i\in \{A,B\}$, $\abs{r_{i}(s_i,a_i)}\leq r_{\max}^i$ for all $(s_i,a_i)$.

Given any global policy $\pi\colon \gS\to\Delta(\gA)$, the global Q-value under policy $\pi$ is defined as the discounted summation of global rewards,
\[Q^\pi_{AB}(\vs,\va)=\E\br{\sum_{t=0}^\infty \gamma^t r_{AB}(\vs^t,\va^t)\mid \pi,(\vs^0,\va^0)=(\vs,\va)}\,,\]
where $\gamma\in[0,1)$ is the discount factor. The value function is then defined as $V^\pi_{AB}(\vs)=\E_{\va\sim\pi(\cdot\mid\vs)}\br{Q^\pi_{AB}(\vs,\va)}$. We denote $\mP^\pi_{AB}\in\R^{|S|^2|A|^2\times |S|^2|A|^2}$ as the transition matrix induced by $\pi$ as $P^\pi_{AB}\pa{\vs^\prime,\va^\prime\mid \vs,\va}= \mP\pa{\vs^\prime\mid \vs,\va}\cdot\pi\pa{\va^\prime\mid \vs^\prime}$. Then by the Bellman Equation, we have $Q^\pi_{AB}(\vs,\va)=r_{AB}(\vs,\va)+\gamma \sum_{\vs^\prime,\va^\prime} P^\pi_{AB}\pa{\vs^\prime,\va^\prime\mid \vs,\va} Q^\pi_{AB}(\vs^\prime,\va^\prime)$, or in matrix form,
\begin{equation*}
    Q^\pi_{AB}=\pa{\mI-\gamma \mP^\pi_{AB}}^{-1} \vr_{AB}\,.
\end{equation*}
Our objective is to decompose this global Q-value $Q^\pi_{AB}$ as the summation of some local functions $Q_A$ and $Q_B$, i.e. $Q^\pi_{AB}(\vs,\va)= Q_A(s_A,a_A)+Q_B(s_B,a_B)$, or in vector form,
\begin{equation}\label{eq: Q-decomp}
    Q^\pi_{AB} = Q_A \otimes \ve + \ve\otimes Q_B\,.
\end{equation}
Notice we formally introduce our research question using Q-value instead of V-value function as in the introduction. Q-value decomposition is a stronger result that implies V-value function decomposition. It also turns out that Q-value further incorporates action information enabling more general theoretical analysis. More discussions can be found in Appendix~\ref{app: decomp_value}.

\subsection{Local (Q-)value Functions}
Recent literature offers several algorithms for learning local (Q-)values. In this paper, we use a meta-algorithm framework in~\ref{alg: meta} to summarize their underlying principles.

\begin{algorithm}[h]
\caption{Leaning Local Q-value Functions}
\begin{algorithmic}[1]\label{alg: meta}
\REQUIRE{Global policy $\pi$; horizon length $T$.}
\STATE Execute $\pi$ for $T$ epochs and obtain $\gD=\brk{(s_{AB}^t,a_{AB}^t,r_{AB}^t, s_{AB}^{t+1}, a_{AB}^{t+1})}_{t=1}^{T-1}$.
\STATE Each agent $i\in\brk{A,B}$ fits $Q^\pi_i$ using local observations $\gD_i=\brk{(s_i^t,a_i^t,r_i^t,s_{i}^{t+1}, a_i^{t+1})}_{t=1}^{T-1}$.
\end{algorithmic}
\end{algorithm}
This meta-algorithm framework is simple and intuitive: each agent independently fits its local Q-values based on local observations. Notably, the framework requires no prior knowledge of the MDP, and learning can be performed in a fully decentralized manner. Furthermore, we use term \emph{meta} in that we do not pose restrictions on how agents estimate their local Q-values. For tabular settings, one can plug in Temporal Difference (TD) learning (\citealt{Sutton18RL}) or its variants. For large-scale problems, one can apply linear function approximations (e.g. \citealt{baek2023policy, han22real, bertsekas1996neuro}) or more sophisticated neural networks (e.g. \citealt{qin20ride, peter18value, mahajan2019maven}).

Despite the flexibility in fitting local value functions, it is helpful to call out a particular approach: TD learning for local Q-values in the tabular case, as it facilitates the analysis and reveals the structure of value decomposition in the next section.

\paragraph{Local TD learning.} Although each agent's environment is not Markovian in a local sense (it is, more precisely, partially observed Markovian), TD learning views local observations $\gD_i$ as being sampled from a Markov chain. As a result, an agent's local transition is a marginalization of the global transition. We focus on this ``marginalized'' local transition matrix under the stationary distribution. Mathematically, for agent $A$, we denote $\mP^\pi_{A}\in\R^{|S||A|\times|S||A|}$ as its local transition where
\begin{equation}\label{eq: PA}
    P^\pi_A(s_{A}^\prime,a_A^\prime\mid s_A,a_A) = \sum_{s_B^\prime,a_B^\prime} \sum_{s_B,a_B}P^\pi_{AB}\pa{s_{AB}^\prime,a_{AB}^\prime\mid s_{AB},a_{AB}} \mu^\pi_{AB}(s_B,a_B\mid s_A,a_A)\,.
\end{equation}
Here, $\mu_{AB}^{\pi} \in \Delta(\mathcal{S})$ denotes the global stationary distribution under policy $\pi$ (for convenience, we assume $\pi$ induces a unichain, i.e. $\muAB$ is unique and strictly positive). We further define the occupancy measure under policy $\pi$ as $\mu^\pi_{AB}(\vs,\va)=\mu^\pi_{AB}(\vs)\pi(\va\mid \vs)$. Thus~\eqref{eq: PA} states the local transition of agent $A$ at pair $(s_A, a_A)$ is a projection of global transition weighted by the conditional occupancy measure of agent~$B$'s state-action pairs, $\mu^\pi_{AB}(s_B, a_B \mid s_A, a_A)$.\footnote{For $\mu^\pi_{AB}(s_B,a_B\mid s_A,a_A)$ to be well-defined, we require $\mu^\pi_{AB}(s_A,a_A)>0$. If $\mu^\pi_{AB}(s_A, a_A) = 0$, the action $a_A$ is never taken in state $s_A$ under policy $\pi$, and we exclude such pairs by restricting the feasible action set $\mathcal{A}(s_A)$. All theoretical results hold for the remaining valid state-action pairs.   } Given this ``marginalized" local transition, the local Q-values obtained by Meta Algorithm~\ref{alg: meta} using tabular TD learning converge to the solution of the following ``marginalized'' Bellman equation:
\begin{equation*}
    Q^\pi_{A}=\pa{\mI-\gamma \mP^\pi_{A}}^{-1} \vr_{A}\,.
\end{equation*}
By symmetry, we can derive analogous results for agent~$B$, obtaining its local transition matrix $\mP^\pi_B$ and local Q-values $Q^\pi_B$. Next, we show how $Q^{\pi}_A$ and $Q^{\pi}_{B}$ contribute to the exact value decomposition.

\section{Exact Value Decomposition}
In this section, we investigate whether the global Q-value admits an exact decomposition, i.e. $Q^\pi_{AB}=Q_A\otimes \ve +\ve\otimes Q_B$ for some local functions $Q_A$ and $Q_B$. As demonstrated in the introduction, we identify a key condition called \emph{Markov Entanglement}, which we formally define as follows:

\begin{definition}[Two-agent Markov Entanglement]\label{def: two-agent ME}
    Consider a two-agent MDP $\gM_{AB}$ and policy $\pi\colon \gS \to \Delta(\gA)$, the two agents are \textbf{separable} if there exists $K\in\mathbb Z ^+$, measure $\{x_j\}_{j\in[K]}$ satisfying $\sum_{j=1}^Kx_j=1$, and transition matrices $\left\{\mP^{(j)}_A,\mP^{(j)}_B\right\}_{j\in[K]}$ such that 
    \[\PAB = \sum_{j=1}^K x_j\mP_A^{(j)}\otimes \mP_B^{(j)}\,.\]
    If there exists no such decomposition, the two agents are  \textbf{entangled}.
\end{definition}


Our first theorem shows that an MDP with no Markov entanglement is indeed sufficient for the exact value decomposition.
\begin{theorem}\label{thm: mixed_state}
    Consider a two-agent MDP $\gM_{AB}$ and policy $\pi$. If two agents are separable, i.e. there exists $K\in \mathbb Z^+$, measure $\{x_j\}_{j\in[K]}$, and transition matrices $\left\{\mP^{(j)}_A,\mP^{(j)}_B\right\}_{j\in[K]}$ such that 
    $\PAB = \sum_{j=1}^K x_j\mP_A^{(j)}\otimes \mP_B^{(j)}$. Then it holds 
    \[\PA = \sum_{i=1}^K x_j\mP_A^{(j)}\,,\qquad \PB = \sum_{j=1}^K x_j\mP_B^{(j)}\,.\]
    Furthermore, the \Eqref{eq: Q-decomp} holds\[Q_{AB}^\pi = Q^\pi_{A}\otimes \ve + \ve\otimes Q^\pi_{B}\,.\]
\end{theorem}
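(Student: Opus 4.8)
The plan is to prove the two assertions in sequence, since the first (the marginalized local transitions inherit the separable coefficients $x_j$) feeds directly into the second (the Q-value decomposition). Throughout I would rely only on two elementary facts: each factor $\mP_A^{(j)}$ and $\mP_B^{(j)}$ is row-stochastic, so it fixes the all-ones vector $\ve$; and the mixed-product property of the Kronecker product, $(\mA\otimes\mB)(\vx\otimes\vy)=(\mA\vx)\otimes(\mB\vy)$.

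For the first claim, I would substitute the separable form $\PAB=\sum_{j=1}^K x_j\,\mP_A^{(j)}\otimes\mP_B^{(j)}$ entrywise into the definition~\eqref{eq: PA} of $\PA$. Writing $P^\pi_{AB}(s_{AB}',a_{AB}'\mid s_{AB},a_{AB})=\sum_j x_j\,\mP_A^{(j)}(s_A',a_A'\mid s_A,a_A)\,\mP_B^{(j)}(s_B',a_B'\mid s_B,a_B)$ and summing over $(s_B',a_B')$ collapses the $B$-factor to $1$ by row-stochasticity of $\mP_B^{(j)}$; summing the conditional occupancy $\mu^\pi_{AB}(s_B,a_B\mid s_A,a_A)$ over $(s_B,a_B)$ likewise yields $1$. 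What remains is exactly $\sum_j x_j\,\mP_A^{(j)}(s_A',a_A'\mid s_A,a_A)$, giving $\PA=\sum_j x_j\mP_A^{(j)}$, and by symmetry $\PB=\sum_j x_j\mP_B^{(j)}$. A worthwhile remark is that the occupancy weighting washes out entirely here; this is precisely what makes the marginalization robust under separability.

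For the Q-decomposition, rather than manipulating the inverse $(\mI-\gamma\PAB)^{-1}$ directly, I would use a guess-and-verify argument based on uniqueness of the Bellman fixed point (valid since $\gamma<1$ makes $\mI-\gamma\PAB$ invertible). Set the candidate $\widetilde Q\coloneqq Q^\pi_A\otimes\ve+\ve\otimes Q^\pi_B$ and check that it solves the global Bellman equation $\widetilde Q=\vr_{AB}+\gamma\PAB\widetilde Q$. Applying $\PAB$ and the mixed-product property, each term such as $\PAB(Q^\pi_A\otimes\ve)=\sum_j x_j(\mP_A^{(j)}Q^\pi_A)\otimes(\mP_B^{(j)}\ve)$ simplifies because $\mP_B^{(j)}\ve=\ve$, and then the first claim lets me resum the coefficients into $(\PA Q^\pi_A)\otimes\ve$. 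Grouping the $A$- and $B$-pieces and invoking the local Bellman equations $\vr_A+\gamma\PA Q^\pi_A=Q^\pi_A$ and $\vr_B+\gamma\PB Q^\pi_B=Q^\pi_B$ (together with $\vr_{AB}=\vr_A\otimes\ve+\ve\otimes\vr_B$) recovers $\widetilde Q$ exactly, so by uniqueness $Q^\pi_{AB}=\widetilde Q$.

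The step I expect to be the crux is the first claim, specifically confirming that the occupancy-measure weighting in~\eqref{eq: PA} does not obstruct the inheritance of the coefficients $x_j$; everything downstream is then a mechanical Kronecker-algebra verification. I would also pause to confirm that the normalization $\sum_j x_j=1$ is what guarantees $\sum_j x_j\mP_A^{(j)}$ is itself a valid stochastic matrix, consistent with the row-stochasticity that the marginalized $\PA$ possesses automatically by its definition.
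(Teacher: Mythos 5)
Your proof is correct, and the two halves deserve separate verdicts. The first half coincides with the paper's argument essentially verbatim: substitute the separable form entrywise into \eqref{eq: PA}, collapse the sum over $(s_B^\prime,a_B^\prime)$ by row-stochasticity of each $\mP_B^{(j)}$, and collapse the conditional occupancy weights $\mu^\pi_{AB}(s_B,a_B\mid s_A,a_A)$ to one—exactly the paper's computation. For the second half you take a genuinely different route: the paper expands $(\mI-\gamma\PAB)^{-1}$ as a Neumann series and pushes the ``absorbing'' identity $\bigl(\sum_j x_j\mP_A^{(j)}\otimes\mP_B^{(j)}\bigr)^t(\vr_A\otimes\ve)=\bigl(\bigl(\sum_j x_j\mP_A^{(j)}\bigr)^t\vr_A\bigr)\otimes\ve$ through every power $t$, whereas you verify once that the candidate $Q^\pi_A\otimes\ve+\ve\otimes Q^\pi_B$ satisfies the global Bellman equation and invoke uniqueness of its fixed point (valid since $\mI-\gamma\PAB$ is invertible for $\gamma<1$). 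The algebraic kernel is identical in both—the mixed-product property plus $\mP_B^{(j)}\ve=\ve$, together with resumming coefficients via the first claim—but your one-step fixed-point check avoids the infinite series and is arguably the more economical write-up; the paper's iterated-absorbing formulation has the side benefit that the same manipulation is reused downstream, e.g.\ in the proofs of Theorem~\ref{thm: two-agent atv} and Theorem~\ref{thm: rho-weighted decomp} and in the necessity argument of Theorem~\ref{thm: necessary_condition}, which inverts precisely such Neumann series.

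One side remark of yours is slightly imprecise, though it does not affect the proof: you say the normalization $\sum_j x_j=1$ ``guarantees $\sum_j x_j\mP_A^{(j)}$ is itself a valid stochastic matrix.'' Normalization only guarantees that its rows sum to one; since Definition~\ref{def: two-agent ME} permits signed coefficients (and Appendix~\ref{app: neg_coeff} shows negative $x_j$ can be unavoidable), entrywise nonnegativity of the combination is not automatic from normalization. It instead follows a posteriori from the identity $\PA=\sum_j x_j\mP_A^{(j)}$ you have just proved, because $\PA$ is a genuine transition matrix by its marginalization definition. Your argument itself only uses $\mP_B^{(j)}\ve=\ve$ for the individual factors and the local Bellman equations for $\PA,\PB$, so it stands as written.
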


This theorem establishes that even when the system is not independent, as long as it can be represented as a \emph{linear combination of independent subsystems}, the global Q-value admits an exact decomposition. More importantly, this result implies that local TD learning (or Meta Algorithm~\ref{alg: meta} more generally) will converge to the desired local transition matrices. Consequently, if an exact decomposition of $Q^\pi_{AB}$ exists, Meta Algorithm~\ref{alg: meta} is guaranteed to recover the corresponding local Q-values $Q_A^\pi$ and $Q_B^\pi$.

\subsection{An Illustrative Example of Coupling and Markov Entanglement}
To elucidate the concept of Markov entanglement, we present an example of two-agent MDP where agents are coupled but not entangled. 

Consider a two-agent MDP $\gM_{AB}$ with $\|\gA_A\|=\|\gA_B\|=2$ , where action $1$ means activate and $0$ means idle. Each agent $i\in\{A,B\}$ has its own transition kernel $\mP_i$. We examine the following policy:

\begin{example}[Shared Randomness]\label{example: shared random}
    At each time-step, randomly activate one agent and keep another idle. In other words,
\begin{equation*}
\pi(\va\mid\vs)=\left\{\begin{matrix}
 1/2  & \qquad\va=(0,1) \textrm{ or } \va=(1,0)\,,\\
 0 & \textrm{otherwise.}
\end{matrix}\right.
\end{equation*}
\end{example}
As a concrete example, consider a ride-hailing platform where the policy randomly assigns one of two available drivers to each incoming order. Formally, this policy couples the agents through the constraint $a_A + a_B = 1$ at each timestep. However, we will demonstrate that despite this coupling, there's \emph{no} entanglement.
Specifically, we construct the following decomposition
\begin{equation}\label{eq: random 1/2}
    \mP^\pi_{AB}=\frac{1}{2} \mP^0_A\otimes \mP^1_B + \frac{1}{2}\mP^1_A\otimes\mP^0_B\,,
\end{equation}
where $\mP^0_A$ refers to the transition matrix of agents $A$ taking action $a=0$ and we similarly define $\{\mP^a_i\}_{i\in\{A,B\}, a\in \{0,1\}}$. Intuitively, the right-hand side of \eqref{eq: random 1/2} describes how at each time step, the global system randomly selects between two possible transitions: $\mP^0_A\otimes \mP^1_B$ or $\mP^1_A\otimes \mP^0_B$, each with equal probability (akin to rolling a fair dice). This interpretation aligns with the random policy introduced in Example~\ref{example: shared random} and one can also formally show
\begin{align*}
    & \pa{\frac{1}{2} \mP^0_A\otimes \mP^1_B + \frac{1}{2}\mP^1_A\otimes\mP^0_B}{(\vs^\prime,\va^\prime\mid \vs,\va)}\\
    =& P_A(s_A^\prime\mid s_A,a_A) P_B(s_B^\prime\mid s_B,a_B) \pa{\frac{1}{2}\pi_0(a_A^\prime\mid s_A^\prime)\pi_1(a_B^\prime\mid s_B^\prime) + \frac{1}{2} \pi_1(a_A^\prime\mid s_A^\prime)\pi_0(a_B^\prime\mid s_B^\prime)}\\
    =& P_A(s_A^\prime\mid s_A,a_A) P_B(s_B^\prime\mid s_B,a_B) \pi(\va^\prime\mid\vs^\prime)=\mP^\pi_{AB}(\vs^\prime,\va^\prime\mid \vs,\va)\,,
\end{align*}
where $\pi_0(a\mid s)=\mathbf{1}\brk{a=0}$ and $\pi_1(a\mid s)=\mathbf{1}\brk{a=1}$. This example thus clearly demonstrates a \emph{coupled} system can still be \emph{separable}. As a result, exact Q-value decomposition holds and applying Meta Algorithm~\ref{alg: meta} will give us local Q-values for unbiased policy evaluation.

Finally, this policy is named \emph{Shared Randomness}, a concept that also has a direct parallel in quantum physics. Shared randomness plays a crucial role in characterizing the fundamental distinction between classical correlation and quantum entanglement in the quantum world, e.g. the famous Einstein-Podolsky-Rosen (EPR) paradox. When it comes to the Markov world, it delineates the difference between coupling and Markov entanglement.

\subsection{Proof of Sufficiency}

Theorem~\ref{thm: mixed_state} admits a simple proof based on the several basic properties of tensor product. First of all, given $\PAB = \sum_{j=1}^K x_j\mP_A^{(j)}\otimes \mP_B^{(j)}$, we have 
    \begin{align*}
P^\pi_{AB}\pa{s_A^\prime,s_B^\prime,a_A^\prime,a_B^\prime\mid s_A,s_B,a_A,a_B} & = \sum_{j=1}^K x_jP_A^{(j)}(s_A^\prime,a_A^\prime\mid s_A,a_A) P_B^{(j)}(s_B^\prime,s_B^\prime\mid s_B,a_B)\,.
    \end{align*}
    Recall $\mP_A^\pi$ in \Eqref{eq: PA}, it's evident that
    \begin{align*}
        P^\pi_A(s_{A}^\prime,a_A^\prime\mid s_A,a_A)
        &= \sum_{s_B^\prime,a_B^\prime} \sum_{s_B,a_B}\sum_{j=1}^K x_jP_A^{(j)}(s_A^\prime,a_A^\prime\mid s_A,a_A) P_B^{(j)}(s_B^\prime,s_B^\prime\mid s_B,a_B) \mu^\pi_{AB}(s_B,a_B\mid s_A,a_A)\\
        &= \sum_{j=1}^K x_jP_A^{(j)}(s_A^\prime,a_A^\prime\mid s_A,a_A) \sum_{s_B,a_B} \mu^\pi_{AB}(s_B,a_B\mid s_A,a_A) \sum_{s_B^\prime,a_B^\prime}P_B^{(j)}(s_B^\prime,s_B^\prime\mid s_B,a_B)\\
        &= \sum_{j=1}^K x_iP_A^{(j)}(s_A^\prime,a_A^\prime\mid s_A,a_A)\,,
    \end{align*}
    where the second last equation holds by rearranging the summation. This leads to $\PA = \sum_{i=1}^K x_i\mP_A^{(i)}$. It remains to show \Eqref{eq: Q-decomp}, and notice that 
    \begin{align*}
        \pa{\mI-\gamma\PAB}^{-1} \pa{\vr_A\otimes \ve} 
        &= \sum_{t=0}^\infty \gamma^t\pa{\sum_{j=1}^K x_j\mP_A^{(j)}\otimes \mP_B^{(j)}}^t \pa{\vr_A\otimes \ve}\\
        &\overset{(i)}{=} \sum_{t=0}^\infty \gamma^t\pa{\pa{\sum_{j=1}^K x_j\mP_A^{(j)}}^t \vr_A}\otimes \ve\\
        &= \pa{\pa{\mI-\gamma\PA}^{-1} \vr_A}\otimes \ve= Q^\pi_{A}\otimes \ve\,,
    \end{align*}
    where we refer to $(i)$ as an ``absorbing" technique based on the bilinearity and mixed-product property of tensor product\footnote{We introduce several basic properties of tensor product in Appendix~\ref{app: linaer algebra}. }. Specifically, since $\mP \ve = \ve$ for any transition matrix $\mP$, we have for any $t$, 
    \begin{align*}
        &\pa{\sum_{j=1}^K x_j\mP_A^{(j)}\otimes \mP_B^{(j)}}^t \pa{\vr_A\otimes \ve} \\
        =& \pa{\sum_{j=1}^K x_j\mP_A^{(j)}\otimes \mP_B^{(j)}}^{t-1}\pa{\sum_{j=1}^K x_j \pa{ \mP_A^{(j)}\vr_A}\otimes\pa{\mP_B^{(j)}\ve}}\\
        =& \pa{\sum_{j=1}^K x_j\mP_A^{(j)}\otimes \mP_B^{(j)}}^{t-1}\pa{\sum_{j=1}^K x_j  \mP_A^{(j)}\vr_A}\otimes\ve\\
        =&\ldots = \pa{\pa{\sum_{j=1}^K x_j\mP_A^{(j)}}^t \vr_A}\otimes \ve\,.
    \end{align*}
    Similar results can be derived for $\PB$ such that $\pa{\mI-\gamma\PAB}^{-1} \pa{\ve\otimes\vr_B} = \ve\otimes Q^\pi_{B} $.  Finally, combining the above results, we have 
    \begin{align*}
        Q_{AB}^\pi  = \pa{\mI-\gamma\PAB}^{-1} \vr_{AB} =  \pa{\mI-\gamma\PAB}^{-1}\pa{\vr_A\otimes \ve+\ve \otimes \vr_B} = Q^\pi_{A}\otimes \ve + \ve\otimes Q^\pi_{B}\,.
    \end{align*}

\subsection{Comparisons with Quantum Entanglement}\label{sec: neg_coeff}
It turns out that our Markov entanglement condition serves as a mathematical counterpart of quantum entanglement in quantum physics. We provide the formal definition of the latter for comparison..  
\begin{definition}[Two-party Quantum Entanglement]
 Consider a two-party quantum system composed of two subsystems $A$ and $B$. The joint state $\rho_{A B}$ is \textbf{separable} if there exists $K\in\mathbb Z^+$, a probability measure $\left\{x_j\right\}_{j\in[K]}$, and density matrices $\left\{\rho^{(j)}_A,\rho^{(j)}_B\right\}_{j\in[K]}$ such that
\[
\rho_{AB}=\sum_{j=1}^K x_j \rho^{(j)}_A \otimes \rho^{(j)}_B\,.\]
If there exists no such decomposition, $\rho_{AB}$ is \textbf{entangled}.
\end{definition}
The density matrices are square matrices satisfying certain properties such as positive semi-definiteness and trace normalization, which can be viewed as the counterparts of transition matrices in the Markov world. Despite the similarities in mathematical form, quantum entanglement imposes an additional constraint requiring $\left\{x_j\right\}_{j\in[K]}$ to be a probability measure, i.e. $\vx\geq 0$. In contrast, our Markov entanglement defined in Definition~\ref{def: two-agent ME} permits general linear coefficients $\left\{x_j\right\}_{j\in[K]}$ as long as $\sum_{j=1}^kx_j=1$. This distinction raises the important question of whether negative coefficients are indeed necessary in characterizing Markov entanglement.

To start with, we introduce the set of all separable transition matrices
\[\gP_{\textrm{SEP}}=\left\{ \mP\geq 0 \;\middle|\; \mP=\sum_{j=1}^Kx_j\mP_A^{(j)}\otimes \mP_B^{(j)} \;,\; \sum_{j=1}^Kx_j=1  \right\}\,,\]
where $K\in\mathbb Z^+$ and $\left\{\mP^{(j)}_A,\mP^{(j)}_B\right\}_{j\in[K]}$ are transition matrices. $\mP\geq0$ calls for every element of $\SEP$ to be a valid transition matrix. It's clear that a transition matrix $\mP^\pi_{AB}$ is separable if and only if $\mP^\pi_{AB}\in\gP_{\textrm{SEP}}$. On the other hand, a direct analogy of quantum entanglement gives us the following set that further requires non-negative coefficients,
\[\gP_{\textrm{SEP}}^+=\left\{ \mP\geq 0 \;\middle|\; \mP=\sum_{j=1}^Kx_j\mP_A^{(j)}\otimes \mP_B^{(j)} \;,\; \sum_{j=1}^Kx_j=1 \;,\;\vx\geq0 \right\}\,.\]
Interestingly, it turns out $\gP_{\textrm{SEP}}^+ \subsetneq\gP_{\textrm{SEP}}$. In other words, there exist separable two-agent MDPs that can only be represented by linear combinations but not convex combinations of independent subsystems, as we demonstrate in Appendix~\ref{app: neg_coeff}. This result justifies the necessity of negative coefficients in $\vx$ and highlights a structural difference between Markov entanglement and quantum entanglement.

\section{Necessary Condition for the Exact Value Decomposition}
In this section, we investigate whether Markov entanglement is necessary for the exact Q-value decomposition. The answer is in general no, since one can construct trivial counterexamples such as $\vr_A=\vr_B=\vzero$ or $\gamma =0$, where the decomposition trivially holds. These trivial examples highlight the impact of specific reward or $\gamma$ on the value decomposition. On the other hand, we focus on a stronger and more general concept of the exact value decomposition that holds under any reward kernel given $\gamma>0$. Formally, we present the following theorem. 
\begin{theorem}\label{thm: necessary_condition}
    Consider a two-agent Markov MDP $\gM_{AB}$ with discount factor $\gamma >0$ and $\pi\colon \gS \to \Delta(\gA)$. Suppose there exists local functions $Q_i\colon \vr_i\to\R^{|S||A|}$ for $i\in \{A,B\}$ such that $Q_{AB}^\pi = Q_A(\vr_A)\otimes \ve + \ve\otimes Q_{B}(\vr_B)$ holds for any pair of reward $\vr_A,\vr_B$,
    then $A,B$ must be separable. 
\end{theorem}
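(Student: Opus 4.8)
The plan is to show that the ``holds for every reward'' hypothesis forces $\PAB$ to preserve two distinguished subspaces, and then to prove that the operators preserving both are \emph{exactly} the linear span of tensor products of transition matrices; separability then falls out immediately because the paper's definition of $\SEP$ permits signed coefficients.

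Write $m:=|S||A|$ and recall $Q^\pi_{AB}=(\mI-\gamma\PAB)^{-1}\vr_{AB}$ is linear in $(\vr_A,\vr_B)$. First I would set $\vr_B=\vzero$ and vary $\vr_A$: the hypothesis gives $(\mI-\gamma\PAB)^{-1}(\vr_A\otimes\ve)=Q_A(\vr_A)\otimes\ve+\ve\otimes Q_B(\vzero)$. Evaluating at $\vr_A=\vzero$ (the left side vanishes) forces $Q_B(\vzero)=c\,\ve$ for a scalar $c$, so the right side collapses to $L_A(\vr_A)\otimes\ve$ with $L_A(\vr_A):=Q_A(\vr_A)+c\,\ve$ linear in $\vr_A$. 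Since $(\mI-\gamma\PAB)^{-1}$ and $\vu\mapsto\vu\otimes\ve$ are injective, $L_A$ is an invertible linear map on $\R^{m}$. Multiplying $(\mI-\gamma\PAB)^{-1}(\vr_A\otimes\ve)=L_A(\vr_A)\otimes\ve$ through by $(\mI-\gamma\PAB)$ and dividing by $\gamma>0$ yields, for every $\vu\in\R^m$, the clean relation $\PAB(\vu\otimes\ve)=(M_A\vu)\otimes\ve$ with $M_A:=\tfrac1\gamma(\mI-L_A^{-1})$; symmetrically (setting $\vr_A=\vzero$) $\PAB(\ve\otimes\vv)=\ve\otimes(M_B\vv)$. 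Thus $\PAB$ maps $\gW_A:=\{\vu\otimes\ve\}$ into itself and $\gW_B:=\{\ve\otimes\vv\}$ into itself, and this is precisely where $\gamma>0$ is used.

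The crux is a characterization of the relevant span. Let $\gH:=\{N\in\R^{m\times m}:N\ve\in\R\ve\}$ be the space of matrices with all row sums equal. Since the stochastic matrices have affine hull $\{N:N\ve=\ve\}$, their linear span is exactly $\gH$, so $\mathrm{span}\{\mP_A\otimes\mP_B\}=\gH\otimes\gH$. I would then prove that the linear maps $M$ on $\R^{m}\otimes\R^{m}$ with $M\gW_A\subseteq\gW_A$ and $M\gW_B\subseteq\gW_B$ are precisely $\gH\otimes\gH$. For this, note that the maps preserving $\gW_A$ form exactly $\R^{m\times m}\otimes\gH$ and those preserving $\gW_B$ form exactly $\gH\otimes\R^{m\times m}$: each ``$\supseteq$'' is a one-line check (if $N'\ve\in\R\ve$ then $N\vu\otimes N'\ve$ stays in $\gW_A$), and equality follows by matching dimensions, both sides having dimension $m^4-m^3+m^2$. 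Intersecting via the standard tensor identity $(\gH\otimes\R^{m\times m})\cap(\R^{m\times m}\otimes\gH)=\gH\otimes\gH$ gives the claim. Hence $\PAB\in\gH\otimes\gH=\mathrm{span}\{\mP_A\otimes\mP_B\}$, i.e. $\PAB=\sum_{j}c_j\,\mP^{(j)}_A\otimes\mP^{(j)}_B$ for transition matrices $\mP^{(j)}_A,\mP^{(j)}_B$ and reals $c_j$.

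To finish, apply this identity to $\ve\otimes\ve$ and use $\mP\ve=\ve$ for every transition matrix: $\ve\otimes\ve=\PAB(\ve\otimes\ve)=\big(\sum_j c_j\big)\ve\otimes\ve$, so $\sum_j c_j=1$; and $\PAB\ge\vzero$ since it is a genuine transition matrix. All requirements in the definition of $\SEP$ are thus met, so $\PAB$ is separable. I expect the main obstacle to be the reverse inclusion of the span lemma---that \emph{every} operator preserving $\gW_A$ and $\gW_B$ lies in $\gH\otimes\gH$---where the dimension bookkeeping and the tensor intersection identity must be handled carefully; by contrast the reduction in the second paragraph is routine resolvent algebra, and the final step is immediate precisely because the paper allows $\sum_j c_j=1$ with signed $c_j$ (so no positivity of coefficients must be recovered).
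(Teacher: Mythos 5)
Your proposal is correct, and it reaches the conclusion by a genuinely different route than the paper's Appendix~\ref{app: proof_thm_2}. The paper argues by contradiction with two technical devices: an explicit construction of the orthogonal complement $\Omega^\prime$ of $\mathrm{span}\{\mP_A\otimes\mP_B\}$, from which an SVD extracts rank-one witnesses used to build adversarial rewards $\vr_A=\vzero$, $\vr_B=\pm\vv$; and a separate lemma, proved via a complex eigenvalue-modulus estimate showing $\rho\pa{\mI-(1-\gamma)(\mI-\gamma\PAB)^{-1}}<1$, establishing that $\PAB$ is separable iff its normalized resolvent is, which is needed to invert the Neumann series. You bypass both. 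By deriving linearity and invertibility of $L_A$ from linearity of the resolvent in the reward plus injectivity of $\vu\mapsto\vu\otimes\ve$ (this plays the same role as the paper's $\pm\vv$ trick in handling arbitrary, not a priori linear, local functions $Q_i$), you invert the relation $(\mI-\gamma\PAB)^{-1}(\vr_A\otimes\ve)=L_A(\vr_A)\otimes\ve$ purely algebraically and obtain the invariances $\PAB\gW_A\subseteq\gW_A$ and $\PAB\gW_B\subseteq\gW_B$ with no spectral analysis at all. Your span lemma checks out: the span of stochastic matrices is indeed $\gH=\{N:N\ve\in\R\ve\}$ of dimension $m^2-m+1$ (matching the paper's dimension count for $\Omega_P$); the preservers of $\gW_A$ have dimension $m^4-m(m^2-m)=m^4-m^3+m^2=\dim\pa{\R^{m\times m}\otimes\gH}$, so your one-line inclusion upgrades to equality; and the intersection identity $(\gH\otimes\R^{m\times m})\cap(\R^{m\times m}\otimes\gH)=\gH\otimes\gH$ follows from the standard complement-splitting argument. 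The normalization $\sum_j c_j=1$ via $\PAB(\ve\otimes\ve)=\ve\otimes\ve$ then meets Definition~\ref{def: two-agent ME} exactly because signed coefficients are permitted. As for what each approach buys: yours is more elementary and self-contained (no orthogonal-complement bookkeeping, no SVD, no spectral radius computation) and isolates a sharper structural fact---universal decomposability is equivalent to $\PAB$ leaving both constant-slice subspaces invariant, which pins $\PAB$ inside $\gH\otimes\gH$---whereas the paper's route produces two reusable by-products it explicitly highlights, namely the explicit description of $\SEP^\perp$ and the resolvent-separability equivalence of Lemma~\ref{lem: connection to inverse}.
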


Combined with Theorem~\ref{thm: mixed_state}, we conclude Markov entanglement serves as a sufficient and necessary condition for the exact value decomposition. We also emphasize that Theorem~\ref{thm: necessary_condition} considers general local functions $Q_i$. This generality accommodates all methods for fitting local $Q_i$, such as deep neural networks, provided that the training relies solely on the local observations of agent $i$. Nevertheless, Theorem~\ref{thm: mixed_state} guarantees that if $Q^\pi_{AB}$ admits the exact value decomposition under arbitrary reward, we can obtain corresponding local Q-values through Meta Algorithm~\ref{alg: meta} using local TD Learning.


There exist other possible ways for value decomposition. For example, \cite{peter18value, dou2022understanding} consider $Q^\pi_{AB}(\vs,\va)=L_A(s_A,a_A,\vr_{AB})+L_B(s_B,a_B,\vr_{AB})$ where $L_A,L_B$ are learned jointly via minimizing the global Bellman error\footnote{In Appendix~\ref{app: decomp_general}, we provide an example of entangled MDP that allows for such decomposition where $L_A$ depends on both $\vr_A$ and $\vr_B$. }; \cite{rashid2020monotonic, mahajan2019maven, son19qtran, wang2020qplex} consider general monotonic operations beyond additive decompositions. These methods introduce possibly richer representations at the cost of more sophisticated implementations and less interpretability, which is beyond the scope of this paper.

\subsection{Proof Sketch of Necessity}

Our proof of necessity builds on several novel techniques. We provide an overview here and the full proof is delayed to Appendix~\ref{app: proof_thm_2}.

\paragraph{Step 1: Understanding the Orthogonal Complement.} If a transition matrix is entangled, it will have non-zero component in the orthogonal complement of $\SEP$, which we construct as
\[\SEP^\perp=\left\{\sum_{j=1}^{|S||A|-1}\pa{\varepsilon_j \ve^\top}\otimes \mW^1_j + \sum_{j=1}^{|S||A|-1}\mW^2_j\otimes\pa{\varepsilon_j \ve^\top} \;\middle|\; W^1_{1:j},W^2_{1:j}\in\R^{|S||A|\times |S||A|}\right\}\,,\]
where $\varepsilon_j=(1,0,\ldots,0,-1,0,\ldots,0)^\top$ with the first element $1$ and $(j+1)$-th element $-1$. 

Then, we study an intermediate transition matrix $(1-\gamma) (\mI-\gamma\PAB)^{-1}$. If it's entangled, there will be non-zero component $\mY\in \SEP^\perp \ne 0$ such that $\Tr(\mY^\top(\mI-\gamma \mP^\pi_{AB})^{-1})\ne 0$. We then apply singular value decomposition to $\mY$. It follows there exists some $j$ and $\vu,\vv\in\R^{|S||A|}$ such that either $\Tr(\pa{ \ve\varepsilon_j^\top}\otimes \pa{\vv\vu^\top} (\mI-\gamma \mP^\pi_{AB})^{-1})\ne0$ or $\Tr( \pa{\vv\vu^\top}\otimes \pa{\ve\varepsilon_j^\top}(\mI-\gamma \mP^\pi_{AB})^{-1})\ne0$. In either case, we can construct $\vr_A,\vr_B$ based on $\vu,\vv$ such that $Q^\pi_{AB}$ is not decomposable under this pair of rewards. This forms a contradiction and we conclude 
\[\textrm{decomposable }Q^\pi_{AB}\Longrightarrow\textrm{ separable }(1-\gamma) (\mI-\gamma\PAB)^{-1}\,.\]

\paragraph{Step 2: Connecting to ``Inverse".}
Finally, we complete the proof via connecting $(1-\gamma) (\mI-\gamma \PAB)^{-1}$ with $\PAB$. Specifically, we prove the following lemma
\[
\textrm{separable }(1-\gamma) (\mI-\gamma\PAB)^{-1}\Longleftrightarrow\textrm{ separable } \PAB\,.\]
The $\Longleftarrow$ side is straightforward since $(\mI-\gamma\PAB)^{-1}$ is the Neumann series of $\gamma\PAB$. For the converse $\Longrightarrow$, we seek to invert this Neumann series. This is achieved by a careful analysis of the operator norm of $\mI-(1-\gamma) (\mI-\gamma\PAB)^{-1}$.

%

\section{Value Decomposition Error in General Two-agent MDPs}


In general, the system transition $\PAB$ can be arbitrarily complex, such that the agents are entangled with each other, i.e. there exists no such decomposition of the form  $\mP^\pi_{AB}=\sum_{j=1}^Kx_j\mP^{(j)}_A\otimes\mP^{(j)}_B$. In these scenarios, we investigate when the value decomposition $Q^\pi_A\otimes \ve + \ve \otimes Q^\pi_B$ is an effective approximation of $Q^\pi_{AB}$. Intuitively, more separable systems should exhibit smaller decomposition errors. When the system is completely separable, we recover exact value decomposition as in Theorem~\ref{thm: mixed_state}. To formalize and quantify this intuition, we proposed the measure of Markov entanglement for general two-agent MDPs in the introduction,
\begin{definition}[Measure of Two-agent Markov Entanglement]
Consider a two-agent MDP $\gM_{AB}$ and policy $\pi$. Its measure of Markov entanglement is defined as
\begin{equation}\label{eq: two agent measure}
    E(\PAB)\coloneqq\min_{\mP\in\gP_{\rm{SEP}}} d(\PAB, \mP)\,,
\end{equation}
where $\SEP$ is the set of all separable transition matrices and $d(\cdot,\cdot)$ is some distance measure.
\end{definition}
This concept can also find its counterpart in quantum physics,
\[E(\rho_{AB})\coloneqq \min_{\rho\in\rho_{\rm{SEP}}} d(\rho_{AB}, \rho)\,,\]
where $\rho_{\rm{SEP}}$ is the set of all separable quantum states. In quantum physics, various distance measures have been developed for density matrices, including quantum relative entropy, Bures distance, and trace distance (\citealt{nielsen2010quantum}). These different distance measures embody distinct interpretations of the physical world. 

Following this conceptual framework, we examine several distance measures for transition matrices in the subsequent subsections. Ultimately, we will demonstrate a series of theorems that fall into the following framework
\[\Big\lVert Q^\pi_{AB}-\pa{Q^\pi_A\otimes \ve + \ve \otimes Q^\pi_B} \Big\rVert=\gO\Big(E(\PAB)\Big)\,.\]
This result reduces the problem of bounding decomposition error to the analysis of Markov entanglement, providing theorists with a general framework to characterize value decomposition error in two-agent MDPs.

\subsection{Total Variation Distance}
We begin by considering the Total Variation (TV) distance, a widely used metric for transition matrices. It is defined as the maximum total variation distance between any pair of corresponding row transition probability distributions.
\begin{definition}[Total Variation Distance between Transition Matrices]\label{def: TVD}
    The total variation distance between two transition matrices $\mP,\mP^\prime\in\R^{|S||A|\times |S||A|}$ is defined as
    \[\norm{\mP-\mP^\prime}_{\rm{TV}} \coloneqq \max_{(s,a)\in\gS\times\gA} D_{\rm{TV} }(\mP(\cdot,\cdot\mid s,a), \mP^\prime(\cdot,\cdot\mid s,a))\,,\]where $D_{\rm{TV}}$ is the total variation distance between probability measures.
\end{definition}
Equipped with this TV distance measure, we can plug it into \eqref{eq: two agent measure} and obtain the measure of entanglement w.r.t TV distance, given by $E(\PAB)=\min_{\mP\in\gP_{\rm{SEP}}} \norm{\PAB- \mP}_{\textrm{TV}}$. The following theorem connects this measure to the value decomposition error. 
\begin{theorem}\label{thm: two-agent value decomp}
    Consider a two-agent MDP $\gM_{AB}$ and policy $\pi\colon \gS \to \Delta(\gA)$ with the measure of Markov entanglement $E(\PAB)$ w.r.t the total variation distance, it holds
    \[\norm{\mP^\pi_A-\sum_{j=1}^Kx_j\mP^{(j)}_A }_{\rm{TV}}\leq E(\PAB)\,,\qquad \norm{\mP^\pi_B-\sum_{j=1}^Kx_j\mP^{(j)}_B }_{\rm{TV}}\leq E(\PAB)\,,\]
    where $\sum_{j=1}^Kx_j\mP^{(j)}_A\otimes \mP^{(j)}_B$ is an optimal solution of \eqref{eq: two agent measure} under total variation distance. Furthermore, the decomposition error is entry-wise bounded by the measure of Markov entanglement,
    \[\Big\lVert Q^\pi_{AB}-\pa{Q^\pi_A\otimes \ve + \ve \otimes Q^\pi_B} \Big\rVert_\infty \leq \frac{4\gamma E(\PAB)(r_{\max}^A+r_{\max}^B)}{(1-\gamma)^2}\,.\]
\end{theorem}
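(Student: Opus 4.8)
The plan is to compare $\PAB$ against its optimal separable surrogate and to propagate every perturbation through the resolvent $(\mI-\gamma\,\cdot\,)^{-1}$. Let $\mP^\star=\sum_{j=1}^K x_j\mP_A^{(j)}\otimes\mP_B^{(j)}\in\SEP$ be an optimizer of \eqref{eq: two agent measure} under total variation, abbreviate $E\coloneqq E(\PAB)=\norm{\PAB-\mP^\star}_{\mathrm{TV}}$, and set $\widehat{\mP}_A\coloneqq\sum_{j}x_j\mP_A^{(j)}$, $\widehat{\mP}_B\coloneqq\sum_j x_j\mP_B^{(j)}$, together with $\widehat{Q}_A\coloneqq(\mI-\gamma\widehat{\mP}_A)^{-1}\vr_A$, $\widehat{Q}_B\coloneqq(\mI-\gamma\widehat{\mP}_B)^{-1}\vr_B$, and $Q^\star_{AB}\coloneqq(\mI-\gamma\mP^\star)^{-1}\vr_{AB}$. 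Since $\mP^\star$ is separable, \cref{thm: mixed_state} gives the \emph{exact} decomposition $Q^\star_{AB}=\widehat{Q}_A\otimes\ve+\ve\otimes\widehat{Q}_B$. I would then split the target error by the triangle inequality,
\[
Q^\pi_{AB}-\pa{Q^\pi_A\otimes\ve+\ve\otimes Q^\pi_B}=\underbrace{\pa{Q^\pi_{AB}-Q^\star_{AB}}}_{\text{global perturbation}}+\underbrace{\pa{\widehat{Q}_A-Q^\pi_A}\otimes\ve+\ve\otimes\pa{\widehat{Q}_B-Q^\pi_B}}_{\text{local perturbation}},
\]
where the cancellation uses the exact decomposition of $Q^\star_{AB}$, and bound the two pieces separately.

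For the marginal bound (the first display of the theorem), the key observation is that the map of \eqref{eq: PA} sending a joint transition matrix to its $A$-marginal is \emph{linear} and a \emph{non-expansion} in TV distance. Applied to $\mP^\star$ it reproduces $\widehat{\mP}_A$ exactly: summing $\mP_B^{(j)}$ over $(s_B',a_B')$ gives $1$ and summing the conditional occupancy $\mu^\pi_{AB}(s_B,a_B\mid s_A,a_A)$ over $(s_B,a_B)$ gives $1$, by the same row-sum computation as in the proof of \cref{thm: mixed_state}; crucially this holds \emph{regardless} of which occupancy weights are plugged in, so the occupancy of the true (entangled) system still marginalizes $\mP^\star$ to $\widehat{\mP}_A$. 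Hence $\mP^\pi_A-\widehat{\mP}_A$ is precisely the image of $\PAB-\mP^\star$ under this linear map. The non-expansion step then observes that each marginalized row is a convex combination, with weights $\mu^\pi_{AB}(\cdot\mid s_A,a_A)$, of partial marginals of rows of $\PAB-\mP^\star$, each of $\ell_1$-mass at most $2E$; since marginalization does not increase $\ell_1$-mass and a convex combination preserves the bound, we get $\norm{\mP^\pi_A-\widehat{\mP}_A}_{\mathrm{TV}}\le E$, and symmetrically for $B$. I expect this step --- the non-expansion property combined with the fact that the ``wrong'' occupancy still marginalizes $\mP^\star$ correctly --- to be the main obstacle.

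For the value bound I would invoke the resolvent (simulation-lemma) identity: for row-stochastic $\mP,\mP'$ and $V=(\mI-\gamma\mP)^{-1}\vr$, $V'=(\mI-\gamma\mP')^{-1}\vr$, one has $V-V'=\gamma(\mI-\gamma\mP)^{-1}(\mP-\mP')V'$, so that $\norm{V-V'}_\infty\le\frac{\gamma}{1-\gamma}\,\norm{(\mP-\mP')V'}_\infty\le\frac{2\gamma}{1-\gamma}\,\norm{\mP-\mP'}_{\mathrm{TV}}\norm{V'}_\infty$, using $\norm{(\mI-\gamma\mP)^{-1}}_\infty\le(1-\gamma)^{-1}$ and, for each zero-sum row $\delta$, $\abs{\delta^\top V'}\le\norm{\delta}_1\norm{V'}_\infty\le 2\norm{\mP-\mP'}_{\mathrm{TV}}\norm{V'}_\infty$. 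Applying this to the global term with $\norm{Q^\star_{AB}}_\infty\le (r_{\max}^A+r_{\max}^B)/(1-\gamma)$ gives $\norm{Q^\pi_{AB}-Q^\star_{AB}}_\infty\le \frac{2\gamma E(r_{\max}^A+r_{\max}^B)}{(1-\gamma)^2}$. Applying it to each local term with the Step-1 bounds and $\norm{Q^\pi_A}_\infty\le r_{\max}^A/(1-\gamma)$ (likewise for $B$), and using that tensoring with $\ve$ preserves the $\infty$-norm so the local perturbation has $\infty$-norm at most $\norm{\widehat{Q}_A-Q^\pi_A}_\infty+\norm{\widehat{Q}_B-Q^\pi_B}_\infty$, yields another $\frac{2\gamma E(r_{\max}^A+r_{\max}^B)}{(1-\gamma)^2}$. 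Summing the two contributions gives exactly $\frac{4\gamma E(\PAB)(r_{\max}^A+r_{\max}^B)}{(1-\gamma)^2}$, as claimed; the factor $4$ is the product of the global and local halves, each carrying the Hölder factor $2$.
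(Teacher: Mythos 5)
Your proposal is correct and rests on essentially the same machinery as the paper's proof: the same marginalization/non-expansion argument for the first display (including the key observation that the true system's occupancy weights still marginalize the separable surrogate to $\sum_j x_j\mP_A^{(j)}$, which also guarantees $\sum_j x_j\mP_A^{(j)}$ is a genuine transition matrix so the resolvent bound applies), the same ``absorbing'' technique, and the same resolvent perturbation identity (Lemma~\ref{lem: matrix inverse}), with matching constants. The only difference is organizational: the paper proves the sharper agent-wise bound of Theorem~\ref{thm: two-agent atv} by splitting the error per reward component and inserting the intermediate term $\pa{\mI-\gamma\mP_A\otimes\mP_B}^{-1}(\vr_A\otimes\ve)$, then obtains this theorem as an immediate corollary via $E_i(\PAB)\leq E(\PAB)$, whereas you prove the TV version directly with one joint-level global term $Q^\pi_{AB}-Q^\star_{AB}$ plus two local terms---which is perfectly valid here, but would not by itself yield the ATV refinement, since your global term measures the perturbation against arbitrary vectors rather than vectors of the form $\vx\otimes\ve$.
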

This theorem offers a sharp characterization of the relationship between Q-value decomposition error and the measure of Markov entanglement w.r.t TV distance. First, Theorem~\ref{thm: two-agent value decomp} extends Theorem~\ref{thm: mixed_state}. Notice that $E(\PAB)=0$ is equivalent to $\PAB$ being separable. We thus recover the condition of Theorem~\ref{thm: mixed_state} and obtain the exact decomposition as shown in \eqref{eq: Q-decomp}. Furthermore, when the system is entangled, we show that the local transitions $\PA,\PB$ learned by Meta Algorithm~\ref{alg: meta} introduce a bias that can be bounded by $E(\PAB)$ in terms of TV distance. Finally, we derive an entry-wise (i.e. $\ell_\infty$-norm) bound on the value decomposition error, ensuring that the error at each state-action pair is controlled by the TV distance up to a constant factor.


\subsection{Agent-wise Distance}
While the total variation distance in Definition~\ref{def: TVD} serves as a simple and intuitive measure for transition matrices, we demonstrate that a more refined distance measure can be established for multi-agent MDPs. To this end, we introduce the following distance measure.

\begin{definition}[Agent-wise Total Variation Distance]\label{def: ATVD}
    The agent-wise total variation distance between two transition matrices $\mP,\mP^\prime\in\R^{|S|^2|A|^2\times |S|^2|A|^2}$ w.r.t agent $A$ is defined as
    \begin{align*}
        \norm{\mP-\mP^\prime}_{\rm{ATV}_A} 
        \coloneqq \max_{(\vs,\va)\in\gS\times\gA}D_{\rm{TV}}\pa{\sum_{s_{B}^\prime,a_{B}^\prime}\mP(\cdot,\cdot\mid \vs,\va), \sum_{s_{B}^\prime,a_{B}^\prime}\mP^\prime(\cdot,\cdot\mid \vs,\va)}\,.
    \end{align*}
\end{definition}
The agent-wise total variation (ATV) distance w.r.t agent $B$ can be defined similarly. Intuitively, compared to TV, ATV focuses on individual agents and measures the difference between their local transitions. We can also plug agent-wise TV into \eqref{eq: two agent measure} and obtain the Markov entanglement measurement w.r.t ATV distance $E_A(\PAB)\coloneqq\min_{\mP\in\gP_{\rm{SEP}}} \norm{\PAB- \mP}_{\textrm{ATV}_A}$. In fact, one can verify
\begin{align}
    E_A(\PAB)&=\min_{\mP\in\gP_{\rm{SEP}}} \norm{\PAB- \sum_{j=1}^Kx_j\mP_A^{(j)}\otimes \mP_B^{(j)}}_{\textrm{ATV}_A}\notag\\
    &= \min_{\mP_A} \max_{(\vs,\va)\in\gS\times\gA}D_{\rm{TV}}\Big(\PAB(\cdot,\cdot\mid \vs,\va), \mP_A(\cdot,\cdot\mid s_A,a_A)\Big)\,,\label{eq: degree of independent}
\end{align}
where the last $D_{\rm{TV}}$ is taking over support $\gS_A\times\gA_A$. \eqref{eq: degree of independent} implies that the optimal solution for \eqref{eq: two agent measure} under ATV distance depends solely on the closest local transition $\mP_A$. Recall that if agent $A$ is independent of the system, its local transition only depends on its local state and action $(s_A,a_A)$ rather than $(\vs,\va)$. Thus, \eqref{eq: degree of independent} essentially quantifies \emph{how closely agent $A$ can be approximated as an independent subsystem}. 

Furthermore, it turns out ATV is a tighter distance compared to the original TV distance, i.e. $\norm{\mP-\mP^\prime}_{\rm{ATV}_A}\leq \norm{\mP-\mP^\prime}_{\rm{TV}}$. This comes from the fact that ATV takes the supremum over an aggregated subset of events compared to TV. As a result, the measure of Markov entanglement w.r.t ATV distance is also smaller than that w.r.t TV distance. 
This enables us to derive a stronger version of Theorem~\ref{thm: two-agent value decomp} using ATV distance.

\begin{theorem}\label{thm: two-agent atv}
    Consider a two-agent MDP $\gM_{AB}$ and policy $\pi\colon \gS \to \Delta(\gA)$ with the measure of Markov entanglement $E_A(\PAB),E_B(\PAB)$ w.r.t the agent-wise total variation distance, it holds
    \[\norm{\mP^\pi_A-\mP_A }_{\rm{TV}}\leq E_A(\PAB)\,,\qquad \norm{\mP^\pi_B-\mP_B }_{\rm{TV}}\leq E_B(\PAB)\,.\]
    where $\mP_A,\mP_B$ are the optimal solutions of \eqref{eq: two agent measure} under agent-wise total variation distance w.r.t agent $A,B$ respectively. Furthermore, the decomposition error is entry-wise bounded by the measure of Markov entanglement,
    \[\Big\lVert Q^\pi_{AB}-\pa{Q^\pi_A\otimes \ve + \ve \otimes Q^\pi_B} \Big\rVert_\infty \leq \frac{4\gamma \pa{E_A(\PAB)r_{\max}^A+E_B(\PAB)r_{\max}^B}}{(1-\gamma)^2}\,.\]
\end{theorem}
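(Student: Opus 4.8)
The plan is to compare $Q^\pi_{AB}$ against an \emph{exactly decomposable surrogate} built from the agent-wise optimal separable matrix. Let $\mP_A,\mP_B$ be the ATV-optimal local transitions from \eqref{eq: degree of independent} and set $\hat{\mP}\coloneqq\mP_A\otimes\mP_B$. Since $\mP_A,\mP_B$ are genuine transition matrices, $\hat{\mP}\in\gP_{\rm{SEP}}$ (with $K=1$), and marginalizing it over agent $B$ (resp.\ $A$) returns exactly $\mP_A$ (resp.\ $\mP_B$) at every $(\vs,\va)$. Applying the absorbing technique of Theorem~\ref{thm: mixed_state} to $\hat{\mP}$ then yields an exact decomposition $\hat{Q}_{AB}\coloneqq(\mI-\gamma\hat{\mP})^{-1}\vr_{AB}=\hat{Q}_A\otimes\ve+\ve\otimes\hat{Q}_B$, with $\hat{Q}_A=(\mI-\gamma\mP_A)^{-1}\vr_A$ and $\hat{Q}_B=(\mI-\gamma\mP_B)^{-1}\vr_B$. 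I would then split the error by the triangle inequality,
\[Q^\pi_{AB}-\pa{Q^\pi_A\otimes\ve+\ve\otimes Q^\pi_B}=\underbrace{\pa{Q^\pi_{AB}-\hat{Q}_{AB}}}_{\textrm{(I)}}+\underbrace{\pa{\hat{Q}_A-Q^\pi_A}\otimes\ve+\ve\otimes\pa{\hat{Q}_B-Q^\pi_B}}_{\textrm{(II)}}\,,\]
and bound each piece by $\tfrac{2\gamma(E_A(\PAB)\,r_{\max}^A+E_B(\PAB)\,r_{\max}^B)}{(1-\gamma)^2}$.

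For the first claim (and as input to (II)) I would establish $\norm{\PA-\mP_A}_{\rm{TV}}\leq E_A(\PAB)$. Writing $\bar{\mP}_A(\cdot\mid\vs,\va)\coloneqq\sum_{s_B',a_B'}\PAB(\cdot\mid\vs,\va)$, the marginalization formula \eqref{eq: PA} expresses each row $\PA(\cdot\mid s_A,a_A)$ as the $\mu^\pi_{AB}(\cdot\mid s_A,a_A)$-weighted average of $\bar{\mP}_A(\cdot\mid\vs,\va)$ over $(s_B,a_B)$. Since $\mP_A(\cdot\mid s_A,a_A)$ is constant in $(s_B,a_B)$, convexity of the TV distance gives $D_{\rm{TV}}(\PA(\cdot\mid s_A,a_A),\mP_A(\cdot\mid s_A,a_A))\le\sum_{s_B,a_B}\mu^\pi_{AB}(s_B,a_B\mid s_A,a_A)\,D_{\rm{TV}}(\bar{\mP}_A(\cdot\mid\vs,\va),\mP_A(\cdot\mid s_A,a_A))\le E_A(\PAB)$ by \eqref{eq: degree of independent} and the weights summing to one; taking the max over $(s_A,a_A)$ proves the claim.

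The main obstacle — and the step that makes ATV, rather than plain TV, the right object — is piece (I). Here I would use the resolvent identity $Q^\pi_{AB}-\hat{Q}_{AB}=\gamma(\mI-\gamma\PAB)^{-1}(\PAB-\hat{\mP})\hat{Q}_{AB}$ and exploit the \emph{additive} structure of $\hat{Q}_{AB}$. The crucial observation is that $\br{\PAB(\hat{Q}_A\otimes\ve)}(\vs,\va)=\sum_{s_A',a_A'}\hat{Q}_A(s_A',a_A')\,\bar{\mP}_A(s_A',a_A'\mid\vs,\va)$ depends on the global transition \emph{only through its agent-$A$ marginal} $\bar{\mP}_A$, whereas $\hat{\mP}(\hat{Q}_A\otimes\ve)=(\mP_A\hat{Q}_A)\otimes\ve$ depends only on $\mP_A$. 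Their difference at $(\vs,\va)$ equals $\sum_{s_A',a_A'}\hat{Q}_A(s_A',a_A')\br{\bar{\mP}_A(\cdot\mid\vs,\va)-\mP_A(\cdot\mid s_A,a_A)}$, whose magnitude is at most $2\norm{\hat{Q}_A}_\infty D_{\rm{TV}}(\bar{\mP}_A(\cdot\mid\vs,\va),\mP_A(\cdot\mid s_A,a_A))\le 2\norm{\hat{Q}_A}_\infty E_A(\PAB)$ after the max. The symmetric term gives the $B$ analogue, so $\norm{(\PAB-\hat{\mP})\hat{Q}_{AB}}_\infty\le\tfrac{2(E_A(\PAB)r_{\max}^A+E_B(\PAB)r_{\max}^B)}{1-\gamma}$ using $\norm{\hat{Q}_A}_\infty\le r_{\max}^A/(1-\gamma)$; combined with $\norm{(\mI-\gamma\PAB)^{-1}}_\infty\le 1/(1-\gamma)$ this bounds (I).

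Finally, for piece (II) I would apply the resolvent identity to the local inverses, $\hat{Q}_A-Q^\pi_A=\gamma(\mI-\gamma\mP_A)^{-1}(\mP_A-\PA)Q^\pi_A$, and bound $\norm{(\mP_A-\PA)Q^\pi_A}_\infty\le 2\norm{Q^\pi_A}_\infty\norm{\mP_A-\PA}_{\rm{TV}}\le 2\tfrac{r_{\max}^A}{1-\gamma}E_A(\PAB)$ via the first claim; since $\norm{\vx\otimes\ve}_\infty=\norm{\vx}_\infty$, (II) is bounded by the sum of the $A$ and $B$ contributions. Adding the bounds for (I) and (II) yields the stated $\tfrac{4\gamma(E_A(\PAB)r_{\max}^A+E_B(\PAB)r_{\max}^B)}{(1-\gamma)^2}$. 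I expect the only delicate points to be verifying that $\hat{\mP}=\mP_A\otimes\mP_B$ simultaneously realizes both ATV-optimal marginals and that the TV--$\ell_1$ factor of $2$ propagates correctly through each resolvent; the remainder is the bilinear/absorbing bookkeeping already developed for Theorem~\ref{thm: mixed_state}.
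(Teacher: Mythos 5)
Your proposal is correct and takes essentially the same route as the paper's proof in Appendix~\ref{app: Proof of atv}: the same separable surrogate $\mP_A\otimes\mP_B$, the same split into a global-resolvent term and a local-resolvent term handled via the perturbation identity of Lemma~\ref{lem: matrix inverse} together with the absorbing technique, and the same factor-of-$2$ ATV-to-$\ell_\infty$ and TV-to-$\ell_\infty$ conversions. The only differences are cosmetic—you apply the split to $\vr_{AB}$ jointly rather than per agent, prove the marginal TV bound by convexity of $D_{\rm{TV}}$ instead of the paper's event-set argument, and derive the bound \eqref{eq: ATV-optimization} by hand; your flagged ``delicate point'' is resolved exactly as you suggest, since the $\rm{ATV}_A$ distance to $\PAB$ depends only on the $A$-marginal $\mP_A$, so one tensor product simultaneously attains both $E_A(\PAB)$ and $E_B(\PAB)$.
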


 The bound derived using ATV is more refined and explicitly takes into account the multi-agent structure. Each agent $i\in\{A,B\}$ contributes to the decomposition error by its entanglement with the system, i.e. $E_i(\PAB)$, weighted by its maximum local reward $r^i_{\max}$. Furthermore, given that ATV is a tighter distance compared to TV, i.e. $E_i(\PAB)\leq E(\PAB)$ for $i\in\{A,B\}$, Theorem~\ref{thm: two-agent atv} provides a tighter bound for the entry-wise decomposition error compared to Theorem~\ref{thm: two-agent value decomp}.

\subsection{Proof Sketch of Theorem~\ref{thm: two-agent atv}}
In this section, we briefly work through the main proof framework and key techniques. The full proof is delayed to Appendix~\ref{app: Proof of atv}. To begin, we can partition the decomposition error related to agent $A$ into two terms,
\begin{align*}
        &\pa{\mI-\gamma \mP^\pi_{AB}}^{-1} (\vr_A\otimes \ve) - \pa{\pa{\mI-\gamma \mP^\pi_A}^{-1}\vr_A} \otimes \ve\\
        = & \pa{\mI-\gamma \mP^\pi_{AB}}^{-1} (\vr_A\otimes \ve)- \pa{\mI-\gamma\mP_A\otimes\mP_B  }^{-1} (\vr_A\otimes \ve)  +\pa{\mI-\gamma\mP_A\otimes\mP_B  }^{-1} (\vr_A\otimes \ve) - \pa{\pa{\mI-\gamma \mP^\pi_A}^{-1}\vr_A} \otimes \ve\\
        \stackrel{(i)}{=} & \underbrace{(\mI-\gamma \mP^\pi_{AB})^{-1} (\vr_A\otimes \ve)- \pa{\mI-\gamma\mP_A\otimes\mP_B  }^{-1} (\vr_A\otimes \ve)}_{(I)}  +\underbrace{\pa{\pa{\mI-\gamma\mP_A  }^{-1} \vr_A}\otimes \ve - \pa{\pa{\mI-\gamma \mP^\pi_A}^{-1}\vr_A} \otimes \ve}_{(II)}\,,
    \end{align*}where $(i)$ follows the same ``absorbing'' technique in the proof of Theorem~\ref{thm: mixed_state}. It therefore suffices to bound $(I)$ and $(II)$ in $\|\cdot\|_\infty$-norm separately.

Notice both $(I)$ and $(II)$ call for bounding the difference between matrix inverses. We thus introduce the following perturbation lemma. 
\begin{lemma}[Lemma 1 in \cite{farias2023correcting}]\label{lem: matrix inverse}
    Let $\mP,\mP^\prime\in\R^{n\times n}$ such that $(\mI-\mP)^{-1}$ and $(\mI-\mP^\prime)^{-1}$ exist. Then it holds
    \[(\mI-\mP^\prime)^{-1}-(\mI-\mP)^{-1} = (\mI-\mP^\prime)^{-1}(\mP^\prime-\mP)(\mI-\mP)^{-1}\,.\]
\end{lemma}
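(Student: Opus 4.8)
The plan is to verify this identity directly, exploiting the fact that the perturbation $\mP^\prime - \mP$ can be rewritten entirely in terms of the two matrices whose inverses appear. The key observation is the purely algebraic one that
\[\mP^\prime - \mP = (\mI - \mP) - (\mI - \mP^\prime)\,,\]
so the difference of the perturbed dynamics equals the difference of the two resolvent arguments. No spectral or analytic input is needed; the only hypothesis used is that both $(\mI-\mP)^{-1}$ and $(\mI-\mP^\prime)^{-1}$ exist.

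First I would substitute the above into the right-hand side and expand by distributivity:
\[(\mI-\mP^\prime)^{-1}(\mP^\prime-\mP)(\mI-\mP)^{-1} = (\mI-\mP^\prime)^{-1}\big[(\mI-\mP)-(\mI-\mP^\prime)\big](\mI-\mP)^{-1}\,.\]
Distributing the outer inverses across the bracket produces two terms. In the first, the adjacent pair $(\mI-\mP)(\mI-\mP)^{-1}$ cancels to $\mI$, leaving $(\mI-\mP^\prime)^{-1}$; in the second, the pair $(\mI-\mP^\prime)^{-1}(\mI-\mP^\prime)$ cancels to $\mI$, leaving $-(\mI-\mP)^{-1}$. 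Summing the two surviving terms gives exactly $(\mI-\mP^\prime)^{-1} - (\mI-\mP)^{-1}$, the left-hand side, completing the proof.

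As an alternative that avoids manipulating inverses inside the bracket, one can sandwich: left-multiply both sides by $(\mI-\mP^\prime)$ and right-multiply by $(\mI-\mP)$. After cancelling the adjacent inverse–matrix pairs, the left-hand side collapses to $(\mI-\mP)-(\mI-\mP^\prime)=\mP^\prime-\mP$ and the right-hand side collapses to $\mP^\prime-\mP$ as well; since both $(\mI-\mP^\prime)$ and $(\mI-\mP)$ are invertible, equality of the sandwiched expressions forces equality of the originals. There is essentially no obstacle here—the identity is formal—so the only point deserving care is bookkeeping of multiplication order, as the matrices need not commute and each inverse must remain on its stated side throughout the cancellation.
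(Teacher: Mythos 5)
Your proposal is correct: both the expansion of $\mP^\prime-\mP=(\mI-\mP)-(\mI-\mP^\prime)$ and the sandwich variant are complete, and the order-of-multiplication bookkeeping is handled properly. The paper itself gives no proof—it imports the identity by citation from \cite{farias2023correcting}—and your direct verification is exactly the standard one-line argument for this resolvent-difference identity, so there is nothing to reconcile.
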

This lemma converts the inverse error to more related perturbation term $\mP^\prime-\mP$. Our last technique connects the $\|\cdot\|_\infty$-norm bound of this perturbation term with TV/ATV distance. Specifically, we can rewrite the TV distance as a constraint optimization problem,
\[\norm{\mP-\mP^\prime}_{\rm{TV}}=\frac{1}{2}\norm{\mP-\mP^\prime}_{\infty}=\frac{1}{2}\sup_{  \|\vx\|_\infty=1  }\|(\mP-\mP^\prime)\vx\|_\infty\,.\]
where $\vx\in\R^{|S|^2|A|^2}$. The first equation comes from the relationship between TV distance and variation distance. The last equation is the definition of $\norm{\cdot}_\infty$. We can also rewrite ATV distance similarly,
\begin{equation}\label{eq: ATV-optimization}
\norm{\mP-\mP^\prime}_{\rm{ATV}_A} =\frac{1}{2} \sup_{\|\vx\|_\infty=1 }\|\pa{\mP-\mP^\prime}(\vx\otimes \ve)\|_\infty\,.
\end{equation}
Note that for ATV, the feasible zone $\vx\in\R^{|S||A|}$. Finally, putting it together, we are able to bound $(I)$ and $(II)$ as well as the decomposition error related to agent $B$, which concludes the proof.

\subsection{Distance Weighted by Stationary Distribution}
In the preceding subsections, we discussed the (agent-wise) total variation distance for transition matrices. These distance measures impose a requirement of a uniformly bounded total variation distance across all state-action pairs, which consequently leads to strong entry-wise error bounds for Q-value decomposition. However, this uniform TV distance bound can sometimes be overly restrictive for general two-agent MDPs.

As an alternative, we aim to trade a weaker error bound for the value decomposition for a less stringent condition on the system transition. To achieve this, a practical choice is to consider an error weighted by the stationary distribution. Formally, we introduce the following norm.
\begin{definition}[$\mu$-norm]
    Given a transition matrix $\mP\in\R^{|\gS||\gA|\times|\gS||\gA|}$ with occupancy measure\footnote{Since $\mu\in\R^{|\gS||\gA|}$ is the stationary distribution of $\mP\in\R^{|\gS||\gA|\times|\gS||\gA|}$, we use ``stationary distribution" and ``occupancy measure" exchangeably when the context is clear.} $\mu\in\R^{|\gS||\gA|}$, for any vector $\vx\in\R^{|\gS||\gA|}$ the $\mu$-norm is defined as
    \begin{equation*}
        \|\vx\|_\mu\coloneqq \sum_{(s,a)\in\gS\times\gA} \mu(s,a)\abs{x(s,a)} = \mu^\top \abs{\vx}\,.
    \end{equation*}
\end{definition}
One can verify that $\mu$-norm satisfies triangle inequality and is a valid norm when $\mu(s,a)>0$ for all $(s,a)$; otherwise $\mu$-norm is a \emph{semi-norm} in general. Equipped with $\mu$-norm, we are interested in the following decomposition error,
\[\Big\lVert Q^\pi_{AB}-\pa{Q^\pi_A\otimes \ve + \ve \otimes Q^\pi_B} \Big\rVert_{\mu^\pi_{AB}}=\sum_{\vs,\va} \muAB(\vs,\va)\Big|Q^\pi_{AB}(\vs,\va)-(Q^\pi_A(s_A,a_A)+Q^\pi_B(s_B,a_B))\Big|\,.\]
 In other words, the state action pair $(\vs,\va)$ with higher occupancy measure $\muAB(\vs,\va)$ contributes more to the overall decomposition error. We note that the error bound in $\mu$-norm is weaker than that in $\ell_\infty$-norm. Nevertheless, a stationary distribution weighted error bound is sufficient in many practical scenarios. Similar ideas are also quite common in policy evaluation literature (\citealt{cai19neural, john96analysis, jalaj21afinite}). 

We then focus on the distance measure for Markov entanglement. To begin, we follow the same idea of $\mu$-norm and define the following distance.
\begin{definition}[$\mu$-weighted Total Variation Distance]Given probability distribution $\mu\in\R^{|S||A|}$, the $\mu$-weighted total variation distance between two transition matrices $\mP,\mP^\prime\in\R^{|S||A|\times |S||A|}$~is 
 \[\norm{\mP-\mP^\prime}_{\mu\rm{-TV}} = \sum_{(s,a)\in\gS\times\gA} \mu(s,a) D_{\rm{TV} }(\mP(\cdot,\cdot\mid s,a), \mP^\prime(\cdot,\cdot\mid s,a)) \,,\]where $D_{\rm{TV}}$ is the total variation distance between probability measures.
\end{definition}
This distance is an intuitive counterpart of the total variation distance in Definition~\ref{def: TVD}, with the maximum operator replaced by a $\mu$-weighted average. It quickly follows that $\norm{\mP-\mP^\prime}_{\mu\rm{-TV}}\leq \norm{\mP-\mP^\prime}_{\rm{TV}}$ and thus the measure of Markov entanglement w.r.t $\mu$-TV is smaller than that w.r.t TV. Analogous to the preceding subsections, we can also define the counterpart of ATV distance.

\begin{definition}[$\mu$-weighted Agent-wise Total Variation Distance]Given probability distribution $\mu\in\R^{|S|^2|A|^2}$, the $\mu$-weighted total variation distance between two transition matrices $\mP,\mP^\prime\in\R^{|S|^2|A|^2\times |S|^2|A|^2}$ w.r.t agent $A$ is defined as 
 \[\norm{\mP-\mP^\prime}_{\mu\rm{-ATV}_A} = \frac{1}{2} \sup_{\|\vx\|_\infty=1}\|\pa{\mP-\mP^\prime}(\vx\otimes \ve)\|_\mu\,. \]
\end{definition}
The $\mu$-weighted ATV distance w.r.t agent $B$ can be defined similarly. We claim that the $\mu$-weighted ATV is also a counterpart of ATV distance in Definition~\ref{def: ATVD}. This follows from the constrained optimization formulation of ATV in~\eqref{eq: ATV-optimization} where $\mu$-ATV substitutes $\mu$-norm for the original $\ell_\infty$-norm. Moreover, we have\[\norm{\mP-\mP^\prime}_{\mu\rm{-ATV}_A}\leq \frac{1}{2}\norm{\mP-\mP^\prime}_{\mu,\infty}\leq \norm{\mP-\mP^\prime}_{\mu\rm{-TV}}\,,\]
where $\norm{\mP-\mP^\prime}_{\mu,\infty}=\sup_{\|\vx\|_\infty=1}\|\pa{\mP-\mP^\prime}\vx\|_\mu$ is the induced matrix norm . This indicates ATV is still a tighter distance compared to TV distance after being weighted by the stationary distribution. We plug in $\mu$-weighted ATV to~\eqref{eq: two agent measure} and obtain the corresponding measure of Markov entanglement $E(\PAB)$ and $E_A(\PAB)$. Similar to ATV in \eqref{eq: degree of independent}, this $\mu$-weighted version of $E_A(\PAB)$ admits the following formulation 
\begin{equation}\label{eq: degree_of_indep_rho}
E_A(\PAB)\leq \min_{\mP_A} \sum_{\vs,\va} \mu^\pi_{AB}(\vs,\va) D_{\rm{TV}}\Big(\PAB(\cdot,\cdot\mid \vs,\va), \mP_A(\cdot,\cdot\mid s_A,a_A)\Big)\,.
\end{equation}
\eqref{eq: degree_of_indep_rho} substitutes the $\mu$-weighted average for the maximum operator in \eqref{eq: degree of independent}. Thus intuitively, $E_A(\PAB)$ measures \emph{how closely agent $A$ can be approximated as an independent subsystem under the stationary distribution}. 

Finally, the following theorem justifies the Q-value decomposition error in $\mu$-norm is controlled by the Markov entanglement measured using $\mu$-weighted ATV distance. 

\begin{theorem}\label{thm: rho-weighted decomp}
    Consider a two-agent MDP $\gM_{AB}$ and policy $\pi\colon \gS \to \Delta(\gA)$ with the measure of Markov entanglement $E_A(\PAB),E_B(\PAB)$ w.r.t the $\mu^\pi_{AB}$-weighted agent-wise total variation distance, it holds for each agent $i\in\{A,B\}$,
    \[\norm{\mP^\pi_i-\mP_i }_{\mu^\pi_i,\infty}\leq 2E_i(\PAB)\,.\]
    where $\mP_i$ are the optimal solutions of \eqref{eq: two agent measure} under $\mu$-weighted agent-wise total variation distance w.r.t agent $i$ respectively and $\mu^\pi_i$ is the stationary distribution of $\mP^\pi_i$. Furthermore, the decomposition error in $\mu_{AB}^\pi$-norm is bounded by the measure of Markov entanglement,
    \[\Big\lVert Q^\pi_{AB}-\pa{Q^\pi_A\otimes \ve + \ve \otimes Q^\pi_B} \Big\rVert_{\mu^\pi_{AB}} \leq \frac{4\gamma \pa{E_A(\PAB)r_{\max}^A+E_B(\PAB)r_{\max}^B}}{(1-\gamma)^2}\,.\]
\end{theorem}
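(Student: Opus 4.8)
The plan is to mirror the two-term decomposition used for Theorem~\ref{thm: two-agent atv}, but to replace every $\ell_\infty$ non-expansion by the corresponding $\mu$-norm non-expansion. The single fact that makes this substitution work is that for any transition matrix $\mP$ with stationary distribution $\mu$, the map $\vx\mapsto\mP\vx$ is non-expansive in $\mu$-norm: $\|\mP\vx\|_\mu=\mu^\top|\mP\vx|\le\mu^\top\mP|\vx|=(\mP^\top\mu)^\top|\vx|=\|\vx\|_\mu$, so that $\|(\mI-\gamma\mP)^{-1}\vy\|_\mu\le(1-\gamma)^{-1}\|\vy\|_\mu$ via the Neumann series. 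A second load-bearing fact is that the stationary distribution $\mu^\pi_A$ of the marginalized local chain $\mP^\pi_A$ from \eqref{eq: PA} equals the $A$-marginal of $\muAB$; I would check this by a one-line stationarity computation, and it immediately yields the lift identity $\|\vu\otimes\ve\|_{\mu^\pi_{AB}}=\|\vu\|_{\mu^\pi_A}$ for every $\vu\in\R^{|S||A|}$, together with $\mu^\pi_A(s_A,a_A)\,\muAB(s_B,a_B\mid s_A,a_A)=\muAB(\vs,\va)$.

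First I would prove the local-transition bound $\norm{\mP^\pi_A-\mP_A}_{\mu^\pi_A,\infty}\le 2E_A(\PAB)$. By \eqref{eq: PA}, $\mP^\pi_A(\cdot\mid s_A,a_A)$ is a $\muAB(\cdot\mid s_A,a_A)$-mixture of the $A$-marginals of $\PAB(\cdot\mid\vs,\va)$; since $\mP_A$ does not depend on $(s_B,a_B)$ I can insert it inside this mixture and apply the triangle inequality to bound, for any $\vx$ with $\|\vx\|_\infty=1$, the quantity $\|(\mP^\pi_A-\mP_A)\vx\|_{\mu^\pi_A}$ by $\sum_{\vs,\va}\muAB(\vs,\va)\,\big|\big[(\PAB-\mP_A\otimes\mP_B)(\vx\otimes\ve)\big](\vs,\va)\big|$, which is precisely $\|(\PAB-\mP_A\otimes\mP_B)(\vx\otimes\ve)\|_{\mu^\pi_{AB}}$ (using the marginal identity above). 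Taking the supremum over $\|\vx\|_\infty=1$ and recognizing the right-hand side as twice the $\mu$-weighted agent-wise total variation distance of the optimal approximant $\mP_A\otimes\mP_B$ delivers exactly $2E_A(\PAB)$.

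Next I would bound the agent-$A$ piece of the decomposition error, $(\mI-\gamma\PAB)^{-1}(\vr_A\otimes\ve)-Q^\pi_A\otimes\ve$, by routing it through the optimal separable approximation $\mP_A\otimes\mP_B$ into $(I)+(II)$ exactly as in the proof of Theorem~\ref{thm: two-agent atv}, where the absorbing technique of Theorem~\ref{thm: mixed_state} gives $(\mI-\gamma\mP_A\otimes\mP_B)^{-1}(\vr_A\otimes\ve)=Q_A\otimes\ve$ with $Q_A=(\mI-\gamma\mP_A)^{-1}\vr_A$, so that $(II)=(Q_A-Q^\pi_A)\otimes\ve$. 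For $(I)$ I apply Lemma~\ref{lem: matrix inverse}, use the $\mu^\pi_{AB}$-non-expansion of $(\mI-\gamma\PAB)^{-1}$, and bound the perturbation $\|(\PAB-\mP_A\otimes\mP_B)(Q_A\otimes\ve)\|_{\mu^\pi_{AB}}\le 2\|Q_A\|_\infty E_A(\PAB)$ directly from the definition of the $\mu$-weighted agent-wise distance, with $\|Q_A\|_\infty\le r_{\max}^A/(1-\gamma)$; this yields $\|(I)\|_{\mu^\pi_{AB}}\le 2\gamma r_{\max}^A E_A(\PAB)/(1-\gamma)^2$. For $(II)$ I pass to $\|Q_A-Q^\pi_A\|_{\mu^\pi_A}$ via the lift identity, apply Lemma~\ref{lem: matrix inverse} in the orientation that places $(\mI-\gamma\mP^\pi_A)^{-1}$ on the \emph{left}, and invoke the $\mu^\pi_A$-non-expansion together with the local-transition bound from the previous step, giving the same $2\gamma r_{\max}^A E_A(\PAB)/(1-\gamma)^2$. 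Summing $(I)$ and $(II)$, adding the symmetric agent-$B$ estimate, and applying the triangle inequality in $\mu^\pi_{AB}$-norm produces the stated bound.

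The main obstacle is the bookkeeping of which stationary distribution controls each resolvent. In the $\ell_\infty$ proof this is invisible because every transition matrix is automatically a non-expansion; here the contraction $\|(\mI-\gamma\mP)^{-1}\vy\|_\mu\le(1-\gamma)^{-1}\|\vy\|_\mu$ holds \emph{only} for the $\mu$ that is stationary for that particular $\mP$. Consequently I must (i) orient Lemma~\ref{lem: matrix inverse} in term $(II)$ so that the resolvent of $\mP^\pi_A$—whose stationary distribution is the relevant $\mu^\pi_A$—appears on the left rather than the resolvent of $\mP_A$, and (ii) establish the marginal identity $\mu^\pi_A=\sum_{s_B,a_B}\muAB(\cdot,\cdot,s_B,a_B)$ so that the norms $\mu^\pi_{AB}$ and $\mu^\pi_A$ are compatible across the lift $\vu\mapsto\vu\otimes\ve$. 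Once these two alignment issues are handled, the remainder is a routine adaptation of the agent-wise total variation argument.
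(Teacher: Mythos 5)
Your proposal is correct and follows essentially the same route as the paper's proof in Appendix~G: the marginal-stationarity lemma for $\mu^\pi_A$ with the lift identity $\|\vu\otimes\ve\|_{\mu^\pi_{AB}}=\|\vu\|_{\mu^\pi_A}$, the local-transition bound via the conditional-mixture representation of \eqref{eq: PA}, and the $(I)+(II)$ split with the absorbing technique, Lemma~\ref{lem: matrix inverse}, and the $\mu$-norm non-expansion of resolvents with respect to their own stationary distributions. Your explicit treatment of term $(II)$---orienting Lemma~\ref{lem: matrix inverse} so that $(\mI-\gamma\mP^\pi_A)^{-1}$ sits on the left where the $\mu^\pi_A$-non-expansion applies---correctly fills in a step the paper only gestures at with ``similar results to $(I)$ hold.''
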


This theorem serves as a $\muAB$-weighted counterpart to Theorem~\ref{thm: two-agent atv}. It measures the decomposition error using a weaker $\muAB$-norm, while the condition on $\PAB$ is also relaxed, requiring only a weighted average bound as demonstrated in \eqref{eq: degree_of_indep_rho}. These trade-offs are summarized in Table~\ref{table: distance_measure}. In the following sections, we will provide an example of how we can apply Theorem~\ref{thm: rho-weighted decomp} to bound the decomposition error for index policies in $\mu$-norm. 

\begin{table}[h]
    \centering
    \renewcommand{\arraystretch}{1.5}
    \begin{tabular}{c|c}
        \hline
        \textbf{Distance Measures of Markov Entanglement} & \textbf{Norms of Decomposition Error Bound} \\ \hline
        Total variation distance                              & \multirow{2}{*}{$\norm{\cdot}_\infty$}            \\ \cline{1-1}
        Agent-wise total variation distance                              &                                         \\ \hline
        $\mu$-weighted total variation distance                              & \multirow{2}{*}{$\norm{\cdot}_\mu$}            \\ \cline{1-1}
        $\mu$-weighted agent-wise total variation distance                              &                                         \\ \hline
    \end{tabular}\vspace{5pt}
    \caption{Different distance measures of Markov entanglement lead to decomposition error in different norms.}
    \label{table: distance_measure}
\end{table}
Finally, the major framework of the proof parallels that of Theorem~\ref{thm: two-agent atv} and further takes into account several special properties of $\mu$-norm. One noteworthy result is that the local stationary distribution $\mu^\pi_A$ turns out to be the marginal distribution of the global stationary distribution $\mu^\pi_{AB}$. That is, for all $ (s_A,a_A),\,\mu^\pi_A(s_A,a_A)=\sum_{s_B,a_B}\mu^\pi_{AB}(s_A, s_B,a_A,a_B)$. The full proof of Theorem~\ref{thm: rho-weighted decomp} is delayed to Appendix~\ref{app: proof of mu-weight}.

\section{Multi-agent Markov Entanglement}\label{sec: multi-agent results}
In quantum physics, the concept of quantum entanglement of two-party system can be well extended to multi-party system. In this section, we demonstrate a similar extension of two-agent Markov entanglement to multi-agent settings. We begin with the model of multi-agent MDPs.

Consider a standard $N$-agent MDP $\gM_{1:N}(\gS,\gA,\mP,\vr_{1:N},\gamma)$ with joint state space $\gS=\times_{i=1}^N\gS_i$ and joint action space $\gA=\times_{i=1}^N\gA_i$. For simplicity, we assume $|\gS_i|=|\gS|$ and $|\gA_i|=|\gA|$ for each agent $i$.  For agents at global state $\vs=(s_1,s_2,\ldots,s_N)$ with action $\va=(a_1,a_2,\ldots,a_N)$ taken, the system will transit to $\vs^\prime=(s_1^\prime,s_2^\prime,\ldots,s_N^\prime)$ according to transition kernel $\vs^\prime\sim\mP(\cdot\mid \vs,\va)$ and each agent $i\in[N]$ will receive its local reward $r_i(s_i,a_i)$.  The global reward $r_{1:N}$ is defined as the summation of local rewards $r_{1:N}(\vs,\va)\coloneqq \sum_{i=1}^Nr_i(s_i,a_i)$, or in vector form, 
\[\vr_{1:N}\in\R^{|\gS|^N|\gA|^N}\coloneqq \sum_{i=1}^N (\ve\otimes)^{i-1}\vr_i(\otimes \ve)^{N-i}\,.\]
We further assume the local rewards are bounded, i.e. for agent $i\in [N]$, $\abs{r_{i}(s_i,a_i)}\leq r_{\max}^i$ for all $(s_i,a_i)$. Given any global policy $\pi\colon \gS\to\Delta(\gA)$, we denote $\mP^\pi_{1:N}\in\R^{|\gS|^N|\gA|^N\times |\gS|^N|\gA|^N}$ as the transition matrix induced by $\pi$ where $P^\pi_{1:N}\pa{\vs^\prime,\va^\prime\mid \vs,\va}\coloneqq \mP\pa{\vs^\prime\mid \vs,\va}\pi\pa{\va^\prime\mid \vs^\prime}\,.$
Then the global Q-value is defined by Bellman Equation $Q^\pi_{1:N}=(\mI-\gamma \mP_{1:N}^\pi)^{-1}\vr_{1:N}$. The local Q-values follow the similar framework to Meta Algorithm~\ref{alg: meta} where each agent $i\in[N]$ fits $Q^\pi_i$ using its local observations. We then sum up local Q-values to approximate the global Q-value, i.e. 
\[
Q^\pi_{1:N}(\vs,\va)=\sum_{i=1}^NQ_i^\pi(s_i,a_i)\,.\]

To illustrate the extension, we first provide the definition of multi-party quantum entanglement.
\begin{definition}[Multi-party Quantum Entanglement]
 Consider a multi-party quantum system composed of 
$N$ subsystems, indexed by $[N]$. The joint state $\rho_{1:N}$ is \textbf{separable} if there exists $K\in\mathbb Z ^+$, probability distribution $\left\{x_i\right\}_{i\in[K]}$, and density matrices $\left\{\rho^{(j)}_{1:N}\right\}_{j\in[K]}$ such that
\[
\rho_{1:N}=\sum_{j=1}^K x_j \rho^{(j)}_1 \otimes \rho^{(j)}_2 \otimes \cdots \otimes \rho^{(j)}_N\,.\]
If there exists no such decomposition, $\rho_{1:N}$ is called \textbf{entangled}.
\end{definition}
Analogically, we define the Multi-agent Markov Entanglement,

\begin{definition}[Multi-agent Markov Entanglement]
    Consider a $N$-agent Markov system $\gM_{1:N}$ and policy $\pi\colon \gS \to \Delta(\gA)$, the agents are \textbf{separable} under policy $\pi$ if there exists $K\in\mathbb Z ^+$, measure $\{x_j\}_{j\in[K]}$ satisfying $\sum_{j=1}^Kx_j=1$, and transition matrices $\left\{\mP^{(j)}_{1:N}\right\}_{j\in[K]}$ such that 
    \[\mP^\pi_{1:N} = \sum_{j=1}^K x_j\mP_1^{(j)}\otimes \mP_2^{(j)}\otimes\cdots \otimes\mP_N^{(j)} \,.\]
    If there exists no such decomposition, the agents are \textbf{entangled}.
\end{definition}
It readily follows that we can similarly define the measure of multi-agent Markov entanglement as
\begin{equation}\label{eq: Multi-agent Markov Entanglement}
    E(\PN) = \min_{\mP\in\SEP } d(\mP^\pi_{1:N}, \mP)\,,
\end{equation}
where $\SEP$ is set of all separable $N$-agent transition matrices and $d(\cdot,\cdot)$ is some distance measure.

We note that multi-agent Markov entanglement retains the core idea that a separable system can be expressed as \emph{a linear combination of independent subsystems}. Furthermore, it is not surprising that we can derive a similar result for multi-agent MDPs concerning exact value decomposition, analogous to Theorem~\ref{thm: mixed_state}, and general decomposition error in Theorem~\ref{thm: two-agent value decomp}, \ref{thm: two-agent atv}, and~\ref{thm: rho-weighted decomp}. We provide one extension of Theorem~\ref{thm: rho-weighted decomp} below and delay the full results to Appendix~\ref{app: multi-agent}. 
\begin{theorem}\label{thm: multi-agent-rho}
    Consider a $N$-agent MDP $\gM_{1:N}$ and policy $\pi\colon \gS \to \Delta(\gA)$ with the measure of Markov entanglement $E_i(\PN)$ w.r.t the $\mu^\pi_{1:N}$-weighted agent-wise total variation distance, it holds for any agent $i\in[N]$,
    \[\norm{\mP^\pi_i-\mP_i }_{\mu^\pi_{i},\infty}\leq 2E_i(\PN)\,.\]
    where $\mP_i$ is the optimal solution of \eqref{eq: Multi-agent Markov Entanglement} and $\mu_i^\pi$ is the stationary distribution of the projected transition $\mP^\pi_i$. Furthermore, the decomposition error in $\mu^\pi_{1:N}$-norm is bounded by the measure of Markov entanglement,
    \[\norm{ Q^\pi_{1:N}(\vs,\va)-\sum_{i=1}^N Q^\pi_i(s_i,a_i) }_{\mu^\pi_{1:N}} \leq \frac{4\gamma \pa{\sum_{i=1}^NE_i(\PN)r_{\max}^i}}{(1-\gamma)^2}\,.\]
\end{theorem}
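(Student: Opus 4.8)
The plan is to replicate the architecture of the proof of Theorem~\ref{thm: rho-weighted decomp}, now decomposing the global error into $N$ agent-wise contributions and controlling each by the $\mu^\pi_{1:N}$-weighted agent-wise total variation distance. Before anything else, I would isolate the one structural fact on which the whole argument rests: that the stationary distribution $\mu^\pi_i$ of the projected transition $\mP^\pi_i$ is exactly the agent-$i$ marginal of the global stationary distribution, i.e. $\mu^\pi_i(s_i,a_i)=\sum_{s_{-i},a_{-i}}\mu^\pi_{1:N}(\vs,\va)$. This follows by writing $\mP^\pi_i$ as the conditional-occupancy-weighted marginalization of $\PN$ (the $N$-agent analogue of \eqref{eq: PA}), using the identity $\mu^\pi_i(s_i,a_i)\,\mu^\pi_{1:N}(s_{-i},a_{-i}\mid s_i,a_i)=\mu^\pi_{1:N}(\vs,\va)$ together with the stationarity of $\mu^\pi_{1:N}$ under $\PN$, and invoking uniqueness of the stationary distribution in the unichain case.

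For the first claim, $\norm{\mP^\pi_i-\mP_i}_{\mu^\pi_i,\infty}\leq 2E_i(\PN)$, I would fix $\vx$ with $\norm{\vx}_\infty=1$ and express $\pa{\mP^\pi_i-\mP_i}\vx$ at each $(s_i,a_i)$ as a conditional average over the environment $(s_{-i},a_{-i})$ of the gap between the agent-$i$ marginal of $\PN(\cdot\mid\vs,\va)$ and the row $\mP_i(\cdot\mid s_i,a_i)$. Each inner term is bounded by $2\norm{\vx}_\infty D_{\mathrm{TV}}$; weighting by $\mu^\pi_i$ and collapsing the conditional average via the marginal identity turns the weight into $\mu^\pi_{1:N}(\vs,\va)$, so that $\norm{(\mP^\pi_i-\mP_i)\vx}_{\mu^\pi_i}\leq 2\sum_{\vs,\va}\mu^\pi_{1:N}(\vs,\va)D_{\mathrm{TV}}\pa{\cdots}=2E_i(\PN)$ by the formulation \eqref{eq: degree_of_indep_rho}.

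For the decomposition error, I would use linearity of $\vr_{1:N}=\sum_{i=1}^N(\ve\otimes)^{i-1}\vr_i(\otimes\ve)^{N-i}$ to split the global error into a sum of $N$ per-agent errors and bound each in $\mu^\pi_{1:N}$-norm. For agent $i$ I insert a separable transition $\mP^{\mathrm{sep}}$ whose agent-$i$ factor is the optimal $\mP_i$ and split into $(I)$ (global versus separable) and $(II)$ (separable versus projected), exactly as in the proof sketch of Theorem~\ref{thm: two-agent atv}. The ``absorbing'' technique from the proof of Theorem~\ref{thm: mixed_state} collapses the separable resolvent, reducing $(II)$ to $\br{(\mI-\gamma\mP_i)^{-1}-(\mI-\gamma\mP^\pi_i)^{-1}}\vr_i$ placed in slot $i$ and tensored with $\ve$ elsewhere; by the marginal identity its $\mu^\pi_{1:N}$-norm equals the $\mu^\pi_i$-norm of the slot-$i$ vector. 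I then bound both $(I)$ and $(II)$ with the perturbation lemma (Lemma~\ref{lem: matrix inverse}), writing each as a product of two resolvents around the perturbation. In every such factorization: the leftmost resolvent is non-expansive in its own stationary $\mu$-norm, hence of norm at most $1/(1-\gamma)$; the rightmost vector has $\ell_\infty$-norm at most $r_{\max}^i/(1-\gamma)$ after absorbing; and the middle perturbation sends slot-$i$, $\ell_\infty$-bounded vectors into $\mu$-norm with factor $2E_i(\PN)$---for $(I)$ directly by the $\mu$-weighted ATV definition, and for $(II)$ by the first claim just proved. Each agent thus contributes at most $\frac{4\gamma E_i(\PN)r_{\max}^i}{(1-\gamma)^2}$, and the triangle inequality over $i$ gives the stated bound.

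The hard part will be the two facts that make the factors compose: the marginal identity $\mu^\pi_i=\sum_{s_{-i},a_{-i}}\mu^\pi_{1:N}$ and the non-expansiveness of the resolvents in the appropriate stationary $\mu$-norm. The former is what reconciles the two different norms in which $(I)$ and $(II)$ naturally live ($\mu^\pi_{1:N}$ versus $\mu^\pi_i$), and what lets the $\mu$-weighted ATV distance---defined only through slot-$i$ test vectors $\vx\otimes\ve$---be matched against the $\ell_\infty$-bounded rightmost vector produced by absorbing. Once these are in place, the remainder is bookkeeping that parallels the two-agent argument, with the only genuinely new element being the $N$-fold tensor absorbing that leaves $\vr_i$ in its own slot.
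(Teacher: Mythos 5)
Your proposal is correct and takes essentially the same route as the paper: the paper proves the two-agent case (Theorem~\ref{thm: rho-weighted decomp}) via the marginal stationary-distribution identity (Lemma~\ref{lem: marginal_stationary_dis}), the first claim bounded through the conditional occupancy expansion, and the $(I)$/$(II)$ split combining the absorbing technique, the perturbation identity (Lemma~\ref{lem: matrix inverse}), and resolvent non-expansiveness in the stationary $\mu$-norm, then states that the extension to $N$ agents is straightforward with the $N$-fold tensor absorbing lemma of Appendix~\ref{app: multi-agent}. You reproduce exactly this architecture, correctly identifying the two load-bearing facts (the marginal identity $\mu^\pi_i=\sum_{s_{-i},a_{-i}}\mu^\pi_{1:N}$ and the $\mu$-norm non-expansiveness) and the slot-$i$ absorbing step, so nothing is missing.
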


\section{Applications of Markov Entanglement}
This section discusses both theoretical and practical applications of Markov entanglement. We first show how specific MDP structures simplify entanglement analysis and produce sharp decomposition error bounds. We then demonstrate how Markov entanglement serves as an efficient test criteria of value decomposition for practitioners. Finally, we numerically study two important multi-agent scenarios: a synthetic restless multi-armed bandit model and a ride-hailing simulator.

\subsection{(Weakly-)coupled MDPs}
Weakly-coupled MDPs (WCMDP) are a rich class of multi-agent model that capture many real-world applications such as supply chain management, queuing network and resource allocations (\citealt{daniel08relaxation, david23on, shar2023weakly}). Compared to general multi-agent MDP, WCMDP further ensures each agent follow its local transition while the agents' actions are coupled with each other. Formally, 
\begin{definition}[Weakly-coupled MDPs]
    An $N$-agent MDP $\gM_{1:N}(\gS,\gA,\mP,\vr_{1:N},\gamma)$ is a weakly-coupled MDP if 
    \begin{itemize}
        \item Each agent has local transition kernel $\mP_i$ such that $\forall \vs,\va,\vs^\prime, P(\vs^\prime\mid \vs, \va)=\prod_{i=1}^NP_i(s_i^\prime\mid s_i,a_i)$.
        \item At global state $\vs$, agents' joint actions $\va$ are subject to $m$ coupling constraints $\sum_{i=1}^N \vd_i(s_i,a_i)\leq \vb\in \R^m$ and $\vd_i\colon\brk{(s_i,a_i)c\colon s_i\in\gS_i,a_i\in\gA(s_i)}\to\R^m$.
    \end{itemize}   
\end{definition}

We then demonstrate that this weakly-coupled structure can further refine the analysis of Markov entanglement measure.

\begin{proposition}\label{prop: entanglement of policy}
Consider a $N$-agent weakly-coupled MDP $\gM_{1:N}(\gS,\gA,\mP,\vr_{1:N},\gamma)$. Given any policy $\pi\colon \gS \to \Delta(\gA)$ with measure of Markov entanglement $E_i(\mP_{1:N}^\pi)$ w.r.t the $\mu^\pi_{1:N}$-weighted agent-wise total variation distance, it holds for $i\in[N]$,
\[E_i(\mP_{1:N}^\pi)\leq \min_{\pi^\prime}\frac{1}{2}\sum_{\vs} \mu^\pi_{1:N}(\vs)\sum_{a_i}\Big|\pi(a_i\mid \vs)-\pi^\prime(a_i\mid s_i)\Big|\,,\]
where $\pi^\prime:\gS_i\to\gA_i$ is any local policy for agent $i$.
\end{proposition}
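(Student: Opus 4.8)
The plan is to start from the $N$-agent counterpart of \eqref{eq: degree_of_indep_rho}, which characterizes the $\mu^\pi_{1:N}$-weighted agent-wise entanglement measure as an approximation problem:
\[E_i(\PN)\leq \min_{\mP_i}\sum_{\vs,\va}\mu^\pi_{1:N}(\vs,\va)\,D_{\rm{TV}}\Big(\tilde{P}^\pi_i(\cdot,\cdot\mid\vs,\va),\ \mP_i(\cdot,\cdot\mid s_i,a_i)\Big)\,,\]
where $\tilde{P}^\pi_i$ is $\PN$ marginalized onto agent $i$'s next state-action pair, and the minimum ranges over all transition matrices $\mP_i$ on agent $i$'s space (any such $\mP_i$ extends to a separable $\mP_i\otimes(\bigotimes_{k\neq i}\mP_k)\in\SEP$ with agent-$i$ marginal $\mP_i$). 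Since it suffices to exhibit one good candidate, I would restrict attention to the transitions $\mP_i^{\pi'}(s_i',a_i'\mid s_i,a_i)=P_i(s_i'\mid s_i,a_i)\,\pi'(a_i'\mid s_i')$ induced by local policies $\pi'\colon\gS_i\to\Delta(\gA_i)$, each of which is a valid stochastic matrix. Evaluating the bound at $\mP_i^{\pi'}$ and then minimizing over $\pi'$ will give the claim.

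First I would use the weakly-coupled product structure $P(\vs'\mid\vs,\va)=\prod_{k}P_k(s_k'\mid s_k,a_k)$ to compute the marginal explicitly. Summing the global transition $\prod_k P_k(s_k'\mid s_k,a_k)\,\pi(\va'\mid\vs')$ over $(a_j')_{j\neq i}$ collapses the joint policy to its agent-$i$ marginal $\pi(a_i'\mid\vs')$, leaving
\[\tilde{P}^\pi_i(s_i',a_i'\mid\vs,\va)=P_i(s_i'\mid s_i,a_i)\sum_{(s_j')_{j\neq i}}\Big(\prod_{k\neq i}P_k(s_k'\mid s_k,a_k)\Big)\pi(a_i'\mid\vs')\,.\]
Because $\mP_i^{\pi'}$ shares the same leading factor $P_i(s_i'\mid s_i,a_i)$, the total-variation distance reduces to a weighted $\ell_1$ gap between $\sum_{(s_j')_{j\neq i}}(\prod_{k\neq i}P_k)\,\pi(a_i'\mid\vs')$ and $\pi'(a_i'\mid s_i')$. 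The triangle inequality then pulls the average over $(s_j')_{j\neq i}$ outside the absolute value, using that $\pi'(a_i'\mid s_i')$ does not depend on the other agents' next states and that $\prod_{k\neq i}P_k$ sums to one, yielding the pointwise bound $\tfrac12 P_i(s_i'\mid s_i,a_i)\sum_{(s_j')_{j\neq i}}(\prod_{k\neq i}P_k)\sum_{a_i'}|\pi(a_i'\mid\vs')-\pi'(a_i'\mid s_i')|$.

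The final and most delicate step folds the $\mu^\pi_{1:N}(\vs,\va)$-weighting together with the one-step transition into the stationary distribution. After the triangle-inequality bound, the factor $P_i(s_i'\mid s_i,a_i)\prod_{k\neq i}P_k(s_k'\mid s_k,a_k)$ reassembles the full product kernel $P(\vs'\mid\vs,\va)$, so summing against $\mu^\pi_{1:N}(\vs,\va)$ produces exactly the one-step push-forward $\sum_{\vs,\va}\mu^\pi_{1:N}(\vs,\va)P(\vs'\mid\vs,\va)=\mu^\pi_{1:N}(\vs')$ by stationarity of $\mu^\pi_{1:N}$ under $P$. Relabeling $\vs'\to\vs$ and $a_i'\to a_i$ turns the next-state expression into $\tfrac12\sum_{\vs}\mu^\pi_{1:N}(\vs)\sum_{a_i}|\pi(a_i\mid\vs)-\pi'(a_i\mid s_i)|$, and minimizing over $\pi'$ completes the proof.

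I expect the main obstacle to be precisely this stationarity bookkeeping: the natural marginalization produces the policy gap evaluated at the \emph{next} state $\vs'$, whereas the target bound is stated at the \emph{current} state $\vs$. Closing this mismatch relies crucially on $\mu^\pi_{1:N}$ being invariant under $P$ and on the weakly-coupled product form of the kernel, which is what allows the per-agent factors $P_i$ and $\prod_{k\neq i}P_k$ to recombine into $P$ after the triangle inequality; without the product structure the inner average over the other agents' next states could not be reinterpreted as a clean push-forward of the stationary distribution.
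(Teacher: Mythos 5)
Your proposal is correct and follows essentially the same route as the paper's proof: you plug the candidate separable kernel $P_i(s_i'\mid s_i,a_i)\,\pi'(a_i'\mid s_i')$ into the $\mu^\pi_{1:N}$-weighted ATV bound, exploit the weakly-coupled product structure so the local transition factors cancel, apply the triangle inequality, and use stationarity of $\mu^\pi_{1:N}$ to fold the one-step push-forward back into the stationary measure before relabeling $\vs'\to\vs$. The paper writes this out for two agents and notes the $N$-agent extension is immediate, whereas you carry out the $N$-agent bookkeeping directly, but the argument is identical, including the key stationarity step you correctly flag as the delicate point.
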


\begin{proof}[Proof of Proposition~\ref{prop: entanglement of policy}] We demonstrate the proof for two-agent WCMDP and the generalization to multi-agent WCMDP is straightforward. Consider $\mP_A^{\pi^\prime}$ be the transition of agent $A$ under local policy $\pi^\prime$. We focus on agent $A$ \begin{align*}
        E_A(\PAB)
        &\leq\frac{1}{2}\sum_{\vs,\va} \muAB(\vs,\va)\sum_{s_A^\prime,a_A^\prime} \abs{\sum_{s_B^\prime}P^\pi_{AB}(\vs^\prime,a_A\mid \vs,\va) - P_{A}^{\pi^\prime}(s_A^\prime\mid s_A,a_A)\pi^\prime(a_A^\prime\mid s_A^\prime)}\\
        &\stackrel{(i)}{=}\frac{1}{2}\sum_{\vs,\va} \muAB(\vs,\va)\sum_{s_A^\prime,a_A^\prime} \abs{\sum_{s_B^\prime}P^\pi_{AB}(\vs^\prime,a_A\mid \vs,\va) - \sum_{s_B^\prime}P(\vs^\prime\mid \vs,\va)\pi^\prime(a_A^\prime\mid s_A^\prime)}\\
        &=\frac{1}{2}\sum_{\vs,\va} \muAB(\vs,\va)\sum_{s_A^\prime,a_A^\prime} \abs{ \sum_{s_B^\prime}P(\vs^\prime\mid \vs,\va)\pa{\pi(a_A^\prime\mid \vs^\prime)-\pi^\prime(a_A^\prime\mid s_A^\prime)}}\\
        &\leq \frac{1}{2}\sum_{\vs,\va}\muAB(\vs,\va)\sum_{\vs^\prime} P(\vs^\prime\mid \vs,\va) \sum_{a_A^\prime} \abs{\pi(a_A^\prime\mid \vs^\prime)-\pi^\prime(a_A^\prime\mid s_A^\prime)}\\
        &\stackrel{(ii)}{=}\frac{1}{2} \sum_{\vs^\prime} \muAB(\vs^\prime) \sum_{a_A^\prime} \abs{\pi(a_A^\prime\mid \vs^\prime)-\pi^\prime(a_A^\prime\mid s_A^\prime)}\,.
    \end{align*}
    where $(i)$ follows from the transition structure of weakly coupled MDP $P(\vs^\prime\mid \vs,\va)=P(s^\prime_A\mid s_A,a_A)\cdot P(s^\prime_B\mid s_B,a_B)$; and $(ii)$ comes from the fact that $P^\pi(\vs^\prime\mid \vs)=\sum_{\va}\pi(\va\mid \vs)P(\vs^\prime\mid \vs,\va)$ and $\sum_{\vs}\mu^\pi(\vs)P^\pi(\vs^\prime\mid \vs)=\mu^\pi(\vs^\prime)$.
\end{proof}
Proposition~\ref{prop: entanglement of policy} establishes an upper bound for the Markov entanglement measure in WCMDP. Intuitively, this bound characterizes \emph{how agent $i$ can be viewed as making independent decisions}. It takes advantage of the local transition structure, thereby shaving off the transition term. In the next subsection, we will demonstrate how this can ease our analysis of the Markov entanglement measure.

Moreover, we observe that Proposition~\ref{prop: entanglement of policy} does not depend on the linear coupling constraint $\sum_{i=1}^N \vd_i(s_i,a_i)\leq \vb$. Instead, it applies generally to multi-agent MDPs with arbitrary coupling, provided agents adhere to local transition kernels.

\subsubsection{Index Policies are Asymptotically Separable}\label{sec: index policy weak entangle}

We further dive into the more structured multi-agent MDPs and introduce the following model of Restless Multi-Armed Bandit (RMAB), a special instance of weakly coupled MDP which is also widely used in operations research literature (\citealt{Whittle88restless, weber90on, gast23exponential, zhang2021restless,zhang2022near}).

\begin{definition}[Restless Multi-Armed Bandit]
A Restless Multi-Armed Bandit is an $N$-agent WCMDP that further satisfies
\begin{itemize}
    \item There are two available actions for each agent: $0$ for idle and $1$ for activate.
    \item Agent are homogeneous, i.e. with the same local state space $\gS$,\footnote{We abuse the notation $\gS$ to refer to the local state space in the context of RMAB since agents are homogeneous.} local transition $\{\mP_0,\mP_1\}$  and reward $\{\vr_0,\vr_1\}$ bounded  by $r_{\max}$. 
    \item $M\leq N$ agents will be activated at each timestep and other agents are left idle.
\end{itemize}
\end{definition}
In other words, RMAB is WCMDP with two actions and homogeneous agents that are coupled under constraint $\sum_{i=1}^N a_i=M$ at any global state $s_{1:N}$. This coupling of agents is often referred to budget constraint. For example, healthcare platform can reach out only a fraction of patients at a time due to the cost budget of interventions (\citealt{baek2023policy}).

In RMAB, arguably the most popular and classical policy is the index policy, where the decision maker activates agents based on some priority of their local states. We formally define   

\begin{definition}[Index Policy]
    There exists a priority index $\nu_s$ for each local state $s$. The decision maker will always activate agents in the descending order of the priority until the budget constraint $M$ is met. Ties are resolved fairly via uniform random sampling of agents at the same state.
\end{definition}

The index policies trace back to the well-known optimal Gittins Index (\citealt{weber92on}) and asymptotic optimal Whittle Index (\citealt{Whittle88restless, weber90on,gast23exponential}). More recent work generalizes Whittle Index to fluid based index policies (\citealt{Verloop2016AsymptoticallyOP, gast24linear}). However, computing the best index policy typically requires the knowledge of system transition a prior. As an alternative, \cite{qin20ride, xabi24better, baek2023policy, nakhleh21neu, wang2023optimistic, avrachenkov2022whittle} apply data-driven method to optimize the index policy. Among these work, \cite{qin20ride, xabi24better, baek2023policy} report great empirical success in industry-scale implementations. Understanding the mystery behind such success calls for a theory for general index policies. We then present our main result for this subsection

\begin{theorem}\label{thm: RMAB}
    Consider an $N$-agent restless multi-armed bandit. For any index policy satisfying mild technique conditions, it is asymptotically separable. Furthermore, there exists constant $C$ independent of $N$, such that for any agent $i\in [N]$, the measure of Markov entanglement $E_i(\mP^\pi_{1:N})$ w.r.t the $\mu^\pi_{1:N}$-weighted agent-wise total variation distance is bounded, \[E_i(\mP^\pi_{1:N})\leq \frac{C}{\sqrt{N}}\,.\]
\end{theorem}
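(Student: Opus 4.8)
The plan is to invoke Proposition~\ref{prop: entanglement of policy}, which already reduces Markov entanglement in a WCMDP to a purely policy-level quantity, and then to control that quantity by a mean-field concentration argument. Since an RMAB is a WCMDP with two actions, the bound in Proposition~\ref{prop: entanglement of policy} collapses: writing $\pi(1\mid\vs)$ for the marginal probability that agent $i$ is activated at global state $\vs$, the inner sum over $a_i\in\{0,1\}$ equals $2\abs{\pi(1\mid\vs)-\pi'(1\mid s_i)}$, so
\[E_i(\PN)\le \min_{\pi'}\sum_{\vs}\mu^\pi_{1:N}(\vs)\,\abs{\pi(1\mid\vs)-\pi'(1\mid s_i)}\,.\]
For each fixed local state $s$, the optimal scalar $\pi'(1\mid s)$ is the $\mu^\pi_{1:N}$-conditional median of $\pi(1\mid\vs)$ given $s_i=s$; any fixed value yields an upper bound, so it suffices to show that, conditioned on $s_i=s$, the activation probability $\pi(1\mid\vs)$ concentrates around a deterministic value up to $\gO(1/\sqrt N)$ fluctuations in $L^1$.

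Next I would make the index structure explicit. Under an index policy with priorities $\nu_s$, whether agent $i$ (in state $s$) is activated depends on $\vs$ only through the empirical occupation counts of the other $N-1$ agents: agent $i$ is activated iff the budget $M$ is not exhausted by agents of strictly higher priority, with a uniform tie-break among agents sharing $i$'s state. Thus $\pi(1\mid\vs)$ is a function of the normalized empirical distribution $\hat m_{-i}$ of the remaining agents' states. Introducing the mean-field fixed point $m^\star$ of the $N$-agent chain and the induced priority threshold $s^\dagger$---the state at which the cumulative mean-field mass of higher-priority states reaches the budget fraction $M/N$---the natural candidate $\pi'$ activates deterministically the states strictly above $s^\dagger$, idles those strictly below, and uses the appropriate fractional rate at $s^\dagger$. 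The residual $\abs{\pi(1\mid\vs)-\pi'(1\mid s_i)}$ is then governed entirely by the deviation of $\hat m_{-i}$ from $m^\star$ near the budget boundary.

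The core of the argument is a concentration estimate for the empirical measure under the stationary distribution $\mu^\pi_{1:N}$. Using the mean-field structure of RMABs under index policies, the empirical distribution $\hat m$ of the states concentrates around $m^\star$ with fluctuations of order $1/\sqrt N$ (a stationary central-limit scaling). For states strictly above or below $s^\dagger$, $\pi(1\mid\vs)$ disagrees with $\pi'$ only on the rare event that the empirical count of higher-priority agents crosses the budget, an event of $\mu^\pi_{1:N}$-probability $\gO(1/\sqrt N)$; for the threshold state $s^\dagger$ itself, the fractional activation rate fluctuates by $\gO(1/\sqrt N)$ because the number of higher-priority agents has standard deviation $\gO(\sqrt N)$ around a budget of order $N$. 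Summing the $\gO(1/\sqrt N)$ contribution over the finitely many local states and weighting by $\mu^\pi_i$ yields $E_i(\PN)\le C/\sqrt N$ with $C$ independent of $N$, and in particular $E_i(\PN)\to 0$, establishing asymptotic separability.

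The main obstacle is precisely this concentration step: because the agents are coupled through the budget constraint they are not independent, so the $1/\sqrt N$ fluctuation bound cannot be read off from an i.i.d.\ CLT and must instead be extracted from the mean-field dynamics of the interacting chain evaluated at stationarity. The discontinuity of $\pi(1\mid\vs)$ at the threshold compounds this difficulty, since one must show that the stationary mass placed on the $\gO(1/\sqrt N)$-neighborhood of the budget boundary is itself $\gO(1/\sqrt N)$. This is where the ``mild technical conditions'' should enter: they ought to guarantee a unichain stationary distribution, a unique mean-field fixed point $m^\star$, and a non-degenerate (well-separated) threshold, so that the empirical count genuinely obeys a $\sqrt N$-scale central-limit fluctuation rather than degenerating.
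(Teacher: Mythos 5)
Your overall route coincides with the paper's: you start from Proposition~\ref{prop: entanglement of policy}, take the comparator $\pi'$ to be the policy induced by the mean-field fixed point $\vm^\ast$, and reduce everything to showing that, under the stationary distribution, the empirical configuration deviates from $\vm^\ast$ by $\gO(1/\sqrt N)$ in expectation---exactly the paper's skeleton (its key lemma bounds $E_i(\PN)\leq|\gS|^2\,\E\br{\norm{\vm-\vm^\ast}_\infty}$). But the step you yourself flag as the main obstacle---the stationary $1/\sqrt N$ concentration for the budget-coupled, non-independent chain---is precisely the content of the theorem, and your proposal offers no mechanism for it beyond the phrase ``stationary central-limit scaling.'' The paper supplies the mechanism, and it is not a CLT: (i) the mean-field map $\phi^\pi$ is piecewise affine with $|\gS|$ pieces, so Brouwer gives existence of $\vm^\ast$ and UGAP gives uniqueness; (ii) non-degeneracy places $\vm^\ast$ inside a single affine piece, so on a neighborhood $\gN$ the dynamics are $\phi(\vm)=\mK_{s(\vm^\ast)}\vm+\vb_{s(\vm^\ast)}$ with $\mK_{s(\vm^\ast)}$ a stable matrix; (iii) a multi-step Hoeffding bound combined with stationarity ($\vm[0]$ and $\vm[\tilde T]$ equidistributed) gives \emph{exponentially} small stationary mass outside $\gN$; (iv) the one-step martingale bound $\E\br{\norm{\eps[1]}_1}\leq\sqrt{|\gS|/N}$ together with the stationarity identity $\E\br{\norm{\vm[1]-\vm^\ast}_\beta}=\E\br{\norm{\vm[0]-\vm^\ast}_\beta}$ yields a fixed-point inequality that, since $\norm{\mK_{s(\vm^\ast)}}_\beta<1$ in a suitable norm, solves to $\E\br{\norm{\vm-\vm^\ast}_\infty}=\gO(1/\sqrt N)$. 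Until you establish estimates of type (iii)--(iv), your argument is a plausible plan rather than a proof; note also that your claim that crossing events away from the threshold have probability $\gO(1/\sqrt N)$ understates what actually holds (they are exponentially rare under non-degeneracy and UGAP), which is a hint that the i.i.d.-CLT picture is not the right one here.

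A secondary remark: your event-splitting around the budget threshold---rare crossing events for states away from $s^\dagger$, plus the anti-concentration worry that the stationary mass in an $\gO(1/\sqrt N)$-neighborhood of the boundary be itself $\gO(1/\sqrt N)$---is avoidable. The paper's lemma shows the weighted policy error $\vm_s\abs{\pi(a=1\mid s,\vm)-\pi^\ast(a=1\mid s)}$ is \emph{deterministically} bounded by $|\gS|\norm{\vm-\vm^\ast}_\infty$ for every configuration $\vm$, via a counting argument exploiting that the index policy fills the budget in priority order: the extra activated fraction $k_s$ and the occupancy shift $\ell_s$ are each linear in $\norm{\vm-\vm^\ast}_\infty$. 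Weighting by $\vm_s$ thus tames the discontinuity of $\pi(1\mid\vs)$ at the threshold, and no boundary anti-concentration estimate is needed; only $\E\br{\norm{\vm-\vm^\ast}_\infty}$ enters the final bound.
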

Theorem~\ref{thm: RMAB} requires two standard technical conditions for index policies: non-degenerate and uniform global attractor property (UGAP), which restrict chaotic behavior in asymptotic regime and will be detailed in Appendix~\ref{app: index}. We note here these two assumptions are also used in many previous theoretical work on index policies (\citealt{weber90on,Verloop2016AsymptoticallyOP,gast23exponential,gast24linear}).

Theorem~\ref{thm: RMAB} justifies index polices are asymptotically separable under standard technical assumptions. Combined with Theorem~\ref{thm: multi-agent-rho}, we obtain the sublinear decomposition error for index policies
\begin{corollary}\label{coro: index_error}
    Consider an $N$-agent restless multi-armed bandit. For any index policy satisfying: (i) non-degenerate, (ii) UGAP, there exists constant $C$ independent of $N$ such that
    \[\norm{ Q^\pi_{1:N}(\vs,\va)-\sum_{i=1}^N Q^\pi_i(s_i,a_i) }_{\mu^\pi_{1:N}} \leq \frac{4C\gamma \sqrt{N}r_{\max}}{(1-\gamma)^2}\,.\]
\end{corollary}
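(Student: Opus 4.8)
The plan is to obtain the Corollary directly from Theorem~\ref{thm: RMAB} and the general decomposition bound of Theorem~\ref{thm: multi-agent-rho}, so the real content is the per-agent estimate $E_i(\PN)\le C/\sqrt N$. Since the agents of an RMAB are homogeneous, $r_{\max}^i=r_{\max}$ for every $i$, and Theorem~\ref{thm: RMAB} supplies $E_i(\PN)\le C/\sqrt N$; hence
\[
\sum_{i=1}^N E_i(\PN)\,r_{\max}^i \;\le\; N\cdot\frac{C}{\sqrt N}\cdot r_{\max}\;=\;C\sqrt N\,r_{\max},
\]
and substituting into Theorem~\ref{thm: multi-agent-rho} yields the stated $\gO(\sqrt N)$ error. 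The substance is therefore establishing the bound of Theorem~\ref{thm: RMAB}. Because an RMAB is weakly-coupled, I would first invoke Proposition~\ref{prop: entanglement of policy} to reduce $E_i(\PN)$ to exhibiting a single \emph{local} policy $\pi'$ for which
\[
\frac12\sum_{\vs}\mu^\pi_{1:N}(\vs)\sum_{a_i}\abs{\pi(a_i\mid\vs)-\pi'(a_i\mid s_i)}\;\le\;\frac{C}{\sqrt N},
\]
i.e.\ showing that agent $i$'s action under the global index rule is well approximated, in stationary-distribution average, by a purely state-dependent randomization.

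The candidate $\pi'$ should come from the mean-field fixed point of the index policy. Under the index rule an agent is activated exactly when its priority $\nu_{s_i}$ places it within the top-$M$ budget. In the mean-field limit the empirical state distribution concentrates at a deterministic fixed point (existence and attraction guaranteed by UGAP), and the non-degeneracy assumption forces a single \emph{threshold} state $s^\ast$: states of strictly higher priority are activated with probability one, states of strictly lower priority remain idle, and the threshold state is activated with a fixed fractional probability $\alpha^\ast$ tuned so the budget binds exactly. Taking $\pi'$ to be this threshold rule gives a legitimate local policy depending only on $s_i$.

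The crux is bounding the discrepancy $\sum_{a_i}\abs{\pi(a_i\mid\vs)-\pi'(a_i\mid s_i)}$ under $\mu^\pi_{1:N}$. For states strictly above or below the threshold the global and local actions coincide deterministically, so the discrepancy is supported on the threshold state and is driven entirely by the fluctuation of the realized count of higher-priority agents around its mean-field value. A central limit / concentration argument for the stationary empirical measure of the mean-field interacting chain shows these fluctuations live on the $\sqrt N$ scale (Gaussian in the count, hence $\gO(1/\sqrt N)$ in the fraction), so the mass on which the two actions disagree is $\gO(1/\sqrt N)$, which is precisely the desired bound.

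I expect this last step to be the main obstacle: one must control fluctuations of the occupancy measure \emph{under the stationary law} $\mu^\pi_{1:N}$ rather than along a finite trajectory, and then convert a fraction-level fluctuation into an action-level total-variation bound. This is exactly where the two technical hypotheses earn their keep: UGAP is needed to guarantee that the chain concentrates around the fixed point so that the stationary measure inherits the $\sqrt N$ fluctuation scaling, while non-degeneracy keeps the threshold structure sharp and prevents an $\Theta(1)$ fraction of agents from sitting ambiguously at the budget boundary. Weakening either could blur the threshold or slow the concentration and thereby destroy the $1/\sqrt N$ rate.
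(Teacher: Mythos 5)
Your derivation of the corollary is exactly the paper's: it follows immediately by substituting the per-agent bound $E_i(\PN)\leq C/\sqrt N$ from Theorem~\ref{thm: RMAB} into Theorem~\ref{thm: multi-agent-rho} and using homogeneity ($r_{\max}^i=r_{\max}$), and your sketch of Theorem~\ref{thm: RMAB} itself also mirrors the paper's appendix proof (reduction via Proposition~\ref{prop: entanglement of policy}, the mean-field fixed point $\vm^\ast$ with its threshold policy, and $\gO(1/\sqrt N)$ stationary fluctuations, which the paper obtains by combining one-step concentration with local stability of the linearized dynamics and a stationarity bootstrap rather than a CLT). The approach is correct and essentially the same as the paper's.
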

This sublinear error result explains why the value decomposition in \cite{qin20ride, xabi24better, baek2023policy}
manages to effectively approximate the global value function in large-scale practical applications. It also justifies Meta Algorithm~\ref{alg: meta} as an effective approach to evaluating or comparing index policies.

\subsubsection{Proof Sketch of Theorem~\ref{thm: RMAB}}
We provide an overview of the proof and delay the full version to Appendix~\ref{app: index}.

To begin, we consider the system configuration $\vm\in\Delta^{|\gS|}$ where $\vm_s=\frac{1}{N}\sharp\{\rm{Agents\ in\ state\ }s\}$ is the proportion of agents in state $s$. When $N\to\infty$, the transition between configurations can be viewed as deterministic under the index policy and $\vm$ approaches its mean-field fixed-point $\vm^\ast$. Furthermore, in this mean-field limit, each agent's local transition will only depend its local state. As a result, the system will de-couple and become separable as $N\to\infty$.

To formalize this intuition, we introduce the following lemma that connects Markov entanglement measure with the mean-field analysis
\begin{lemma}
For any index policy satisfying the same condition as Theorem~\ref{thm: RMAB}, the measure of Markov entanglement w.r.t $\mu^\pi_{1:N}$-weighted ATV distance is bounded by the deviation from the mean-field configuration, i.e. for any agent $i\in [N]$,
    \[E_i(\PN)\leq |\gS|^2\cdot\E\br{\|\vm-\vm^\ast\|_\infty}\,,\]
    where the expectation is taking over the stationary distribution $\vm\sim \mu^\pi_{1:N}$.
\end{lemma}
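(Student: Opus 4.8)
The plan is to start from the weakly-coupled bound in Proposition~\ref{prop: entanglement of policy} and instantiate the free local policy $\pi'$ as the \emph{mean-field policy} $\pi^\ast$ associated with the fixed point $\vm^\ast$. Since an RMAB has binary actions, for every global state $\vs$ (inducing configuration $\vm$) the inner total variation collapses to a single term, $\frac12\sum_{a_i}\abs{\pi(a_i\mid\vs)-\pi^\ast(a_i\mid s_i)}=\abs{\pi(1\mid\vs)-\pi^\ast(1\mid s_i)}$. Writing $A(s;\vm)\coloneqq\pi(1\mid\vs)$ for the index policy's activation probability of an agent in state $s$ under configuration $\vm$, and $A^\ast(s)\coloneqq\pi^\ast(1\mid s)$ for its mean-field counterpart, Proposition~\ref{prop: entanglement of policy} then yields $E_i(\PN)\leq \E_\vm\br{\sum_s \mathbb{P}(s_i=s\mid\vm)\,\abs{A(s;\vm)-A^\ast(s)}}$, with the expectation taken over $\vm\sim\mu^\pi_{1:N}$.

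Next I would exploit exchangeability. Because agents are homogeneous and the index policy is permutation-symmetric, the stationary law is exchangeable, so conditioned on the empirical configuration $\vm$ a fixed agent $i$ occupies state $s$ with probability exactly $\vm_s$. This reduces the target to the per-configuration estimate $\sum_s\vm_s\abs{A(s;\vm)-A^\ast(s)}\leq|\gS|^2\norm{\vm-\vm^\ast}_\infty$, after which taking expectations closes the lemma.

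For the per-configuration estimate I would pass to \emph{activated masses} rather than probabilities, which is the device that tames the fractional threshold term. Ordering states by their priority index and letting $c(s)=\sum_{\nu_{s'}>\nu_s}\vm_{s'}$ be the mass strictly above $s$, the index rule is a water-filling: the mass activated in state $s$ is $g(s)\coloneqq\vm_s A(s;\vm)=\min\pa{\vm_s,(\alpha-c(s))^+}$ with budget fraction $\alpha=M/N$, and analogously $g^\ast(s)=\min\pa{\vm^\ast_s,(\alpha-c^\ast(s))^+}$. The map $(\vm_s,c(s))\mapsto g(s)$ is $1$-Lipschitz in each argument, so $\abs{g(s)-g^\ast(s)}\leq\abs{\vm_s-\vm^\ast_s}+\abs{c(s)-c^\ast(s)}$; since $\abs{c(s)-c^\ast(s)}\leq|\gS|\norm{\vm-\vm^\ast}_\infty$, summing over the $|\gS|$ states gives $\sum_s\abs{g(s)-g^\ast(s)}=\gO(|\gS|^2)\norm{\vm-\vm^\ast}_\infty$. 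Converting back via $\vm_s\abs{A(s;\vm)-A^\ast(s)}=\abs{g(s)-\vm_sA^\ast(s)}\leq\abs{g(s)-g^\ast(s)}+\abs{\vm_s-\vm^\ast_s}$ and summing again produces a bound with leading constant $|\gS|^2$, as claimed.

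The main obstacle — and the reason the activated-mass reformulation is essential — is the fractional activation at the threshold state, whose probability $(\alpha-c(s))/\vm_s$ carries a denominator that blows up as $\vm_s\to 0$ and is not Lipschitz in $\vm$. Multiplying by the occupancy weight $\vm_s$ (supplied automatically by the exchangeability step) cancels this denominator, so no lower bound on the threshold mass is needed and threshold \emph{shifts} — where several states flip between fully active and fully idle — are absorbed uniformly by the same water-filling/Lipschitz argument rather than by separate casework. The non-degeneracy hypothesis of Theorem~\ref{thm: RMAB} enters only to guarantee that $\vm^\ast$, and hence the comparison policy $\pi^\ast$, is well defined; the quantitative $\gO(1/\sqrt N)$ rate is then obtained separately by bounding $\E\br{\norm{\vm-\vm^\ast}_\infty}$ via the mean-field concentration implied by UGAP.
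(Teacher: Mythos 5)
Your proposal is correct and takes essentially the same route as the paper's proof: it instantiates Proposition~\ref{prop: entanglement of policy} with $\pi'=\pi^\ast$, uses exchangeability of the stationary law to reduce the supremum over agents to the configuration-weighted sum $\sum_{s}\vm_s\abs{\pi(1\mid s,\vm)-\pi^\ast(1\mid s)}$, and then exploits exactly the denominator cancellation that the paper encodes via its $k_s,\ell_s$ bookkeeping---your water-filling identity $g(s)=\min\pa{\vm_s,(\alpha-c(s))^+}$ is just a more explicit (and arguably cleaner) rendering of that step. The only discrepancy is trivial constant accounting: your per-state bound $(|\gS|+1)\norm{\vm-\vm^\ast}_\infty$ versus the paper's $|\gS|\norm{\vm-\vm^\ast}_\infty$, a slack that disappears if you sum the cumulative terms $\abs{c(s)-c^\ast(s)}$ state by state rather than bounding each by $(|\gS|-1)\norm{\vm-\vm^\ast}_\infty$.
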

This lemma builds upon Proposition~\ref{prop: entanglement of policy}. We thus focus on the deviation from $\vm$ to $\vm^\ast$. We extend the concentration analysis from~\cite{gast23exponential, gast24linear} to derive a new stability bound for the RHS. Specifically, we finishing the proof via demonstrating the deviation decays at the rate $\gO(1/\sqrt{N})$.

\subsection{Efficient Verification of Value Decomposition}
For practitioners, verifying the feasibility of value decomposition remains a significant challenge. Typically, they have to rely on indirect methods—for instance, evaluating policies derived from value decomposition in real-world settings or simulation environments, as seen in \cite{baek2023policy, qin20ride, xabi24better}. While these policies often outperform baselines, it is unclear whether the learned decompositions accurately approximate the true global values. Moreover, real-world verifications are often prohibitively expensive. For example, the Lyft experiment (\citealt{xabi24better}) required a complex time-split design and posed substantial software engineering challenges in reliability and stability. Similarly, \cite{baek2023policy, qin20ride} depended on extensive offline data collection to construct viable simulation environments.


As a solution, Markov entanglement offers a simple and efficient way to empirically test whether value decomposition can be safely applied. Recall the decomposition error in $\muAB$-norm is controlled by the measure of Markov entanglement w.r.t $\muAB$-weighted ATV distance. Thus it suffices to estimate $E_A(\PAB)$. Specifically, according to \eqref{eq: degree_of_indep_rho}, we have
\begin{align}
    E_A(\PAB)
    &\leq \min_{\mP_A} \sum_{\vs,\va} \rho^\pi_{AB}(\vs,\va) D_{\rm{TV}}\Big(\PAB(\cdot,\cdot\mid \vs,\va), \mP_A(\cdot,\cdot\mid s_A,a_A)\Big)\nonumber\\
    &\approx \min_{\mP_A} \frac{1}{2T}\sum_{t=1}^T \sum_{s_A^\prime,a_A^\prime} \Big|\PAB(s_A^\prime,a_A^\prime\mid \vs^t,\va^t)-\mP_A(s_A^\prime,a_A^\prime\mid s_A^t,a_A^t) \Big|\label{eq: monte-carlo}
\end{align}
In other words, we can apply a Monte-Carlo estimation for estimating $E_A(\PAB)$. Notice~\eqref{eq: monte-carlo} is essentially a \emph{linear programming} for $\mP_A$ and this optimization can be solved distributively at each $(s_A,a_A)$ pair, which enables efficient solutions. Moreover, \eqref{eq: monte-carlo} only requires the knowledge of one-step transition $\PAB(s_A^\prime,a_A^\prime\mid \vs^t,\va^t)$, which can often be easily calculated or simulated. For (weakly-)coupled MDPs, we can apply Proposition~\ref{prop: entanglement of policy} and further eliminate the transition term,
\begin{align}
    E_A(\PAB)
    &\leq \min_{\pi^\prime}\frac{1}{2}\sum_{\vs} \mu^\pi_{AB}(\vs)\sum_{a_A}\Big|\pi(a_A\mid \vs)-\pi^\prime(a_A\mid s_A)\Big|\nonumber\\
    &\approx \min_{\pi^\prime}\frac{1}{2T}\sum_{t=1}^T \sum_{a_A}\Big|\pi(a_A\mid \vs^t)-\pi^\prime(a_A\mid s_A^t)\Big|\,.\label{eq: monte-carlo-wcmdp}
\end{align}
Compared to \eqref{eq: monte-carlo}, \eqref{eq: monte-carlo-wcmdp} only calls for the knowledge of the global policy $\pi$. Together, \eqref{eq: monte-carlo} and \eqref{eq: monte-carlo-wcmdp} enable efficient estimation of $E_A(\PAB)$ via empirical simulation. These ideas can also be easily extended to $N$-agent MDPs. 

\paragraph{Worst-case Error Bound} We also emphasize the decomposition error bound in Theorem~\ref{thm: rho-weighted decomp} guarantees worst-case performance. For instance, an MDP may exhibit high entanglement yet incur small decomposition error (e.g., $\vr_A=\vr_B=\vzero$). However, low Markov entanglement strictly ensures a small decomposition error. This monotonic property establishes Markov entanglement as a conservative verification metric. Practitioners can thus confidently apply value decomposition whenever the system exhibits low entanglement.


\subsubsection{Numerical Simulation I: Restless Multi-armed Bandit}
We first empirically study the value decomposition for the index policy on a circulant RMAB benchmark \cite{avrachenkov2022whittle, zhang2022near,biswas21learn, fu19towards} that has $4$ different states each local agent. As a result, the global state space scales as large as $4^{1800}> 10^{1000}$ for $N=1800$ agents. The specific transitions and rewards are introduced in Appendix~\ref{app: simulation}. For each RMAB instance, we sample a trajectory of length $T=5N$ and use the collected data to i) solve~\eqref{eq: monte-carlo-wcmdp} to estimate the measure of Markov entanglement; ii) train local Q-value decomposition. It quickly follows from Figure~\ref{fig: RMAB_Exp_main_text}:

\begin{figure}[h]
\centering
\begin{tabular}{l l }
        \includegraphics[height=.35\linewidth]{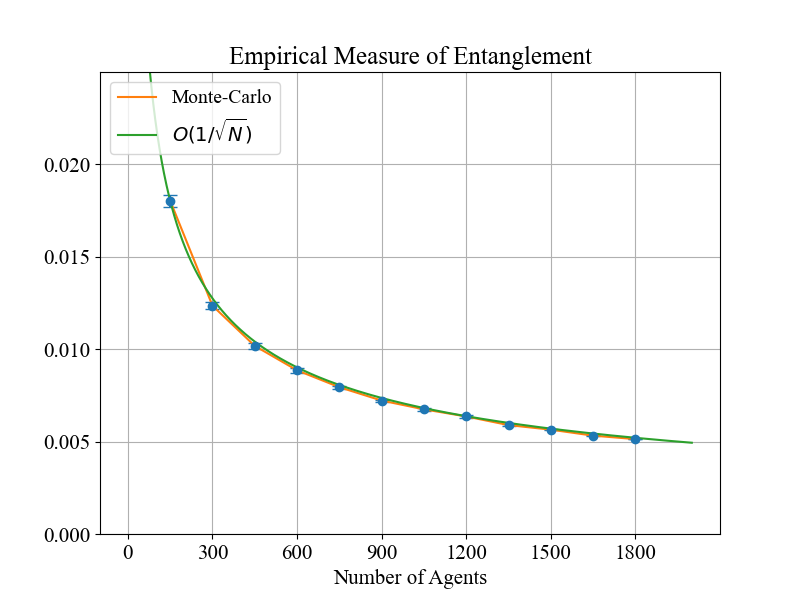}
        &
        \includegraphics[height=.35\linewidth]{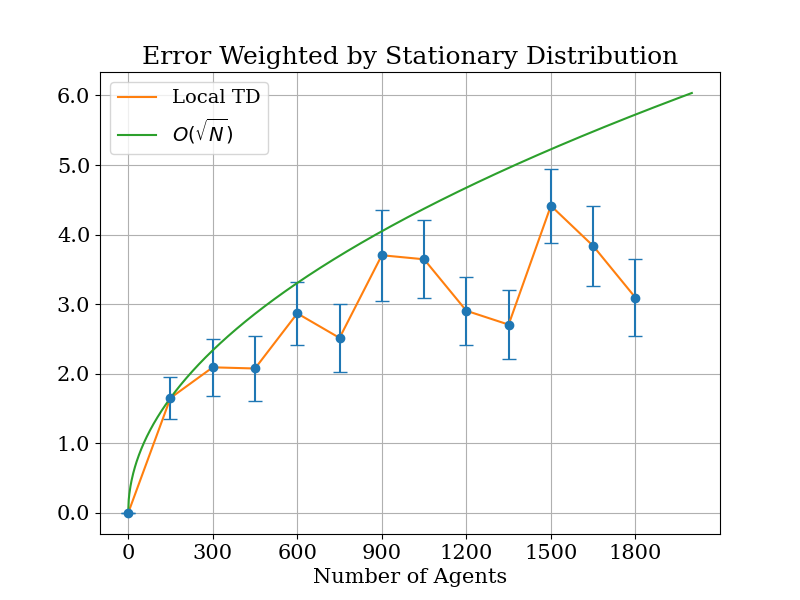}
    \end{tabular}
    \caption{Circulant RMAB under an index policy. \emph{Left:} empirical estimation of Markov entanglement $E_{1}(\PN)$. \emph{Right:} $\mu$-weighted decomposition error. }
    \label{fig: RMAB_Exp_main_text}
\end{figure}
The estimated Markov entanglement decays as $\gO(1/\sqrt N)$ in the left panel, consistent with theoretical predictions. This also implies a low decomposition error scaling of $\gO(\sqrt N)$, as seen in the right panel. Furthermore, the simulated trajectory has a length of $T=5N$ while the global state space has size $|S|^N$, showing both entanglement estimation and local Q-value decomposition sample-efficient.

\subsubsection{Numerical Simulation II: A Ride-hailing Simulator}
Finally, we study the Markov entanglement in ride-hailing (\citealt{xabi24better, qin20ride}), via a simulator built on NYC yellow cab data (\citealt{NYCTLCData}). The ride-hailing setting presents significantly greater complexity than RMAB. Most notably, a set of exogenous orders arrives at each timestep, and drivers are matched to these orders according to specific dispatching rules. Particularly, with $N$ drivers, we sample $0.1N$ orders at each timestep. Matched drivers transition to new positions based on their assigned orders, while idle drivers may also relocate autonomously (\citealt{han22real}). To address this challenge, we extend the Markov entanglement framework to accommodate exogenous orders (see Appendix~\ref{app: exog_order}) and derive efficient estimators for both the Markov entanglement measure and local value functions in this setting. Simulation results are exhibited below with more details in Appendix~\ref{app: exp_RH}.

\begin{figure}[h]
\centering
\begin{tabular}{l l }
        \includegraphics[height=.35\linewidth]{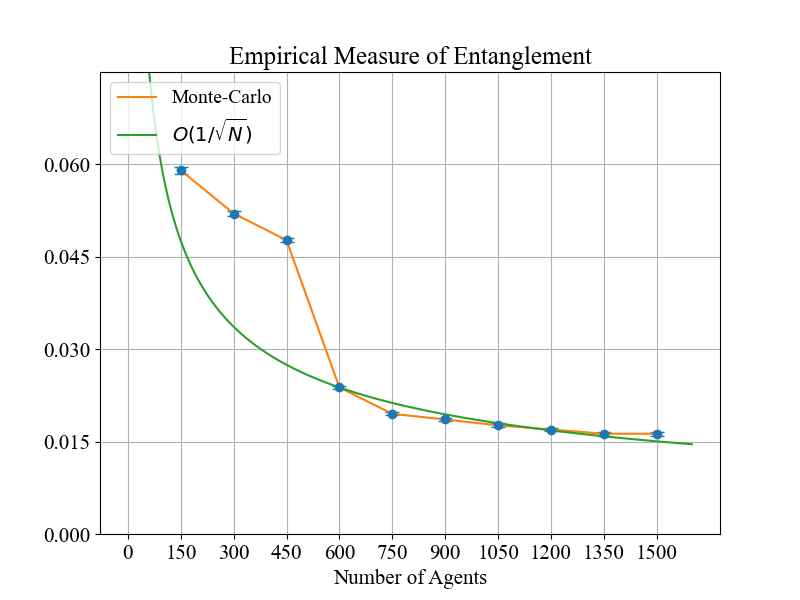}
        &
        \includegraphics[height=.35\linewidth]{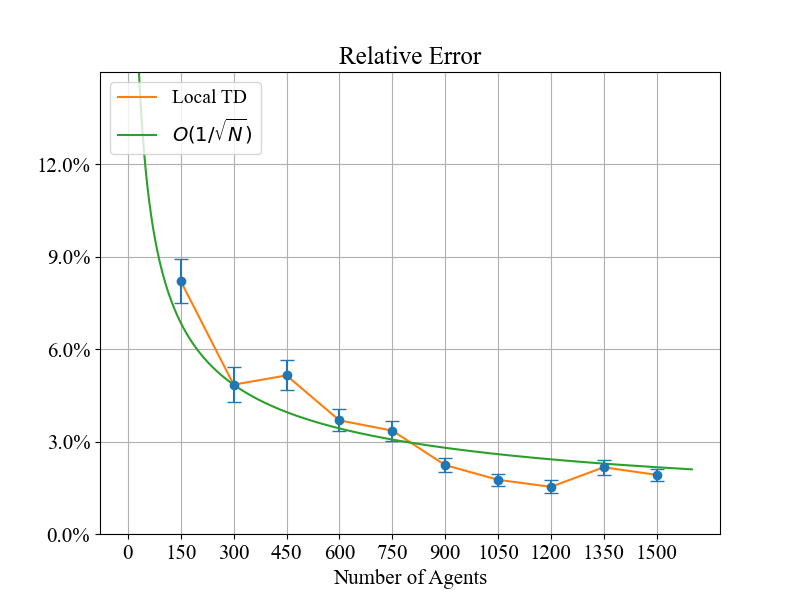}
    \end{tabular}
    \caption{A ride-hailing simulator. \emph{Left:} empirical estimation of Markov entanglement $E_{1}(\PN)$. \emph{Right:} $\mu$-weighted decomposition error divided by the global value $\|Q^\pi_{1:N}\|_\mu$. }
    \label{fig: RH_Exp_main_text}
\end{figure}

Perhaps surprisingly, despite the complexity of ride-hailing system, its Markov entanglement measure remains generally small and decays as $\gO(1/\sqrt N)$ for large $N$. We conjecture that this decay arises because the ride-hailing simulator converges to its mean-field limit as $N$ grows, which also exhibits asymptotic separability. More rigorous theoretical analysis is left for future work. Furthermore, the decomposition error relative to the true global Q-values decays at rate $\gO(1/\sqrt N)$, becoming negligibly small ($\leq3\%$) for large $N$. These findings help explain the empirical success of value decomposition methods previously observed in ride-hailing applications.

\section{Conclusion}
This paper established the mathematical foundation of value decomposition in MARL. Drawing inspiration from quantum physics, we propose the idea of Markov entanglement and prove that it serves as a sufficient and necessary condition for the exact value decomposition. We further characterize the decomposition error in general multi-agent MDPs through the measure of Markov entanglement. As application examples, we prove widely-used index policies are asymptotically separable and suggest practitioners using Markov entanglement as a proxy for estimating the effectiveness of value decomposition.

Reinforcement learning and quantum physics have been two well-established yet largely separate fields. We hope our study opens an interesting connection between them, allowing concepts/techniques developed in one field to benefit the other.


\bibliographystyle{informs2014}
{
\bibliography{reference}
}

\newpage
\begin{appendices}
\tableofcontents
\crefalias{section}{appendix}
\crefalias{subsection}{appendix}

\section{Linear Algebra with Tensor Product}\label{app: linaer algebra}
We briefly introduce the basic properties of tensor product or Kronecker product. Let $\mA\in\R^{m_1\times n_1}, \mB\in\R^{m_2\times n_2}$, then\[\mA\otimes\mB=\left[\begin{array}{cccc}
a_{11} \mB & a_{12} \mB & \ldots & a_{1 n_1} \mB \\
a_{21} \mB & a_{22} \mB & \ldots & a_{2 n_1} \mB \\
\ldots & \ldots & \ldots & \ldots \\
a_{m_1 1} \mB & a_{m_1 2} \mB & \ldots & a_{m_1 n_1} \mB
\end{array}\right]\in\R^{m_1m_2\times n_1n_2}\,.\]Tensor product satisfies the following basic properties,
\begin{itemize}
    \item [\textbf{1. Bilinearity}] For any matrix $\mA,\mB,\mC$ and constant $k$, it holds $k(\mA\otimes\mB)=(k\mA)\otimes\mB=\mA\otimes(k\mB)$, $(\mA+\mB)\otimes \mC=\mA\otimes\mC+\mB\otimes\mC$, and $\mA\otimes (\mB+\mC)=\mA\otimes\mB+\mA\otimes\mC$.
    \item [\textbf{2. Mixed-product Property}] For any matrix $\mA,\mB,\mC,\mD$, if $\mA\mC$ and $\mB\mD$ form valid matrix product, then $(\mA\otimes\mB)(\mC\otimes\mD)=(\mA\mC)\otimes(\mB\mD)$.
\end{itemize}
\section{Decompose value functions}\label{app: decomp_value}
Compared to the decomposition of Q-value, the value function further requires the reward to be \emph{state-dependent}. To illustrate, notice by Bellman equation, 
\[V^\pi_{AB}=(\mI-\gamma\mP^\pi_{AB})^{-1}\vr_{AB}^\pi\,,\]where we abuse notation and denote $P^\pi_{AB}(\vs^\prime\mid \vs)=\sum_{\va}\pi(\va\mid \vs)P(\vs^\prime\mid \vs,\va)$ and reward $r^\pi_{AB}(\vs)=\sum_{\va}\pi(\va\mid \vs)r_{AB}(\vs,\va)$. A key subtlety arises because $\vr^\pi_{AB}$ may not be decomposable—even when $\vr_{AB}$ is decomposable—unless the reward $\vr_{AB}$ is state-dependent. Consequently, we cannot directly apply the "absorbing" equation as in the proof of Theorem~\ref{thm: mixed_state}. 

On the other hand, Q-value decomposition bypasses the state-dependence assumption and provides a stronger condition that directly implies value function decomposition. As a result, while learning local value functions may seem more intuitive, we recommend learning local Q-values instead and using them to approximate the global value function.
\section{Necessity of Negative Coefficients}\label{app: neg_coeff}
In section~\ref{sec: neg_coeff}, we discuss that compared to quantum entanglement, Markov entanglement does not require coefficients $\vx\geq0$. Particularly, we will provide an instance $\mP$ that lies in $\gP_{\textrm{SEP}}$ but not $\gP_{\textrm{SEP}}^+$.

Consider the following basis
\[
\begin{gathered}
\mE_{00}=\left(\begin{array}{ll}
1 & 0 \\
1 & 0
\end{array}\right), \quad \mE_{01}=\left(\begin{array}{ll}
1 & 0 \\
0 & 1
\end{array}\right), \quad \mE_{10}=\left(\begin{array}{ll}
0 & 1 \\
1 & 0
\end{array}\right), \quad \mE_{11}=\left(\begin{array}{ll}
0 & 1 \\
0 & 1
\end{array}\right) 
\end{gathered}
\]
And the corresponding transition matrix we provide is
\[\mP=\left(\begin{array}{cccc}
0.5 & 0 & 0 & 0.5 \\
0.5 & 0 & 0 & 0.5 \\
0.5 & 0 & 0 & 0.5 \\
0 & 0.5 & 0.5 & 0
\end{array}\right)=\frac{1}{2} \mE_{00} \otimes \mE_{00}+\frac{1}{2} \mE_{10} \otimes \mE_{11}+\frac{1}{2} \mE_{11} \otimes \mE_{10}-\frac{1}{2} \mE_{10} \otimes \mE_{10}\]
One can also verify $\mP$ can not be represented by the convex combination of tensor products of these basis.
\section{Proof of Theorem~\ref{thm: necessary_condition}}\label{app: proof_thm_2}
We provide the full proof of Theorem~\ref{thm: necessary_condition} in this section.
\paragraph{Step 1: Characterize the Orthogonal Complement.}{}

To start with, we consider the smallest subspace containing all transition matrices $\Omega_P\coloneqq\rm{span}(\mP)$ where $\mP$ are the set of all transition matrices in $\R^{m\times m}$. We then study the dimension of $\Omega_P$.
\begin{lemma}
    The dimension of $\Omega_P$ is $\rm{dim}(\Omega_P)=m^2-m+1$.
\end{lemma}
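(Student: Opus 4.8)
The plan is to identify $\Omega_P$ with a concrete linear subspace and then count its dimension. Write $\mathbf{1}\in\R^m$ for the all-ones vector. Every transition matrix $\mP$ is row-stochastic, so $\mP\mathbf{1}=\mathbf{1}$, i.e.\ all of its row sums equal $1$. Consequently any linear combination $\sum_j c_j\mP_j$ of transition matrices has every row summing to the \emph{same} value $\sum_j c_j$. This motivates the candidate
\[
W \coloneqq \left\{\mathbf{M}\in\R^{m\times m} : \mathbf{M}\mathbf{1}\in\mathrm{span}(\mathbf{1})\right\},
\]
the space of matrices whose row sums are all equal. I would prove $\Omega_P=W$ and then read off $\dim W=m^2-m+1$.

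The inclusion $\Omega_P\subseteq W$ is immediate: each generator lies in $W$ (row sums all $1$) and $W$ is a linear subspace, so $\mathrm{span}$ of the transition matrices stays inside $W$. For the dimension of $W$, note that $\mathbf{M}\mathbf{1}\in\mathrm{span}(\mathbf{1})$ is equivalent to the $m-1$ equations equating consecutive row sums, $(\mathbf{M}\mathbf{1})_1=(\mathbf{M}\mathbf{1})_i$ for $i=2,\dots,m$. These are $m-1$ linearly independent constraints on $\R^{m\times m}$, so $\dim W=m^2-(m-1)=m^2-m+1$.

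The substantive step is the reverse inclusion $W\subseteq\Omega_P$. It suffices to exhibit inside $\Omega_P$ both (i) one transition matrix with nonzero row sums and (ii) the subspace $W_0\coloneqq\{\mathbf{M}:\mathbf{M}\mathbf{1}=\mathbf{0}\}$ of matrices with all row sums zero, since $W=W_0\oplus\mathrm{span}(\mathbf{P}_0)$ for any fixed transition matrix $\mathbf{P}_0$ (a dimension check confirms this, as $\dim W_0=m^2-m$). Part (i) is trivial. For (ii), observe that $W_0$ is spanned by the elementary matrices $\mathbf{e}_i(\mathbf{e}_j-\mathbf{e}_k)^\top$ with $i\in[m]$ and $j\neq k$, because each row of a matrix in $W_0$ ranges over the zero-sum hyperplane, which is spanned by the differences $\mathbf{e}_j-\mathbf{e}_k$. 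Each such generator is realized as a difference of two \emph{deterministic} transition matrices: let $\mathbf{P}$ send row $i$ to $\mathbf{e}_j^\top$, let $\mathbf{P}'$ send row $i$ to $\mathbf{e}_k^\top$, and let both agree on every other row (say, each other row equal to $\mathbf{e}_1^\top$). Then $\mathbf{P}-\mathbf{P}'=\mathbf{e}_i(\mathbf{e}_j-\mathbf{e}_k)^\top\in\Omega_P$. Hence $W_0\subseteq\Omega_P$, giving $W\subseteq\Omega_P$, and combined with the first inclusion, $\Omega_P=W$ with $\dim(\Omega_P)=m^2-m+1$.

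\textbf{Main obstacle.} The only delicate point is the last step: one must confirm that the natural spanning set of the zero-row-sum subspace can genuinely be produced by differences of \emph{bona fide} transition matrices (nonnegative and row-stochastic), not merely by arbitrary real matrices. The deterministic-matrix construction above resolves this cleanly, and I would double-check that the generators $\mathbf{e}_i(\mathbf{e}_j-\mathbf{e}_k)^\top$ indeed span $W_0$ and that the $m-1$ row-sum equalities used in the dimension count are independent.
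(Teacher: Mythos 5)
Your proof is correct, and it takes a genuinely different (and in fact more complete) route than the paper's. The paper argues by exhibiting an explicit basis: the matrices $\mZ_{ij}$ intended to be the deterministic transition matrices that agree with $\mI$ in every row except row $i$, which is replaced by $\ve_j^\top$; since $\mZ_{ii}=\mI$ for every $i$, there are $m(m-1)+1=m^2-m+1$ distinct such matrices, and the spanning and independence checks are left implicit. (As a side note, the paper's displayed definition literally reads $\mZ_{ij}=\mI+\ve_i\ve_j^\top$ for $i\neq j$, whose $i$-th row sums to $2$, so it is not row-stochastic as written---your construction avoids this ambiguity entirely.) You instead prove the stronger structural statement $\Omega_P=W$, where $W$ is the subspace of matrices with constant row sums, obtaining the dimension by counting the $m-1$ independent row-sum constraints; the substantive inclusion $W\subseteq\Omega_P$ is handled via the splitting $W=W_0\oplus\mathrm{span}(\mP_0)$ and by realizing each generator $\ve_i(\ve_j-\ve_k)^\top$ of the zero-row-sum subspace $W_0$ as a difference of two deterministic transition matrices---the same building blocks as the paper's basis, deployed differently. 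Your characterization buys more than the count: writing $W=\{\mM \mid \varepsilon_j^\top \mM\,\ve=0,\ j\in[m-1]\}$ makes transparent why the paper's orthogonal complement $\Omega_{P^{\otimes 2}}^\perp$ in the next step of the necessity proof is built precisely from the rank-one matrices $\varepsilon_j\ve^\top$, which is exactly where this lemma gets used. Both halves of your argument check out: the $m-1$ constraint functionals act on disjoint rows and are independent, and the matrices $\mP,\mP'$ in your deterministic construction are bona fide row-stochastic matrices, so no step fails.
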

\begin{proof}
Let $\mZ_{ij}\in\R^{m\times m}$ such that 
\[\mZ_{ij}(a,b)=\left\{\begin{matrix}
  1& (a=i\wedge b=j)\vee (a=b)\\
  0& o.w.
\end{matrix}\right.\,.\]
    One basis for all transition matrices is given by $\{\mZ_{ij}\}_{i,j\in[m]}$ whose cardinarlity is $m^2-m+1$.
\end{proof}
Let $\Omega_{P^{\otimes2}}\coloneqq\rm{span}(\mP_1\otimes\mP_2)$ be the minimal subspace containing all separable transition matrices. It quickly follows that 
\[\rm{dim}(\Omega_{P^{\otimes2}})=(\rm{dim}(\Omega_P))^2\,.\]
We then construct the orthogonal complement of $\Omega_{P^{\otimes2}}$ under Frobenius inner product. Let $\{\varepsilon_j\}_{j\in[m-1]}$ be a set of vector in $\R^m$ such that $\varepsilon_j=(1,0,\ldots,0,-1,0,\ldots,0)^\top$ with the first element $1$ and $j+1$-th element $-1$. Notice that \[\Tr\pa{\ve\varepsilon_j^\top\mP}=\Tr\pa{\varepsilon_j^\top\mP\ve}=0\,,\]for all $\varepsilon_j$. Consider the following subspace\[\Omega^\prime=\brk{\sum_{j=1}^{m-1}\pa{\varepsilon_j \ve^\top}\otimes \mW^1_j + \sum_{j=1}^{m-1}\mW^2_j\otimes\pa{\varepsilon_j \ve^\top} \mid W^1_{1:j},W^2_{1:j}\in\R^{m\times m}}\,.\]
We then show $\Omega^\prime$ is exactly the orthogonal complement of $\Omega_{P^{\otimes2}}$. First, notice that 
\[\rm{dim}(\Omega^\prime) = 2(m-1)m^2-(m-1)^2\,.\] and thus $\rm{dim}(\Omega^\prime) + \rm{dim}(\Omega_{P^{\otimes2}})=m^4$. Moreover, one can verify for any $\mX\in\Omega_{P^{\otimes2}}$ and $\mY\in \Omega^\prime$, $\Tr(\mX^\top\mY)=0$. As a result, it holds \[\Omega^\prime=\Omega_{P^{\otimes2}}^\bot\,.\]

\paragraph{Step 2: Connection to ``Inverse"}{}

The decomposition of Q-value ultimately concerns with the properties of $(\mI-\gamma \mP^\pi_{AB})^{-1}$. The following lemma bridges this gap.

\begin{lemma}\label{lem: connection to inverse}
    Given any transition matrix $\mP$ and $\gamma >0$, $\mP$ is separable if and only if $(1-\gamma)(\mI-\gamma \mP)^{-1}$ is separable.
\end{lemma}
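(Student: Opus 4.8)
The plan is to phrase separability as membership in a fixed \emph{compact} set and to pass between $\mP$ and $\mR\coloneqq(1-\gamma)(\mI-\gamma\mP)^{-1}$ via (inverse) Neumann series. Let $\gT\coloneqq\{\mP_A\otimes\mP_B\}$ be the set of product transition matrices and $\mathrm{aff}(\gT)$ its affine hull; since a finite affine combination $\sum_j x_j\mP_A^{(j)}\otimes\mP_B^{(j)}$ with $\sum_j x_j=1$ is precisely a point of the closed affine subspace $\mathrm{aff}(\gT)$, and since entrywise nonnegativity together with unit row sums carves out the compact set $\gP$ of all $m\times m$ transition matrices, the first step is to record the identity
\[\SEP=\mathrm{aff}(\gT)\cap\gP\,,\]
whence $\SEP$ is closed. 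I will lean on two elementary facts: (i) by the mixed-product property $(\mP_A\otimes\mP_B)(\mQ_A\otimes\mQ_B)=(\mP_A\mQ_A)\otimes(\mP_B\mQ_B)$ a product of separable matrices is separable, so every power of a separable matrix is separable and $\mI=\mI\otimes\mI$ is separable; and (ii) any affine combination of separable matrices lies in $\mathrm{aff}(\gT)$, so the limit of a convergent sequence of such combinations lies in $\mathrm{aff}(\gT)$.

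For the easy direction ($\mP$ separable $\Rightarrow$ $\mR$ separable) I would expand $\mR=(1-\gamma)\sum_{t=0}^\infty\gamma^t\mP^t$ and form the mass-corrected truncations
\[\mR_n\coloneqq(1-\gamma)\sum_{t=0}^n\gamma^t\mP^t+\gamma^{n+1}\mI\,,\]
which are genuine \emph{convex} combinations of the separable transition matrices $\mP^0,\dots,\mP^n,\mI$ and therefore belong to $\SEP$. Since $\mR_n\to\mR$ and $\SEP$ is closed, $\mR$ is separable.

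The substance is the converse. Solving for $\mP$ gives $\mP=\tfrac1\gamma\big(\mI-(1-\gamma)\mR^{-1}\big)$, so the task is to write $\mR^{-1}$ as a limit of polynomials in $\mR$ (each of which is separable) and then invoke closedness. The representation $\mR^{-1}=\sum_{k=0}^\infty(\mI-\mR)^k$ holds exactly when the spectral radius of $\mI-\mR$ is below $1$, and certifying this is the heart of the argument. Here I would apply the spectral mapping $\lambda\mapsto\mu=\tfrac{1-\gamma}{1-\gamma\lambda}$: as $\mP$ is stochastic its eigenvalues obey $|\lambda|\le1$, and a short computation shows that this Möbius map sends the closed unit disk onto the disk centred at $\tfrac{1}{1+\gamma}$ with radius $\tfrac{\gamma}{1+\gamma}$; hence every eigenvalue $1-\mu$ of $\mI-\mR$ satisfies $|1-\mu|\le\tfrac{2\gamma}{1+\gamma}<1$. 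This is exactly the ``careful operator-norm analysis'' promised in the sketch. With convergence secured, the partial sums
\[\mP_n\coloneqq\tfrac1\gamma\mI-\tfrac{1-\gamma}{\gamma}\sum_{k=0}^n(\mI-\mR)^k\]
are affine combinations of the separable matrices $\mI,\mR,\dots,\mR^n$—the coefficients sum to $1$, as one sees by substituting the scalar $1$ for $\mR$—so $\mP_n\in\mathrm{aff}(\gT)$. Letting $n\to\infty$, $\mP_n\to\mP$; since $\mathrm{aff}(\gT)$ is closed and $\mP$ is itself a transition matrix, $\mP\in\mathrm{aff}(\gT)\cap\gP=\SEP$, i.e.\ $\mP$ is separable.

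The main obstacle is precisely the spectral estimate in the converse: the naive bound $\norm{\mI-\mR}_\infty\le2$ does not give a contraction, and one genuinely needs the Möbius-image computation (or an equivalent weighted operator norm) to force the spectral radius of $\mI-\mR$ strictly below $1$ and thereby justify inverting the Neumann series. Once that estimate is in place, both implications collapse to the same template: exhibit the target as a limit of affine (or convex) combinations of separable matrices and close up using compactness of $\SEP$.
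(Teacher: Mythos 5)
Your proof is correct and takes essentially the same route as the paper: the forward direction via the Neumann series of $\gamma\mP$, and the converse by inverting the Neumann series $\mR^{-1}=\sum_{k\ge0}(\mI-\mR)^k$ after certifying $\rho(\mI-\mR)<1$ from the fact that the eigenvalues of a stochastic matrix lie in the closed unit disk. Your M\"obius-disk image computation (yielding the uniform bound $\tfrac{2\gamma}{1+\gamma}$) is a repackaging of the paper's direct modulus estimate $\abs{1-\tfrac{1-\gamma}{1-\gamma\lambda}}\leq\sqrt{1-\tfrac{(1-\gamma)^2}{(1-\gamma a)^2+\gamma^2b^2}}<1$, and your affine-hull-plus-closedness bookkeeping plays exactly the role of the paper's finite-dimensional (hence closed) span $\Omega_{P^{\otimes2}}$, where the coefficient condition $\sum_j x_j=1$ is automatic for stochastic matrices since multiplying any span representation by $\ve$ forces the coefficients to sum to one.
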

\begin{proof}
    ($\Rightarrow$) One can verify that $(\mI-\gamma\mP) \ve=(1-\gamma)\ve$, which implies $(1-\gamma)(\mI-\gamma \mP)^{-1}$ is a transition matrix. Moreover, $(1-\gamma)(\mI-\gamma \mP)^{-1}=(1-\gamma)\sum_{i=0}^\infty (\gamma\mP)^i$ falls in $\Omega_{P^{\otimes2}}$ as $\mP\in \Omega_{P^{\otimes2}}$.

    ($\Leftarrow$) This side is more involved. Denote $\mU\coloneqq(1-\gamma)(\mI-\gamma \mP)^{-1}$. Then if the spectral radius $\rho(\mI-\mU)<1$, then \[U^{-1}=\pa{\mI-\pa{\mI-\mU}}^{-1}=\sum_{i=0}^\infty (\mI-\mU)^i\in\Omega_{P^{\otimes2}}\,.\]
    This implies $U^{-1}=\frac{1}{1-\gamma} (\mI-\gamma \mP)\in\Omega_{P^{\otimes2}}$ and thus $\mP\in\Omega_{P^{\otimes2}}$, finishing the proof. It then suffices to show $\rho(\mI-\mU)<1$. Notice that
    \begin{align*}
        \lambda_i(\mI-\mU)=1-\lambda_i(\mU)=1-\frac{1-\gamma}{\lambda(\mI-\gamma\mP)}=1-\frac{1-\gamma}{1-\gamma \lambda_i(\mP)}\,.
    \end{align*}
    Let $\lambda_i(\mP)=a+bi$ and taking modulus for both side
    \begin{align*}
        \abs{\lambda_i(\mI-\mU)}&=\abs{\frac{\gamma-\gamma\lambda_i(\mP)}{1-\gamma \lambda_i(\mP)}}\\
        &= \sqrt{\frac{\gamma^2(1-a)^2+\gamma^2b^2}{(1-\gamma a)^2+\gamma^2b^2}}\\
        &=\sqrt{1+\frac{(1-\gamma)(2a\gamma-\gamma-1)}{(1-\gamma a)^2+\gamma^2b^2}}\\
        &\leq \sqrt{1-\frac{(1-\gamma)^2}{(1-\gamma a)^2+\gamma^2b^2}}<1\,.
    \end{align*}
    We conclude the proof given $\rho(\mI-\mU)=\max_i\abs{\lambda_i(\mI-\mU)}<1$.
\end{proof}

\paragraph{Step 3: Put it together}{}

By Lemma~\ref{lem: connection to inverse}, if $\mP^\pi_{AB}$ is entangled, then $(1-\gamma)(\mI-\gamma \mP^\pi_{AB})^{-1}$ is also entangled. Then there exists $\mY\in \Omega^\prime \ne \vzero$ such that $\Tr(\mY^\top(\mI-\gamma \mP^\pi_{AB})^{-1})\ne 0$. We apply singular value decomposition to all $W^1_{1:j},W^2_{1:j}$ and conclude there exists some $j$ and $\vu,\vv\in\R^m$ such that either $\Tr(\pa{ \ve\varepsilon_j^\top}\otimes \pa{\vv\vu^\top} (\mI-\gamma \mP^\pi_{AB})^{-1})\ne0$ or $\Tr( \pa{\vv\vu^\top}\otimes \pa{\ve\varepsilon_j^\top}(\mI-\gamma \mP^\pi_{AB})^{-1})\ne0$. We assume the former without loss of generality, it holds
\[(\varepsilon_j^\top\otimes\vu^\top)(\mI-\gamma \mP^\pi_{AB})^{-1} (\ve\otimes\vv)\ne 0\,.\]
Now set $\vr_A=\vzero$ and $\vr_B=\vv$. Since $Q^\pi_{AB}$ is decomposable, there exists some local function $Q_A,Q_B$ such that
\[(\mI-\gamma \PAB)^{-1} (\ve\otimes \vv)=Q_A(\vzero)\otimes\ve + \ve\otimes Q_B(\vv)\,.\]
Left multiply by $(\varepsilon_j^\top\otimes\vu^\top)$, we have 
\begin{align*}
    (\varepsilon_j^\top\otimes\vu^\top)(\mI-\gamma \mP^\pi_{AB})^{-1} (\ve\otimes\vv)=(\varepsilon_j^\top\otimes\vu^\top)(Q_A(\vzero)\otimes\ve)\ne0\,,
\end{align*}
Then set $\vr_A=\vzero$ and $\vr_B=-\vv$, we can similarly derive
\begin{align*}
    -(\varepsilon_j^\top\otimes\vu^\top)(\mI-\gamma \mP^\pi_{AB})^{-1} (\ve\otimes\vv)=(\varepsilon_j^\top\otimes\vu^\top)(Q_A(\vzero)\otimes\ve)\ne0\,,
\end{align*}
This gives use $(\varepsilon_j^\top\otimes\vu^\top)(Q_A(\vzero)\otimes\ve)=0$, which is a contradiction.

\section{Decomposition via general functions}\label{app: decomp_general}
Entangled $\mP$ precludes the local decomposition with local value functions, but may admit decompositions with more general functions. Consider \[\mP=\frac{1}{4}\left(e e^{\top}\right) \otimes\left(e e^{\top}\right)+\delta\left(\epsilon e^{\top}\right) \otimes\left(e \epsilon^{\top}\right)\,,\]
where $e=[1,1], \epsilon=[1-1]$. Clearly such $\mP$ is entangled. We also have $\mP^k=\frac{1}{4}\left(e e^{\top}\right) \otimes\left(e e^{\top}\right)$ for $k \geq 2$. Then $(I-\gamma P)^{-1}=\mI+\frac{\gamma+\gamma^2}{4}\left(e e^{\top}\right) \otimes\left(e e^{\top}\right)+\delta \gamma\left(\epsilon e^{\top}\right) \otimes\left(e \epsilon^{\top}\right)$. Then for any $\vr_A, \vr_B$, we have \[(\mI-\gamma \mP)^{-1}\left(\vr_A \otimes e+e \otimes \vr_B\right)=\vr_A \otimes e+h_A\left(\gamma+\gamma^2\right) / 2 e \otimes e+\vr_B \otimes e+h_B\left(\gamma+\gamma^2\right) / 2 e \otimes e+2 \delta \gamma\left(\epsilon^{\top} \vr_B\right) \epsilon \otimes e\,,\]
where $h_A=e^{\top} \vr_A, h_B=e^{\top} \vr_B$.
\section{Proof of Theorem~\ref{thm: two-agent atv}}\label{app: Proof of atv}
Let $\mP_A, \mP_B$ be the optimal solution to~\eqref{eq: degree of independent} w.r.t agent $A,B$. For any subset of state-action pairs of agent $A$, $\gF\subseteq \gS_A\times\gA_A$, we have
\begin{align*}
    &\abs{\sum_{s_A^\prime,a_A^\prime\in \gF}\pa{\mP^\pi_A-\mP_A}_{(s_A^\prime,a_A^\prime\mid s_A,a_A)}}\\
=&\abs{\sum_{s_A^\prime,a_A^\prime\in \gF}\sum_{s_B^\prime,a_B^\prime}\sum_{s_B,a_B} \pa{\mP^\pi_{AB}-\mP_A\otimes \mP_B}_{(\vs^\prime,\va^\prime\mid \vs,\va)} \mu^\pi_{AB}(s_B,a_B\mid s_A,a_A)}\\
\leq&\sum_{s_B,a_B} \abs{ \sum_{s_A^\prime,a_A^\prime\in \gF}\sum_{s_B^\prime,a_B^\prime}\pa{\mP^\pi_{AB}-\mP_A\otimes \mP_B}_{(\vs^\prime,\va^\prime\mid \vs,\va)} } \mu^\pi_{AB}(s_B,a_B\mid s_A,a_A)\\
\leq&  \sum_{s_B,a_B} E_A(\PAB) \mu^\pi_{AB}(s_B,a_B\mid s_A,a_A) = E_A(\PAB)
\end{align*}
where the last inequality follows from the definition of agent-wise total variation distance. Since the result holds for any $\gF$ and $(s_A,a_A)\in\gS_A\times\gA_A$, we have  
\[\norm{\mP^\pi_A-\mP_A }_{\rm{TV}}\leq E_A(\PAB)\,,\]and similar results hold for $\mP^\pi_B$.

Next we have 

    \begin{align*}
        &\pa{\mI-\gamma \mP^\pi_{AB}}^{-1} (\vr_A\otimes \ve) - \pa{\pa{\mI-\gamma \mP^\pi_A}^{-1}\vr_A} \otimes \ve\\
        = & \pa{\mI-\gamma \mP^\pi_{AB}}^{-1} (\vr_A\otimes \ve)- \pa{\mI-\gamma\mP_A\otimes\mP_B  }^{-1} (\vr_A\otimes \ve) \\
        &\qquad +\pa{\mI-\gamma\mP_A\otimes\mP_B  }^{-1} (\vr_A\otimes \ve) - \pa{\pa{\mI-\gamma \mP^\pi_A}^{-1}\vr_A} \otimes \ve\\
        \stackrel{(i)}{=} & \underbrace{(\mI-\gamma \mP^\pi_{AB})^{-1} (\vr_A\otimes \ve)- \pa{\mI-\gamma\mP_A\otimes\mP_B  }^{-1} (\vr_A\otimes \ve)}_{(I)} \\
        &\qquad +\underbrace{\pa{\pa{\mI-\gamma\mP_A }^{-1} \vr_A}\otimes \ve - \pa{\pa{\mI-\gamma \mP^\pi_A}^{-1}\vr_A} \otimes \ve}_{(II)}\\
    \end{align*}
    where $(i)$ also follows the same ``absorbing'' technique in the proof of Theorem~\ref{thm: mixed_state}.

    For $(I)$, apply Lemma~\ref{lem: matrix inverse}, it holds
    \begin{align*}
        &\norm{(\mI-\gamma \mP^\pi_{AB})^{-1} (\vr_A\otimes \ve)- \pa{\mI-\gamma\mP_A\otimes\mP_B  }^{-1} (\vr_A\otimes \ve)}_\infty\\
        =&\norm{(\mI-\gamma \mP^\pi_{AB})^{-1}\pa{\gamma\mP^\pi_{AB}-\gamma\mP_A\otimes\mP_B} \pa{\mI-\gamma\mP_A\otimes\mP_B  }^{-1} (\vr_A\otimes \ve) }_\infty\\
        \leq & \norm{(\mI-\gamma \mP^\pi_{AB})^{-1}}_\infty\norm{\pa{\gamma\mP^\pi_{AB}-\gamma\mP_A\otimes\mP_B} \pa{\pa{\mI-\gamma\mP_A }^{-1} \vr_A}\otimes \ve }_\infty\\
        \overset{(i)}{\leq} & \norm{(\mI-\gamma \mP^\pi_{AB})^{-1}}_\infty 2\gamma E_A(\PAB) \norm{\pa{\mI-\gamma\mP_A }^{-1} \vr_A}_\infty\\
        \leq & \frac{2\gamma E_A(\PAB)r_{\max}^A}{1-\gamma}\norm{(\mI-\gamma \mP^\pi_{AB})^{-1}}_\infty
    \leq  \frac{2\gamma E_A(\PAB) r_{\max}^A}{(1-\gamma)^2}\,,
    \end{align*}
    where $(i)$ follows by the definition of agent-wise total variation distance when $\|\vr_A\|_\infty\ne0$, and also trivially hold when $\|\vr_A\|_\infty=0$. Similarly, for $(II)$ we have
    \begin{align*}
        &\norm{\pa{\pa{\mI-\gamma\mP_A }^{-1} \vr_A}\otimes \ve - \pa{\pa{\mI-\gamma \mP^\pi_A}^{-1}\vr_A} \otimes \ve}_\infty\\
        =&\norm{\pa{\pa{\mI-\gamma\mP_A}^{-1} -\pa{\mI-\gamma \mP^\pi_A}^{-1}}\vr_A }_\infty\\
        =&\norm{\pa{\mI-\gamma \mP^\pi_A}^{-1}\pa{\gamma\mP^\pi_A-\gamma\mP_A}\pa{\mI-\gamma\mP_A}^{-1}\vr_A }_\infty\\
        \leq & \frac{2\gamma E_A(\PAB) r_{\max}^A}{(1-\gamma)^2}\,.
    \end{align*}
    Then we have 
    \begin{align*}
        \norm{\pa{\mI-\gamma \mP^\pi_{AB}}^{-1} (\vr_A\otimes \ve) - \pa{\pa{\mI-\gamma \mP^\pi_A}^{-1}\vr_A} \otimes \ve}_\infty \leq \frac{4\gamma E_A(\PAB) r_{\max}^A}{(1-\gamma)^2}\,. 
    \end{align*}
    We can derive similar results for agent $B$, i.e.,
    \begin{align*}
        \norm{\pa{\mI-\gamma \mP^\pi_{AB}}^{-1} (\ve\otimes\vr_B) - \ve\otimes\pa{\pa{\mI-\gamma \mP^\pi_B}^{-1}\vr_B} }_\infty \leq \frac{4\gamma E_B(\PAB) r_{\max}^B}{(1-\gamma)^2}\,. 
    \end{align*}
    Put it all together we have 
    \[\Big\lVert Q^\pi_{AB}-\pa{Q^\pi_A\otimes \ve + \ve \otimes Q^\pi_B} \Big\rVert_\infty \leq \frac{4\gamma (E_A(\PAB)r_{\max}^A+E_B(\PAB)r^B_{\max})}{(1-\gamma)^2}\,.\]
Finally, the proof of Theorem~\ref{thm: two-agent value decomp} follows as an immediate corollary of Theorem~\ref{thm: two-agent atv}.
\section{Proof of Theorem~\ref{thm: rho-weighted decomp}}\label{app: proof of mu-weight}
We provide the proof for two agents here, one can easily generalize the proof to multi-agent scenarios. Compared to the proof of Theorem~\ref{thm: two-agent atv}, this proof follows similar framework and differs in several details. 

The first one is the following lemma for the ``localized'' stationary distribution
\begin{lemma}\label{lem: marginal_stationary_dis}
    $\mP^\pi_{A}$ has stationary distribution $\mu^\pi_A$ with \[\forall (s_A,a_A)\,,\,\mu^\pi_A(s_A,a_A)=\sum_{s_B,a_B}\mu^\pi_{AB}(s_A, s_B,a_A,a_B)\,.\]
\end{lemma}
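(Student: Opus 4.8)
The plan is to verify directly that the marginal distribution $\muA(s_A,a_A)\coloneqq\sum_{s_B,a_B}\muAB(s_A,s_B,a_A,a_B)$ satisfies the stationarity equation $\sum_{s_A,a_A}\muA(s_A,a_A)\,P^\pi_A(s_A',a_A'\mid s_A,a_A)=\muA(s_A',a_A')$ for the marginalized transition $\PA$ defined in~\eqref{eq: PA}. That $\muA$ is a probability vector is immediate, since it is a marginal of the global stationary distribution $\muAB$, so it remains only to check this fixed-point equation.

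The key step, and essentially the only nontrivial one, is the cancellation
\[\muA(s_A,a_A)\cdot\muAB(s_B,a_B\mid s_A,a_A)=\muAB(s_A,s_B,a_A,a_B)\,,\]
which is just the definition of the conditional occupancy measure. My plan is to substitute the definition~\eqref{eq: PA} of $P^\pi_A$ into the stationarity sum; the weight then appears as $\muA(s_A,a_A)\,\muAB(s_B,a_B\mid s_A,a_A)$, and the identity above replaces it by the global weight $\muAB(\vs,\va)$. The remaining sums over $s_A,a_A,s_B,a_B$ merge into a single sum over the global pair $(\vs,\va)$, turning the expression into $\sum_{s_B',a_B'}\sum_{\vs,\va}\muAB(\vs,\va)\,P^\pi_{AB}(\vs',\va'\mid\vs,\va)$. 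I would then invoke the global stationarity $\sum_{\vs,\va}\muAB(\vs,\va)\,P^\pi_{AB}(\vs',\va'\mid\vs,\va)=\muAB(\vs',\va')$ to collapse the inner sum to $\muAB(\vs',\va')$, and summing out the left-over next-state pair $(s_B',a_B')$ returns exactly $\muA(s_A',a_A')$ by the definition of the marginal.

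I expect no genuine obstacle: the argument is bookkeeping in which the marginal weight exactly absorbs the conditional occupancy measure built into the definition of $\PA$, reducing the local stationary flow to the already-known global one. The only point requiring care is the well-definedness of $\muAB(s_B,a_B\mid s_A,a_A)$ at pairs with zero occupancy; as in the model section these are excluded by restricting the feasible action set $\gA(s_A)$, so the cancellation identity holds on every retained state-action pair and the computation goes through verbatim.
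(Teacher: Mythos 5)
Your proposal is correct and is essentially the paper's own proof: the paper likewise verifies the fixed-point equation by substituting the definition \eqref{eq: PA} of $\PA$, using the identity $\muA(s_A,a_A)\,\mu^\pi_{AB}(s_B,a_B\mid s_A,a_A)=\mu^\pi_{AB}(s_A,s_B,a_A,a_B)$ (presented there as a summation-swap that integrates out the conditional), invoking global stationarity of $\mu^\pi_{AB}$, and marginalizing the residual next-state pair $(s_B',a_B')$. The only cosmetic difference is that the paper carries the intermediate dummy variables $(s_B'',a_B'')$ explicitly rather than merging the sums immediately, and your remark on zero-occupancy pairs matches the paper's convention of restricting the feasible action set.
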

In other words, the local stationary distribution of each agent is exactly the marginal distribution of global $\mu^\pi_{AB}$.
\begin{proof}[Proof of Lemma~\ref{lem: marginal_stationary_dis}]
    We proof by verify the definition of stationary distribution. For any $(s_A^\prime,a_A^\prime)$, it holds
    \begin{align*}
        &\sum_{s_A,a_A} \pa{\sum_{s_B,a_B} \mu^\pi_{AB}(s_A, s_B,a_A,a_B)} P^\pi(s_A^\prime,a_A^\prime\mid s_A,a_A)\\
        =&\sum_{s_A,a_A} \sum_{s_B,a_B} \mu^\pi_{AB}(s_A, s_B,a_A,a_B)\sum_{s_B^\prime,a_B^\prime} \sum_{s_B^{\prime\prime},a_B^{\prime\prime}}P^\pi\pa{s_A^{\prime},s_B^\prime,a_A^\prime,a_B^\prime\mid s_A,s_B^{\prime\prime},a_A,a_B^{\prime\prime}} \mu^\pi_{AB}(s_B^{\prime\prime},a_B^{\prime\prime}\mid s_A,a_A)\\
        =&\sum_{s_A,a_A} \sum_{s_B,a_B} \mu^\pi_{AB}(s_B,a_B\mid s_A,a_A)\sum_{s_B^\prime,a_B^\prime} \sum_{s_B^{\prime\prime},a_B^{\prime\prime}}P^\pi\pa{s_A^\prime,s_B^\prime,a_A^\prime,a_B^\prime\mid s_A,s_B^{\prime\prime},a_A,a_B^{\prime\prime}} \mu^\pi_{AB}(s_A, s_B^{\prime\prime},a_A,a_B^{\prime\prime})\\
        =& \sum_{s_A,a_A} \sum_{s_B^\prime,a_B^\prime} \sum_{s_B^{\prime\prime},a_B^{\prime\prime}}P^\pi\pa{s_A^\prime,s_B^\prime,a_A^\prime,a_B^\prime\mid s_A,s_B^{\prime\prime},a_A,a_B^{\prime\prime}} \mu^\pi_{AB}(s_A, s_B^{\prime\prime},a_A,a_B^{\prime\prime})\\
        =&\sum_{s_B^\prime,a_B^\prime}\mu^\pi_{AB}(s_A^\prime, s_B^\prime,a_A^\prime,a_B^\prime)\,.
    \end{align*}
    where the last equation follows from the definition of $\mu^\pi_{AB}$. Hence we conclude that $\sum_{s_B,a_B}\mu^\pi_{AB}(s_A, s_B,a_A,a_B)$ is a stationary distribution of $\mP^\pi_A$.
\end{proof}
We are then ready to prove Theorem~\ref{thm: rho-weighted decomp}. We first note that similar to ATV distance in~\eqref{eq: degree of independent}, the optimal solution to $E_A(\PAB)$ w.r.t $\muAB$-weighted ATV distance also only depends on $\mP_A$. Thus, let $\mP_A,\mP_B$ be the optimal solutions to $E_A(\PAB),E_B(\PAB)$ respectively. 

Let $\vx\in\R^{|\gS_A||\gA_A|}$ with $\|\vx\|_\infty=1$. Following the same technique in the proof of Theorem~\ref{thm: rho-weighted decomp}, we have
\begin{align*}
    &\mu^{\pi^\top}_A\abs{\pa{\mP^\pi_{A}-\mP_A}\vx}\\
    =&\sum_{s_A,a_A}\mu^\pi_A(s_A,a_A)\abs{\sum_{s_A^\prime,a_A^\prime}\pa{\mP^\pi_A-\mP_A}_{(s_A^\prime,a_A^\prime\mid s_A,a_A)}\vx(s_A^\prime,a_A^\prime)}\\
=&\sum_{s_A,a_A}\mu^\pi_A(s_A,a_A)\abs{\sum_{s_A^\prime,a_A^\prime}\vx(s_A^\prime,a_A^\prime)\sum_{s_B^\prime,a_B^\prime}\sum_{s_B,a_B} \pa{\mP^\pi_{AB}-\mP_A\otimes\mP_B}_{(\vs^\prime,\va^\prime\mid \vs,\va)} \mu^\pi_{AB}(s_B,a_B\mid s_A,a_A)}\\
\leq&\sum_{\vs,\va}  \abs{\sum_{s_A^\prime,a_A^\prime}\vx(s_A^\prime,a_A^\prime)\sum_{s_B^\prime,a_B^\prime}\pa{\mP^\pi_{AB}-\mP_A\otimes\mP_B}_{(\vs^\prime,\va^\prime\mid \vs,\va)} } \mu^\pi_{AB}(\vs,\va)\leq 2 E_A(\PAB)
\end{align*}
where the second last inequality follows from Lemma~\ref{lem: marginal_stationary_dis}. We then conclude  
\[\norm{\mP^\pi_A-\mP_A }_{\mu,\infty}\leq 2E_A(\PAB)\,,\]and similar results hold for $\mP^\pi_B$. We then apply the decomposition

\begin{align*}
        &\pa{\mI-\gamma \mP^\pi_{AB}}^{-1} (\vr_A\otimes \ve) - \pa{\pa{\mI-\gamma \mP^\pi_A}^{-1}\vr_A} \otimes \ve\\
        = & \underbrace{(\mI-\gamma \mP^\pi_{AB})^{-1} (\vr_A\otimes \ve)- \pa{\mI-\gamma\mP_A\otimes\mP_B  }^{-1} (\vr_A\otimes \ve)}_{(I)} \\
        &\qquad +\underbrace{\pa{\pa{\mI-\gamma\mP_A  }^{-1} \vr_A}\otimes \ve - \pa{\pa{\mI-\gamma \mP^\pi_A}^{-1}\vr_A} \otimes \ve}_{(II)}\\
\end{align*}
For $(I)$, we have
\begin{align*}
    &\norm{(\mI-\gamma \mP^\pi_{AB})^{-1} (\vr_A\otimes \ve)- \pa{\mI-\gamma\mP_A\otimes\mP_B  }^{-1} (\vr_A\otimes \ve)}_{\muAB}\\
        =&\norm{(\mI-\gamma \mP^\pi_{AB})^{-1}\pa{\gamma\mP^\pi_{AB}-\gamma\mP_A\otimes\mP_B} \pa{\mI-\gamma\mP_A\otimes\mP_B  }^{-1} (\vr_A\otimes \ve) }_{\muAB}\\
        \overset{(i)}{\leq}&\frac{1}{1-\gamma}\norm{\pa{\pa{\gamma\mP^\pi_{AB}-\gamma\mP_A\otimes\mP_B} \pa{\mI-\gamma\mP_A  }^{-1}\vr_A }\otimes \ve }_{\muAB}\\
        \leq& \frac{2\gamma E(\pi)}{1-\gamma}\norm{\pa{\mI-\gamma\mP_A  }^{-1} \vr_A}_\infty\leq \frac{2\gamma E(\pi) r_{\max}}{(1-\gamma)^2}\,,
\end{align*}
where $(i)$ follows from the fact that for any $\vx$
\[\|\mP\vx\|_\mu=\mu^\top|\mP\vx|\leq \mu^\top\mP|\vx|=\mu^\top|\vx|=\|\vx\|_\mu\,.\]

For $(II)$ one can use Lemma~\ref{lem: marginal_stationary_dis} to verify 
\begin{align*}
    &\norm{\pa{\pa{\mI-\gamma\mP_A  }^{-1} \vr_A}\otimes \ve - \pa{\pa{\mI-\gamma \mP^\pi_A}^{-1}\vr_A} \otimes \ve}_{\muAB}\\
    =&\norm{\pa{\mI-\gamma\mP_A  }^{-1} \vr_A - \pa{\mI-\gamma \mP^\pi_A}^{-1}\vr_A }_{\muA}
\end{align*}
And similar results to $(I)$ holds. We then conclude the proof of Theorem~\ref{thm: rho-weighted decomp}.

\section{Results for Multi-agent MDPs}\label{app: multi-agent}
For clarity, we use superscript $s^i$ to denote the $i$-th element in state space and subscript $s_i$ to represent the state at $i$-th arm. Furthermore, we denote $\gS^{-i}\coloneqq \gS\setminus s^i$ and $\vs\coloneqq s_{1:N}\coloneqq\{s_1,s_2,\ldots,s_N\}$ is the profile of $N$-arms.

Given any global policy $\pi$, for any agent $i\in[N]$,
\begin{equation*}
    P^\pi_i(s_i^\prime, a_i^\prime\mid s_i,a_i)= \sum_{s_{-i}^\prime,a_{-i}^\prime} \sum_{s_{-i},a_{-i}}P^\pi_{1:N}\pa{s_{1:N}^\prime,a_{1:N}^\prime\mid s_{1:N},a_{1:N}} \rho_{1:N}^\pi(s_{-i},a_{-i}\mid s_i,a_i) \,.
\end{equation*}

\begin{definition}[Measure of Multi-agent Markov Entanglement]
 Consider a $N$-agent Markov system $\gM_{1:N}$ with joint state space $\gS=\times_{i=1}^N\gS_i$ and action space $\gA=\times_{i=1}^N\gA_i$. Given any policy $\pi\colon \gS \to \Delta(\gA)$, the measure of Markov entanglement of $N$ agents is
\begin{equation*}
    E(\PN) = \min_{\mP\in\SEP } d(\mP^\pi_{1:N}, \mP)\,,
\end{equation*}
where $d(\cdot,\cdot)$ is some distance measure.
\end{definition}

The following theorem generalizes the results of value-decomposition for two-agent Markov systems in Theorem~\ref{thm: two-agent atv} to multi-agent Markov systems. 
\begin{theorem}\label{thm: multi-agent value decomp}
    Consider a $N$-agent MDP $\gM_{1:N}$ and policy $\pi\colon \gS \to \Delta(\gA)$ with the measure of Markov entanglement $E_i(\PN)$ w.r.t ATV distance, it holds for any agent $i$,
    \[\norm{\mP^\pi_i-\mP_i }_{\infty}\leq 2_iE(\PN)\,.\]
    where $\mP_i$ is the optimal solution of \eqref{eq: Multi-agent Markov Entanglement}. Furthermore, the decomposition error is entry-wise bounded by the measure of Markov entanglement,
    \[\norm{ Q^\pi_{1:N}(\vs,\va)-\sum_{i=1}^N Q^\pi_i(s_i,a_i) }_\infty \leq \frac{4\gamma \pa{\sum_{i=1}^NE_i(\PN)r_{\max}^i}}{(1-\gamma)^2}\,.\]
\end{theorem}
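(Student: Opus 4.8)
The plan is to mirror the two-agent argument of Appendix~\ref{app: Proof of atv}, handling each agent $i$ in isolation and then summing the resulting estimates by linearity of the resolvent $(\mI-\gamma\PN)^{-1}$. The first observation I would record is that the agent-wise total variation distance w.r.t.\ agent $i$ constrains only the marginal transition of agent $i$, so---exactly as in the two-agent reduction \eqref{eq: degree of independent}---the optimal separable matrix attaining $E_i(\PN)$ may be taken in the product form $\tilde\mP^{(i)}=\bigotimes_{k=1}^N\mP_k^{(i)}$, where the $i$-th factor $\mP_i^{(i)}=\mP_i$ is the ATV$_i$-optimal local transition and the remaining factors are arbitrary transition matrices (they will be absorbed and do not affect the ATV$_i$ distance).

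First I would establish the local transition bound $\norm{\mP^\pi_i-\mP_i}_\infty\le 2E_i(\PN)$. Starting from the marginalization identity for $\mP^\pi_i$ given in Appendix~\ref{app: multi-agent}, and repeating the subset-summation estimate of the two-agent proof---now marginalizing over all coordinates $s_{-i},a_{-i}$ in place of just $s_B,a_B$---one obtains $\norm{\mP^\pi_i-\mP_i}_{\rm TV}\le E_i(\PN)$, and the stated $\ell_\infty$ bound follows from the identity $\norm{\cdot}_\infty=2\norm{\cdot}_{\rm TV}$.

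Second I would decompose the error. Since $\vr_{1:N}=\sum_{i=1}^N(\ve\otimes)^{i-1}\vr_i(\otimes\ve)^{N-i}$, linearity of $(\mI-\gamma\PN)^{-1}$ reduces the problem to bounding, for each $i$, the term $(\mI-\gamma\PN)^{-1}\bigl((\ve\otimes)^{i-1}\vr_i(\otimes\ve)^{N-i}\bigr)-(\ve\otimes)^{i-1}\bigl((\mI-\gamma\mP^\pi_i)^{-1}\vr_i\bigr)(\otimes\ve)^{N-i}$. Inserting the intermediate resolvent $(\mI-\gamma\tilde\mP^{(i)})^{-1}$ splits this into $(I_i)+(II_i)$ precisely as in the two-agent case, where the absorbing technique of Theorem~\ref{thm: mixed_state}---valid because $\mP\ve=\ve$ for every transition matrix irrespective of the slot it occupies---collapses every factor other than $i$ into $\ve$. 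For $(I_i)$ I would apply Lemma~\ref{lem: matrix inverse} to rewrite the difference as $(\mI-\gamma\PN)^{-1}\gamma(\PN-\tilde\mP^{(i)})(\mI-\gamma\tilde\mP^{(i)})^{-1}(\cdots)$, use $\norm{(\mI-\gamma\PN)^{-1}}_\infty\le(1-\gamma)^{-1}$, and invoke the constrained-optimization form of the ATV$_i$ distance to control the perturbation through $\norm{(\PN-\tilde\mP^{(i)})(\text{embed }\vx)}_\infty\le 2E_i(\PN)\norm{\vx}_\infty$ with $\vx=(\mI-\gamma\mP_i)^{-1}\vr_i$, giving $\norm{(I_i)}_\infty\le 2\gamma E_i(\PN)r_{\max}^i/(1-\gamma)^2$. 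For $(II_i)$ I would again use Lemma~\ref{lem: matrix inverse} together with the Step-one bound $\norm{\mP^\pi_i-\mP_i}_\infty\le 2E_i(\PN)$ and $\norm{(\mI-\gamma\mP^\pi_i)^{-1}}_\infty\le(1-\gamma)^{-1}$ to obtain the identical estimate. Summing $\norm{(I_i)}_\infty+\norm{(II_i)}_\infty$ over $i\in[N]$ produces the claimed bound $4\gamma\sum_{i=1}^N E_i(\PN)r_{\max}^i/(1-\gamma)^2$.

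The main obstacle is notational rather than analytic: with agent $i$ occupying an arbitrary position $1\le i\le N$, I must carefully propagate the Kronecker embedding $(\ve\otimes)^{i-1}(\cdot)(\otimes\ve)^{N-i}$ through both the absorbing step and the ATV$_i$ optimization, and verify via the mixed-product property that, once the other factors are absorbed, the perturbation $\PN-\tilde\mP^{(i)}$ acts only on agent $i$'s coordinates. Once this bookkeeping is in place, each individual estimate is a verbatim transcription of the two-agent argument, so no new analytic ingredient beyond Theorem~\ref{thm: two-agent atv} and Lemma~\ref{lem: matrix inverse} is needed.
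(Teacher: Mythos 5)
Your proposal is correct and follows essentially the same route as the paper: the paper proves this theorem by stating the generalized absorbing lemma $\pa{\sum_{j} x_j\mP_1^{(j)}\otimes \cdots \otimes\mP_N^{(j)} }\pa{(\ve\otimes)^{i-1}\vr_i(\otimes \ve)^{N-i} } = (\ve\otimes)^{i-1}\pa{\sum_{j} x_j \mP_i^{(j)} \vr_i}(\otimes \ve)^{N-i}$ and then invoking the two-agent argument of Theorem~\ref{thm: two-agent atv} agent by agent, which is precisely your plan (including the $(I_i)+(II_i)$ split via Lemma~\ref{lem: matrix inverse} and the ATV$_i$ perturbation bound). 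Your additional observations---that the ATV$_i$-optimum may be taken as a pure product with the $i$-th factor $\mP_i$, and that the Kronecker embedding $(\ve\otimes)^{i-1}(\cdot)(\otimes\ve)^{N-i}$ propagates through the absorbing step---are exactly the bookkeeping the paper leaves implicit, so no discrepancy arises.
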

The proof mainly follows the following lemma, which generalizes the key technique used in Theorem~\ref{thm: mixed_state}. 
\begin{lemma}
For any agent $i$, it holds
    \begin{equation}
        \pa{\sum_{j=1}^K x_j\mP_1^{(j)}\otimes \mP_2^{(j)}\otimes\cdots \otimes\mP_N^{(j)} }\cdot\pa{(\ve\otimes)^{i-1}\vr_i(\otimes \ve)^{N-i} } = (\ve\otimes)^{i-1}\pa{\sum_{j=1}^K x_j \mP_i^{(j)} \vr_i}(\otimes \ve)^{N-i}\,.
    \end{equation}
\end{lemma}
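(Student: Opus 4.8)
The plan is to reproduce, in the $N$-fold setting, the ``absorbing'' technique already used in the proof of Theorem~\ref{thm: mixed_state}. The argument relies only on the two elementary properties of the tensor product recorded in Appendix~\ref{app: linaer algebra}---bilinearity and the mixed-product property---together with the row-stochasticity identity $\mP\ve=\ve$, valid for every transition matrix $\mP$. The key structural observation is that the vector $(\ve\otimes)^{i-1}\vr_i(\otimes \ve)^{N-i}$ is itself a pure tensor product, carrying $\ve$ in every slot except the $i$-th, where $\vr_i$ sits.

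First I would use bilinearity to pull the finite sum $\sum_{j=1}^K x_j$ outside the matrix-vector product, reducing the claim to a single product term; that is, it suffices to establish
\[
\pa{\mP_1^{(j)}\otimes\cdots\otimes\mP_N^{(j)}}\pa{(\ve\otimes)^{i-1}\vr_i(\otimes\ve)^{N-i}} = (\ve\otimes)^{i-1}\pa{\mP_i^{(j)}\vr_i}(\otimes\ve)^{N-i}
\]
for each fixed $j$. To handle this term I would invoke the mixed-product property. The two-factor version $(\mA\otimes\mB)(\mC\otimes\mD)=(\mA\mC)\otimes(\mB\mD)$ extends by a routine induction, using associativity of $\otimes$, to the $N$-factor statement $\pa{\bigotimes_{k=1}^N\mA_k}\pa{\bigotimes_{k=1}^N \vx_k}=\bigotimes_{k=1}^N(\mA_k\vx_k)$. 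Applying this factor-wise yields
\[
\pa{\mP_1^{(j)}\ve}\otimes\cdots\otimes\pa{\mP_{i-1}^{(j)}\ve}\otimes\pa{\mP_i^{(j)}\vr_i}\otimes\pa{\mP_{i+1}^{(j)}\ve}\otimes\cdots\otimes\pa{\mP_N^{(j)}\ve}\,.
\]
I would then collapse the off-diagonal factors using $\mP_k^{(j)}\ve=\ve$ for all $k\neq i$, which recovers $(\ve\otimes)^{i-1}(\mP_i^{(j)}\vr_i)(\otimes\ve)^{N-i}$ and completes the single-term identity.

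Finally, summing over $j$ with the weights $x_j$ and appealing once more to bilinearity, I would pull the sum back inside the $i$-th tensor slot, placing $\sum_{j=1}^K x_j\mP_i^{(j)}\vr_i$ there while leaving the surrounding $\ve$ factors untouched; this is exactly the right-hand side of the lemma. There is no genuine obstacle here: the entire content is bookkeeping with the tensor notation, and the only step meriting any care is the inductive promotion of the mixed-product property from two factors to $N$ factors, since Appendix~\ref{app: linaer algebra} states it only in the two-factor case.
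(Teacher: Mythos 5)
Your proposal is correct and follows exactly the route the paper takes: the paper dispatches this lemma with the remark that it ``follows from the property of tensor product,'' i.e.\ the same absorbing argument from Theorem~\ref{thm: mixed_state} that you reproduce (bilinearity to isolate each term $j$, the $N$-factor mixed-product identity, and $\mP\ve=\ve$ to collapse every slot except the $i$-th). Your only addition is to spell out the routine induction extending the mixed-product property from two factors to $N$, which the paper leaves implicit.
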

The lemma follows from the property of tensor product. 
\section{Proof of Theorem~\ref{thm: RMAB}}\label{app: index}

One caveat here is that we have to restrict chaotic behaviors in the mean-field limit. We thus introduce two technical assumptions.

We first define the transition of configuration under index policy $\pi$ as $\phi^\pi\colon\Delta^{|\gS|}\to\Delta^{|\gS|}$ such that
\[\phi^\pi(\vm)=\E\br{\vm[t+1]\mid \vm[t]=\vm, \pi}\,.\]
For $t>0$, we denote $\Phi_t\coloneqq (\phi^\pi)^t$ apply the transition mapping for $t$ rounds. 
\begin{assumption}[Uniform Global Attractor Property (UGAP)] 
    There exists a uniform global attractor $\vm^\ast$ of $\phi^\pi(\cdot)$, i.e. for all $\varepsilon > 0$, there exists $T(\varepsilon)$ such
that for all $t \geq T (\varepsilon)$ and all $\vm\in\Delta^{|\gS|}$, one has $\norm{\Phi_t(\vm)-\vm^\ast}_\infty<\varepsilon$.
\end{assumption}

The UGAP assumption ensures the uniqueness of $\vm^\ast$ and guarantees fast convergence from any initial $\vm$ to $\vm^\ast$.

\begin{assumption}[Non-degenerate RMAB]
There exists state $s\in\gS$ such that $0<\pi^\ast(s,0)<1$, where $\pi^\ast$ is the policy under $\vm^\ast$. 
\end{assumption}

The non-degenerate assumption further restricts cyclic behavior in the mean-field limit. 

Non-degenerate and UGAP are two standard technical assumptions for the index policy, which restrict chaotic behavior in asymptotic regime and will be further introduced in subsequent sections. We note here these two assumptions are also used in almost all theoretical work on index policies (\citealt{weber90on,Verloop2016AsymptoticallyOP,gast23exponential,gast24linear}).

\emph{Proof of Theorem~\ref{thm: RMAB}.} In the subsequent proof, we let $\nu_1>\nu_2>\nu_3>\cdots>\nu_{|S|}$. This does not lose generality in that we can always exchange state index. The proof consists of several steps
\paragraph{Step 1: Find $\vm^\ast$}
Recall the transition mapping for configurations $\phi^\pi\colon\Delta^{|\gS|}\to\Delta^{|\gS|}$,
\[\phi^\pi(\vm)=\E\br{\vm[t+1]\mid \vm[t]=\vm, \pi}\,.\]
Notice that the definition of $\phi^\pi$ does not depend on $N$. We adapt from Lemma B.1 in \cite{gast23exponential} defined specially for Whittle Index, 
\begin{lemma}[Piecewise Affine]\label{lem: piecewise affine}
    Given any index policy $\pi$, $\phi^\pi$ is a piecewise affine continuous function with $|\gS|$ affine pieces.
\end{lemma}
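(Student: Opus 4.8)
The plan is to write $\phi^\pi(\vm)$ in closed form by computing, for each local state, the expected fraction of agents that the index policy activates, and then to read off the piecewise-affine structure directly. First I would fix the activation budget fraction $\alpha = M/N$ and, as in the proof setup, order the states by decreasing priority $\nu_1 > \nu_2 > \cdots > \nu_{|\gS|}$, writing $m_j$ for the mass that $\vm$ places on the $j$-th state. Under the index policy the decision maker fills the budget greedily from the top of the priority list, so I would define the \emph{pivot} $k = k(\vm)$ as the unique index satisfying $\sum_{j<k} m_j < \alpha \le \sum_{j\le k} m_j$. Every agent in a state $j < k$ is then activated, every agent in a state $j > k$ is idle, and, because ties are broken by uniform random sampling, each agent in the pivot state $k$ is activated with the same marginal probability $p_k = (\alpha - \sum_{j<k} m_j)/m_k$. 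Since the agents are homogeneous, linearity of expectation collapses the tie-breaking and the random local transitions into
\[\phi^\pi(\vm)_{s'} = \sum_{j} m_j \bigl[y_j\, P_1(s'\mid j) + (1-y_j)\, P_0(s'\mid j)\bigr],\]
where $y_j = 1$ for $j<k$, $y_j = p_k$ for $j=k$, and $y_j = 0$ for $j>k$.

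Next I would make the affine structure explicit on each region. Let $R_k = \{\vm \in \Delta^{|\gS|} : \sum_{j<k} m_j < \alpha \le \sum_{j\le k} m_j\}$; these regions are cut out by the hyperplanes $\sum_{j\le k} m_j = \alpha$ and partition the simplex into $|\gS|$ pieces, one per possible pivot $k \in \{1,\dots,|\gS|\}$. The only nonlinearity in the display above is the product $m_k p_k$, but on $R_k$ the pivot identities $m_k p_k = \alpha - \sum_{j<k} m_j$ and $m_k(1-p_k) = \sum_{j\le k} m_j - \alpha$ hold, and substituting them gives
\[\phi^\pi(\vm)_{s'} = \sum_{j<k} m_j P_1(s'\mid j) + \Bigl(\alpha - \textstyle\sum_{j<k} m_j\Bigr) P_1(s'\mid k) + \Bigl(\textstyle\sum_{j\le k} m_j - \alpha\Bigr) P_0(s'\mid k) + \sum_{j>k} m_j P_0(s'\mid j),\]
which is manifestly affine in $\vm$ (the $\alpha$-terms are constants). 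Hence $\phi^\pi$ is affine on each $R_k$, giving the claimed $|\gS|$ affine pieces.

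Finally I would check continuity across the shared boundaries. On the face $\sum_{j\le k} m_j = \alpha$ separating $R_k$ from $R_{k+1}$, evaluating from the $R_k$ side forces $p_k = 1$ (state $k$ fully activated), while evaluating from the $R_{k+1}$ side forces $p_{k+1} = 0$ (state $k+1$ fully idle); in both cases exactly states $1,\dots,k$ are activated and $k+1,\dots,|\gS|$ are idle, so the two affine pieces agree on the face and $\phi^\pi$ is continuous. I expect the main obstacle to be purely the bookkeeping needed to make the pivot well-defined and the expectation-to-affine reduction airtight: handling degenerate configurations where $m_k = 0$ (so that $p_k$ must be fixed by a limiting/consistency argument rather than by division), treating the boundary budgets $\alpha \in \{0,1\}$, and verifying rigorously that uniform tie-breaking yields exactly the marginal activation probability $p_k$ for each individual agent, so that the per-state expected flow is indeed $m_j\bigl[y_j P_1(\cdot\mid j) + (1-y_j)P_0(\cdot\mid j)\bigr]$. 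This is precisely the content adapted from Lemma~B.1 of \cite{gast23exponential}.
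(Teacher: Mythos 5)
Your proof is correct, but note that the paper itself does not prove this lemma---it states it as an adaptation of Lemma B.1 in \cite{gast23exponential} (which is proved there for the Whittle index) and gives no argument of its own. Your derivation essentially reconstructs that cited argument from first principles, and it is sound: the pivot index $k(\vm)$ is well defined, the substitutions $m_k p_k = \alpha - \sum_{j<k} m_j$ and $m_k(1-p_k)=\sum_{j\le k}m_j-\alpha$ correctly eliminate the only nonlinear term on each region $R_k$, and the boundary check (both adjacent pieces activate exactly states $1,\dots,k$ on the face $\sum_{j\le k} m_j = \alpha$) gives continuity. Two of the caveats you flag in fact resolve themselves: under your strict-inequality convention $\sum_{j<k} m_j < \alpha \le \sum_{j\le k} m_j$, a pivot state necessarily has $m_k>0$, so the division defining $p_k$ never degenerates; and by exchangeability of the uniform tie-break each of the $Nm_k$ pivot agents has marginal activation probability exactly $p_k$, while the expected configuration depends only on the deterministic \emph{number} activated per state, so the expectation-to-affine reduction follows from linearity alone. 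The one point worth stating explicitly rather than leaving as bookkeeping is the domain: $\phi^\pi(\vm)=\E\br{\vm[t+1]\mid\vm[t]=\vm,\pi}$ is defined as a conditional expectation only on empirical configurations $\vm\in\Delta^{|\gS|}\cap\tfrac{1}{N}\mathbb{Z}^{|\gS|}$ (with $N\alpha=M$ integral), whereas the lemma treats $\phi^\pi$ as a function on all of $\Delta^{|\gS|}$; your closed-form affine expressions supply precisely the canonical extension, and this should be said. What your self-contained argument buys over the paper's bare citation is that it handles an \emph{arbitrary} index policy directly, making explicit the ``adaptation'' from the Whittle-index case that the paper only gestures at.
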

When the context is clear, we abbreviate $\phi^\pi$ as $\phi$. For any $\vm\in\Delta^{|\gS|}$, define $s(\vm)\in[|\gS|]$ be the state such that $\sum_{i=1}^{s(\vm)-1}\vm_i\leq \alpha<\sum_{i=1}^{s(\vm)}m_i$. Lemma \ref{lem: piecewise affine} characterizes for any $\vm\in\gZ_i\coloneqq\brk{\vm\in\Delta^{|\gS|}\mid s(\vm)=i}$, there exists $\mK_{s(\vm)}, \vb_{s(\vm)}$ such that \[\phi(\vm)=\mK_{s(\vm)}\vm+\vb_{s(\vm)}\,.\]
By Brouwer fixed point theorem, there exists a fixed point $\vm^\ast$ such that $\phi(\vm^\ast)=\vm^\ast$. The UGAP condition guarantees the uniqueness of $\vm^\ast$. Our choice of $\pi^\ast$ is the corresponding policy under $\vm^\ast$.

\paragraph{Step 2: Connecting policy entanglement with the deviation of stationary distribution} 
Combine Proposition~\ref{prop: entanglement of policy} with the RMAB model, we have 
\begin{lemma}\label{lem: policy_entangle with deviation}
The measure of Markov entanglement w.r.t $\mu^\pi_{1:N}$-weighted ATV distance is bounded by the deviation of mean-field configuration,
    \[E_i(\pi)\leq |\gS|^2\cdot\E\br{\|\vm-\vm^\ast\|_\infty}\,,\]
    where the expectation is taking over the stationary distribution $\vm\sim \mu^\pi_{1:N}$.
\end{lemma}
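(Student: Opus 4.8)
The plan is to combine Proposition~\ref{prop: entanglement of policy} (which applies since an RMAB is a WCMDP) with a Lipschitz analysis of the index rule around the mean-field fixed point $\vm^\ast$ from Step~1. First I would take the local comparison policy $\pi^\prime$ in Proposition~\ref{prop: entanglement of policy} to be the mean-field policy $\pi^\ast$ attached to $\vm^\ast$. The key structural fact is that, because agents are homogeneous and ties are broken by uniform sampling, the index policy is symmetric: the probability that agent $i$ (currently in local state $s$) is activated depends on the global state $\vs$ only through the empirical configuration $\vm$ and on $s$. Write this activation probability as $g_s(\vm)$; by definition $\pi^\ast(1\mid s)=g_s(\vm^\ast)$. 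Since each local action set is $\{0,1\}$, we have $\sum_{a_i}\abs{\pi(a_i\mid\vs)-\pi^\ast(a_i\mid s_i)}=2\abs{g_{s_i}(\vm)-g_{s_i}(\vm^\ast)}$, so Proposition~\ref{prop: entanglement of policy} gives
\[E_i(\PN)\le \E_{\vs\sim\mu^\pi_{1:N}}\br{\abs{g_{s_i}(\vm)-g_{s_i}(\vm^\ast)}}\,.\]

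Next I would eliminate the dependence on the specific agent $i$ via exchangeability. Because the index policy treats agents symmetrically and the agents are homogeneous, the stationary law $\mu^\pi_{1:N}$ is permutation invariant; hence, conditioned on the configuration $\vm$, agent $i$ sits in state $s$ with probability exactly $\vm_s$. Averaging yields the identity
\[\E_{\vs\sim\mu^\pi_{1:N}}\br{\abs{g_{s_i}(\vm)-g_{s_i}(\vm^\ast)}}=\E_{\vm}\br{\sum_{s\in\gS}\vm_s\,\abs{g_s(\vm)-g_s(\vm^\ast)}}\,.\]
This reweighting by $\vm_s$ is exactly what makes the estimate work: the raw activation probability $g_s$ is \emph{not} globally Lipschitz in $\vm$ (near the cutoff it behaves like $(\alpha-F_s(\vm))/\vm_s$, which blows up as $\vm_s\to0$, and it jumps by a full unit when the cutoff crosses state $s$), whereas the \emph{activated mass} $h_s(\vm):=\vm_s\,g_s(\vm)$ is well behaved.

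Then I would prove the pointwise Lipschitz estimate. Labelling states $1,\dots,|\gS|$ in decreasing priority order (as in Step~1), writing $\alpha=M/N$ and $F_s(\vm)=\sum_{j<s}\vm_j$ for the mass of strictly higher-priority states, the index rule gives the closed form $h_s(\vm)=\max\brk{0,\ \min\brk{\vm_s,\ \alpha-F_s(\vm)}}$. Both $\vm_s$ and $\alpha-F_s(\vm)$ are affine in $\vm$ with $\norm{\cdot}_\infty$-Lipschitz constants $1$ and at most $|\gS|-1$, and $\min/\max$ preserve the larger constant, so $h_s$ is globally $(|\gS|-1)$-Lipschitz — no jumps, no blow-up. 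I would then write $\vm_s\abs{g_s(\vm)-g_s(\vm^\ast)}=\abs{h_s(\vm)-\vm_s g_s(\vm^\ast)}\le\abs{h_s(\vm)-h_s(\vm^\ast)}+g_s(\vm^\ast)\abs{\vm_s-\vm^\ast_s}$, and bound the two pieces by $(|\gS|-1)\norm{\vm-\vm^\ast}_\infty$ and $\norm{\vm-\vm^\ast}_\infty$ (using $g_s(\vm^\ast)\le1$), giving $\vm_s\abs{g_s(\vm)-g_s(\vm^\ast)}\le|\gS|\,\norm{\vm-\vm^\ast}_\infty$. Summing over the $|\gS|$ states and substituting into the identity above yields $E_i(\PN)\le|\gS|^2\,\E\br{\norm{\vm-\vm^\ast}_\infty}$, as claimed.

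The main obstacle, and the real content of the argument, is precisely the non-Lipschitz nature of $g_s$: handling it directly would force an instance-dependent constant (through the non-degeneracy margin and a lower bound on $\vm^\ast_{s(\vm^\ast)}$), rather than the clean dimension-only factor $|\gS|^2$. The two-step remedy — extracting the weight $\vm_s$ from exchangeability, and analyzing the genuinely Lipschitz activated mass $h_s$ instead of $g_s$ — is what resolves this and produces the stated constant. I do not expect to need the non-degeneracy or UGAP assumptions for this lemma itself; they enter only afterwards, to guarantee uniqueness of $\vm^\ast$ and the $\gO(1/\sqrt N)$ decay of $\E\br{\norm{\vm-\vm^\ast}_\infty}$ that completes the proof of Theorem~\ref{thm: RMAB}.
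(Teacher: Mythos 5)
Your proposal is correct and follows essentially the same route as the paper: both apply Proposition~\ref{prop: entanglement of policy} with $\pi'=\pi^\ast$, use homogeneity/exchangeability of agents to rewrite the bound as $\E_{\vm}\bigl[\sum_{s}\vm_s\,\abs{\pi(a=1\mid s,\vm)-\pi^\ast(a=1\mid s)}\bigr]$, and then establish the pointwise bound $\vm_s\abs{g_s(\vm)-g_s(\vm^\ast)}\le\abs{\gS}\,\norm{\vm-\vm^\ast}_\infty$ before summing over states. Your closed-form activated mass $h_s(\vm)=\max\brk{0,\min\brk{\vm_s,\alpha-F_s(\vm)}}$ and its $(\abs{\gS}-1)$-Lipschitz property is just a cleaner formalization of the paper's $k_s,\ell_s$ counting argument (your $\abs{h_s(\vm)-h_s(\vm^\ast)}$ and $g_s(\vm^\ast)\abs{\vm_s-\vm^\ast_s}$ terms correspond exactly to its $\abs{k_s}/N$ and $\abs{\ell_s}/N$ bounds), so the two arguments coincide in substance.
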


\begin{proof}
Given the homogeneity of agents, we first demonstrate for any two agent $i,j$, it holds
\[\sum_{s_{1:N}} \mu^\pi(s_{1:N})\abs{\pi(a_i=a\mid s_{1:N})-\pi^\ast(a_i=a\mid s_i)}=\sum_{s_{1:N}} \mu^\pi(s_{1:N})\abs{\pi(a_j=a\mid s_{1:N})-\pi^\ast(a_j=a\mid s_i)}\,.\]
To see this, we first notice by the definition of index policy 
\[\abs{\pi(a_i=a\mid s_i=s, \vm)-\pi^\ast(a\mid s)}=\abs{\pi(a_j=a\mid s_{j}=s,\vm)-\pi^\ast(a\mid s)}\,.\]
It then suffices to prove $\sum_{s_i=s, s_{1:N}=\vm}\mu(s_{1:N})= \sum_{s_j=s, s_{1:N}=\vm}\mu(s_{1:N})$. If $\sum_{s_i=s, s_{1:N}=\vm}\mu(s_{1:N})\leq \sum_{s_j=s, s_{1:N}=\vm}\mu(s_{1:N})$, we can exchange the agent index of $i$ and $j$. This will result in the same stationary distribution and $\sum_{s_i=s, s_{1:N}=\vm}\mu(s_{1:N})\geq \sum_{s_j=s, s_{1:N}=\vm}\mu(s_{1:N})$ and thus the equation. 
We then rewrite the bound in Proposition~\ref{prop: entanglement of policy}, 
\begin{align*}
E(\pi)&\leq \frac{1}{2}\sup_i\sum_{s_{1:N}} \mu^\pi(s_{1:N})\sum_{a_i}\abs{\pi(a_i\mid s_{1:N})-\pi^\ast(a_i\mid s_i)}\\
&= \sup_i\sum_{s_{1:N}} \mu^\pi(s_{1:N})\abs{\pi(a_i=1\mid s_{1:N})-\pi^\ast(a_i=1\mid s_i)}\\
&= \frac{1}{N} \sum_{s_{1:N}} \mu^\pi(s_{1:N})\sum_{i=1}^N\abs{\pi(a_i=1\mid s_{1:N})-\pi^\ast(a_i=1\mid s_i)}\\
&=\sum_\vm \mu^\pi(\vm)\sum_{s\in\gS} \vm_s \abs{\pi(a=1\mid s,\vm)-\pi^\ast(a=1\mid s)}
\end{align*}
For any configuration $\vm$ and state $s$, we have 
\begin{align*}
    &\vm_s\abs{\pi(a=1\mid s,\vm)-\pi^\ast(a=1\mid s)}\\
    =& \vm_s\abs{\frac{\pi^\ast(a=1\mid s)\vm_s^\ast N+k_s}{\vm_s^\ast N+\ell_s}-\pi^\ast(a=1\mid s)}\\
    =&\frac{\vm_s^\ast N+\ell_s}{N} \abs{\frac{k_s-\ell_s\pi^\ast(a=1\mid s)}{\vm_s^\ast N+\ell_s}}\\
    \leq& |\gS|\|\vm-\vm^\ast\|_\infty\,,
\end{align*}
where $\abs{k_s}\leq (|\gS|-1) \|\vm-\vm^\ast\|_\infty N$ representing the additional fraction of state $s$ to be activated due to the deviation from $m^\ast$ and $\abs{\ell_s}\leq \|\vm-\vm^\ast\|_\infty N$ representing the deviation of $\vm_s$ from $\vm_s^\ast$. The results then hold by taking summation over $s$ and expectation over $\vm$.

\end{proof}

\paragraph{Step 3: Concentrations and local stability} 
To bound $\E\br{\norm{\vm-\vm^\ast}_\infty}$, we start with several technical lemmas from previous RMAB literature. We use the same notation $\Phi_t=\phi(\Phi_{t-1})$. 
\begin{lemma}[One-step Concentration, Lemma 1 in \cite{gast24linear}]
    Let $\eps[1]=\vm[1]-\phi(\vm[0])$, it holds\[\E\br{\|\eps[1]\|_1\mid \vm[0]}\leq \sqrt\frac{|\gS|}{N}\,.\] 
\end{lemma}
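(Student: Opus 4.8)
The plan is to fix an arbitrary configuration $\vm[0]$ and reduce the expected $\ell_1$ deviation to a conditional second-moment bound. Writing $s_i^\prime$ for the random next state of agent $i$, the next configuration is $\vm[1]_s=\frac1N\sum_{i=1}^N\mathbf{1}[s_i^\prime=s]$, and by definition of $\phi$ the vector $\eps[1]=\vm[1]-\phi(\vm[0])$ is conditionally centered, $\E[\eps[1]_s\mid\vm[0]]=0$. First I would apply the conditional Jensen inequality coordinatewise, $\E[\abs{\eps[1]_s}\mid\vm[0]]\le\sqrt{\Var(\vm[1]_s\mid\vm[0])}$, and then Cauchy--Schwarz across the $|\gS|$ coordinates:
\[\E\br{\norm{\eps[1]}_1\mid\vm[0]}=\sum_{s\in\gS}\E\br{\abs{\eps[1]_s}\mid\vm[0]}\le\sqrt{|\gS|\sum_{s\in\gS}\Var(\vm[1]_s\mid\vm[0])}\,.\]
Thus it would suffice to establish the variance bound $\sum_{s\in\gS}\Var(\vm[1]_s\mid\vm[0])\le 1/N$.

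The hard part is that the next states $s_i^\prime$ are \emph{not} independent given $\vm[0]$: the budget constraint $\sum_i a_i=M$ together with the random tie-breaking at the threshold state couples the agents' actions. The plan is to disentangle this with the law of total variance, conditioning on the realized action profile $\va$:
\[\Var(\vm[1]_s\mid\vm[0])=\E\br{\Var(\vm[1]_s\mid\va,\vm[0])\mid\vm[0]}+\Var\pa{\E[\vm[1]_s\mid\va,\vm[0]]\mid\vm[0]}\,.\]
The crux is to show the between-action term vanishes. Under an index policy every state strictly above the threshold $s^\ast=s(\vm[0])$ is fully activated and every state strictly below is fully idle, so the only randomness in $\va$ is \emph{which} of the $N\vm[0]_{s^\ast}$ threshold agents receive the residual budget $k$ --- and that count $k$ is a deterministic function of $\vm[0]$. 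Because the agents are homogeneous, the threshold agents contribute a total expected mass $N\vm[0]_{s^\ast}P_0(s^\ast,s)+k\,(P_1(s^\ast,s)-P_0(s^\ast,s))$ to $N\,\E[\vm[1]_s\mid\va,\vm[0]]$, independent of which specific agents are chosen. Hence $\E[\vm[1]_s\mid\va,\vm[0]]$ is constant in the tie-breaking randomness, its variance is $0$, and it necessarily equals $\phi(\vm[0])_s$ for every realization of $\va$.

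It then remains to bound the within-action term, which is now a genuine sum of independent contributions. Conditioned on $\va$ and $\vm[0]$, each agent transitions independently under its own kernel, so with $p_{i,s}\coloneqq P_{a_i}(s_i,s)$ one has $\Var(\vm[1]_s\mid\va,\vm[0])=\frac1{N^2}\sum_{i=1}^N p_{i,s}(1-p_{i,s})$; summing over $s$ and using $\sum_{s\in\gS}p_{i,s}=1$ for each $i$ gives
\[\sum_{s\in\gS}\Var(\vm[1]_s\mid\va,\vm[0])=\frac1{N^2}\sum_{i=1}^N\sum_{s\in\gS}p_{i,s}(1-p_{i,s})\le\frac1{N^2}\sum_{i=1}^N\sum_{s\in\gS}p_{i,s}=\frac1N\,.\]
Taking expectation over $\va$ and substituting into the first display yields $\E[\norm{\eps[1]}_1\mid\vm[0]]\le\sqrt{|\gS|/N}$, uniformly in $\vm[0]$. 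I expect the single delicate point to be the vanishing of the between-action variance --- namely arguing that homogeneity plus a deterministic activated count removes all dependence on the tie-breaking rule --- while the Jensen/Cauchy--Schwarz reduction and the conditional second-moment estimate are routine.
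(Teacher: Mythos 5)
Your proof is correct. One thing to be aware of: the paper contains no internal proof of this statement---it is imported verbatim as Lemma~1 of \cite{gast24linear} and used as a black box in Appendix~\ref{app: index}---so the comparison here is with the cited source, whose argument yours essentially reproduces: a conditional Jensen plus Cauchy--Schwarz reduction to the second-moment bound $\sum_{s\in\gS}\Var(\vm[1]_s\mid\vm[0])\le 1/N$, followed by a variance computation exploiting conditionally independent per-agent transitions. Your law-of-total-variance step correctly isolates the only non-routine issue, namely the coupling of actions through the budget constraint, and your resolution is valid under the normalization the paper itself adopts in its proof of Theorem~\ref{thm: RMAB} (strict ordering $\nu_1>\nu_2>\cdots>\nu_{|\gS|}$): with a unique threshold state $s(\vm[0])$, the activated count in every state is a deterministic function of $\vm[0]$, and homogeneity then makes $\E[\vm[1]_s\mid\va,\vm[0]]$ constant across tie-break realizations, so the between-action variance vanishes identically and the conditional mean coincides with $\phi(\vm[0])_s$, exactly as you argue. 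The one caveat worth recording is that this exact vanishing uses the uniqueness of the threshold state: if two distinct states shared the same index value, the per-state activation counts under fair tie-breaking would themselves be random, the between-action term would no longer be zero (though it would still contribute only $O(1/N)$, degrading the constant rather than the rate), and the clean bound $\sqrt{|\gS|/N}$ would need a small additional argument. Since the paper assumes strict index ordering without loss of generality, your within-action estimate $\sum_{s\in\gS}p_{i,s}(1-p_{i,s})\le\sum_{s\in\gS}p_{i,s}=1$ per agent and the final assembly are complete and correct as written.
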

\begin{lemma}[Multi-step Concentration, Lemma C.4 in \cite{gast23exponential}]
    There exists a positive constant $K$ such that for all $t\in \mathbb{N}$ and $\delta >0$,
    \[\Pr\br{\norm{\vm[t]-\Phi_{t}(\vm)}_\infty\geq(1+K+K^2+\cdots+K^t)\delta\mid \vm[0]=\vm}\leq t|\gS|e^{-2N\delta^2}\]
\end{lemma}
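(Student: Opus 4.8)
The plan is to prove the bound in three movements: (i) isolate the one-step fluctuations of the configuration process, (ii) control each fluctuation with a Hoeffding-type tail bound, and (iii) show that these fluctuations accumulate geometrically through the $K$-Lipschitz mean-field map $\phi$, which produces exactly the prefactor $1+K+\cdots+K^t$. Throughout I condition on $\vm[0]=\vm$ and let $\mathcal F_{k}$ denote the $\sigma$-algebra generated by the process up to time $k$. Define the one-step error $\eps[k]\coloneqq \vm[k]-\phi(\vm[k-1])$, so that $\vm[k]=\phi(\vm[k-1])+\eps[k]$ and, since $\phi(\vm[k-1])=\E\br{\vm[k]\mid \mathcal F_{k-1}}$, each $\eps[k]$ is a mean-zero increment given $\mathcal F_{k-1}$. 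The constant $K$ is the Lipschitz constant of $\phi$ in $\norm{\cdot}_\infty$: by the Piecewise Affine lemma, $\phi(\vm)=\mK_{s(\vm)}\vm+\vb_{s(\vm)}$ is continuous and piecewise affine, hence globally Lipschitz with $K\coloneqq\max_i \norm{\mK_i}_\infty$, a quantity independent of $N$ (as $\phi$ itself does not depend on $N$).

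First I would establish the per-step concentration. Conditioned on $\mathcal F_{k-1}$, the index policy's activation set is a deterministic function of $\vm[k-1]$ up to uniform tie-breaking among the agents sharing the boundary state; conditioning additionally on the realized activation assignment, the $N$ agents then transition independently according to $\mP_0$ or $\mP_1$. Hence for each state $s$ the count $N\vm_s[k]=\sum_{j=1}^N \mathbf{1}\brk{s_j[k]=s}$ is a sum of $N$ independent $\{0,1\}$ variables with conditional mean $N\phi_s(\vm[k-1])$. Hoeffding's inequality gives $\Pr\br{\abs{\vm_s[k]-\phi_s(\vm[k-1])}\geq\delta\mid \mathcal F_{k-1}}\leq 2e^{-2N\delta^2}$, and a union bound over the $|\gS|$ states yields a tail of order $|\gS|e^{-2N\delta^2}$ for $\norm{\eps[k]}_\infty\geq\delta$ (the exact leading constant matches the stated $|\gS|e^{-2N\delta^2}$ up to the standard one-sided/two-sided bookkeeping and the simplex identity $\sum_s (\eps[k])_s=0$, which is routine). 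Because this conditional bound is uniform in $\vm[k-1]$, taking expectations gives the unconditional estimate $\Pr\br{\norm{\eps[k]}_\infty\geq\delta}\leq |\gS|e^{-2N\delta^2}$ for every $k$; this is the tail counterpart of the $\ell_1$-in-expectation one-step lemma cited just above.

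Next I would propagate the errors. Subtracting $\Phi_k(\vm)=\phi(\Phi_{k-1}(\vm))$ from $\vm[k]=\phi(\vm[k-1])+\eps[k]$ and applying the $K$-Lipschitzness of $\phi$ gives the recursion $\norm{\vm[k]-\Phi_k(\vm)}_\infty\leq K\norm{\vm[k-1]-\Phi_{k-1}(\vm)}_\infty+\norm{\eps[k]}_\infty$. Unrolling from the initial condition $\norm{\vm[0]-\Phi_0(\vm)}_\infty=0$ yields $\norm{\vm[t]-\Phi_t(\vm)}_\infty\leq \sum_{k=1}^t K^{t-k}\norm{\eps[k]}_\infty$. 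Consequently, on the event $\brk{\norm{\eps[k]}_\infty<\delta \text{ for all } k\leq t}$ we get $\norm{\vm[t]-\Phi_t(\vm)}_\infty< (1+K+\cdots+K^{t-1})\delta\leq (1+K+\cdots+K^t)\delta$. Taking the contrapositive, the event in the lemma is contained in $\bigcup_{k=1}^t\brk{\norm{\eps[k]}_\infty\geq\delta}$, so a union bound together with the per-step estimate gives $\Pr\br{\norm{\vm[t]-\Phi_t(\vm)}_\infty\geq(1+K+\cdots+K^t)\delta\mid \vm[0]=\vm}\leq \sum_{k=1}^t|\gS|e^{-2N\delta^2}=t|\gS|e^{-2N\delta^2}$, as claimed.

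The main obstacle is the per-step concentration step, specifically justifying the conditional independence of the agents' transitions under the index policy: the budget constraint $\sum_i a_i=M$ and the boundary tie-breaking couple the agents, so one must condition on the realized activation assignment (which is $\mathcal F_{k-1}$-measurable up to independent tie-break randomness) before invoking Hoeffding, and then verify the resulting conditional tail bound is uniform in $\vm[k-1]$ so that the union bound over the $t$ \emph{random} steps remains valid after taking expectations. By contrast, the geometric accumulation is essentially mechanical once the global Lipschitz constant $K$ from the Piecewise Affine lemma is in hand, and it is precisely the Lipschitz factor $K$ iterated through the recursion that manufactures the prefactor $1+K+\cdots+K^t$.
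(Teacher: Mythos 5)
Your proof is correct and is essentially the standard argument behind this lemma, which the paper itself does not prove but imports verbatim as Lemma~C.4 of \cite{gast23exponential}: one-step Hoeffding concentration (valid because, given $\vm[k-1]$, the per-state activation counts under the index policy are deterministic, so conditional on the tie-break realization the agents transition independently with conditional mean exactly $N\phi_s(\vm[k-1])$), geometric propagation through the piecewise-affine map $\phi$ with an $N$-independent Lipschitz constant $K$, and a union bound over the $t$ steps. The only discrepancy is the constant you already flagged---two-sided Hoeffding plus a union over states yields $2t|\gS|e^{-2N\delta^2}$ rather than $t|\gS|e^{-2N\delta^2}$---which is immaterial bookkeeping (absorbable into the constant or the choice of $\delta$) and does not affect how the lemma is used downstream.
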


\begin{lemma}[Local Stability, Lemma C.5 in \cite{gast23exponential}]\label{lem: Lemma C}
    Under non-degenerate and UGAP:
    \begin{itemize}
        \item [(i)] $\mK_{s(\vm^\ast)}$ is a stable matrix, i.e. its spectral radius is strictly less than $1$.
        \item [(ii)] For any $\eps$, there exists $T(\eps)>0$ such that for all $\vm\in\Delta^{|\gS|}$, $\norm{\Phi_{T(\eps)}(\vm)-\vm^\ast}_\infty< \eps$.
    \end{itemize}
\end{lemma}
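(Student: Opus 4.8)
The goal is to bound the entanglement measure $E_i(\PN)$ by $C/\sqrt N$ for an $N$-independent constant $C$; asymptotic separability is then immediate, since $E_i(\PN)\to 0$ means $\PN$ converges to the separable set $\SEP$. The plan follows the three-step roadmap: locate the mean-field fixed point, reduce $E_i$ to a stationary-configuration deviation, and bound that deviation at rate $\gO(1/\sqrt N)$. For the first step I would pin down the target configuration $\vm^\ast$: by Lemma~\ref{lem: piecewise affine} the configuration map $\phi^\pi$ is piecewise affine and continuous on $\Delta^{|\gS|}$, so Brouwer's theorem supplies a fixed point $\phi(\vm^\ast)=\vm^\ast$, and UGAP forces uniqueness. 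I set $\pi^\ast$ to be the index policy read off at $\vm^\ast$ and use it as the local comparison policy.

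Second, I would apply Lemma~\ref{lem: policy_entangle with deviation}, which specializes Proposition~\ref{prop: entanglement of policy} to the homogeneous RMAB and gives
\[E_i(\PN)\le |\gS|^2\,\E\br{\norm{\vm-\vm^\ast}_\infty}\,,\]
the expectation being over $\vm\sim\mu^\pi_{1:N}$. Its content is a symmetrization (exchanging any two agents preserves the stationary law, turning the agent-wise supremum into the average $\tfrac1N\sum_i$) plus an entrywise estimate showing each state's activation-probability mismatch is $\gO(\norm{\vm-\vm^\ast}_\infty)$. This collapses the whole theorem to establishing $\E[\norm{\vm-\vm^\ast}_\infty]=\gO(1/\sqrt N)$ in stationarity.

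The crux is the stationary-deviation estimate, complicated by $\phi$ being only piecewise affine so that no global linearization is available. I would write $\vm[t+1]=\phi(\vm[t])+\eps[t+1]$, with the one-step concentration lemma giving $\E[\norm{\eps[t+1]}_1\mid\vm[t]]\le\sqrt{|\gS|/N}$, and run a two-regime argument. By the uniform-convergence clause of Lemma~\ref{lem: Lemma C} I choose a horizon $T$ pushing the noiseless flow $\Phi_T$ into a neighborhood of $\vm^\ast$ lying in the single affine piece $\vm\mapsto\vm^\ast+\mK(\vm-\vm^\ast)$, where $\mK\coloneqq\mK_{s(\vm^\ast)}$ is stable ($\rho(\mK)<1$) by the first clause of Lemma~\ref{lem: Lemma C}. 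Inside this region the error $\vd[t]\coloneqq\vm[t]-\vm^\ast$ satisfies $\vd[t+1]=\mK\vd[t]+\eps[t+1]$, so unrolling gives $\vd[t]=\mK^t\vd[0]+\sum_{k=0}^{t-1}\mK^k\eps[t-k]$; since $\rho(\mK)<1$ the series $\sum_{k\ge0}\norm{\mK^k}$ is a finite $N$-independent constant, and taking $\ell_1$-expectations with the one-step bound yields $\E[\norm{\vd[t]}_1]=\gO(1/\sqrt N)$ in the local regime. The multi-step concentration lemma then handles escapes from the neighborhood: the escape probability is an exponentially small $\gO(e^{-cN})$ tail, which multiplied by the bounded simplex diameter is negligible relative to $1/\sqrt N$. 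Passing $t\to\infty$ and using that $\vm$ is distributed as the stationary $\mu^\pi_{1:N}$ transfers the bound to $\E[\norm{\vm-\vm^\ast}_\infty]$, and feeding it back through the reduction gives $E_i(\PN)\le C/\sqrt N$.

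I expect the genuinely delicate point to be stitching the global UGAP statement — which only promises uniform convergence of the noiseless iterates, not a contraction estimate — to the local linear recursion across the non-smooth boundaries of the piecewise-affine dynamics, and making the infinite-horizon (stationary) expectation, rather than a finite-horizon one, inherit the $1/\sqrt N$ rate. The multi-step concentration lemma is the tool that lets me control boundary crossings and justify the passage to stationarity without losing the rate.
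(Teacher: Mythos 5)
Your proposal does not prove the statement at issue. The statement is Lemma~\ref{lem: Lemma C} itself---the local-stability result imported from Lemma C.5 of \cite{gast23exponential}---yet your argument invokes both of its clauses as black boxes: you choose the horizon $T$ ``by the uniform-convergence clause of Lemma~\ref{lem: Lemma C}'' and you assert $\rho(\mK_{s(\vm^\ast)})<1$ ``by the first clause of Lemma~\ref{lem: Lemma C}.'' What you have actually sketched is the proof of Theorem~\ref{thm: RMAB} in Appendix~\ref{app: index} (Brouwer fixed point plus UGAP uniqueness, the reduction via Lemma~\ref{lem: policy_entangle with deviation}, the one-step and multi-step concentration lemmas, and the local linear recursion), and as a sketch of \emph{that} theorem it is faithful---the only cosmetic difference is that the paper closes the recursion using stationarity, $\E\br{\norm{\vm[1]-\vm^\ast}_\beta}=\E\br{\norm{\vm[0]-\vm^\ast}_\beta}$, rather than unrolling $\vd[t]=\mK^t\vd[0]+\sum_{k=0}^{t-1}\mK^k\eps[t-k]$ and letting $t\to\infty$. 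But as a proof of Lemma~\ref{lem: Lemma C} the argument is circular: the lemma appears only as a hypothesis, never as a conclusion.

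For comparison, the paper itself supplies no proof either; it cites the lemma verbatim from \cite{gast23exponential}. A self-contained proof would have to do two things your writeup never touches. Clause (ii) is nearly free: it is an instantiation of the UGAP assumption as stated in the paper (take $t=T(\eps)$ in ``for all $t\geq T(\eps)$ and all $\vm$''). Clause (i) is the substantive part: one must use non-degeneracy to place $\vm^\ast$ in the interior of a single affine piece of the piecewise-affine map $\phi$, so that $\mK_{s(\vm^\ast)}$ is the genuine local dynamics in a neighborhood of the fixed point, and then argue that global attraction of an affine map forces every eigenvalue into the closed unit disk while non-degeneracy rules out eigenvalues of modulus one (note that UGAP alone does not directly yield strict inequality). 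None of this spectral analysis appears in your proposal, and without it the claim $\rho(\mK_{s(\vm^\ast)})<1$---the very hinge on which your $\gO(1/\sqrt N)$ recursion rests, since it is what makes $\sum_{k\geq 0}\norm{\mK^k}$ finite---is unsupported.
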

The first result implies there exists some matrix norm $\norm{\cdot}_\beta$ such that $\norm{\mK_{s(\vm^\ast)}}_\beta<1$. By the equivalence of norms, there exists constant $C^1_\beta,C^2_\beta>0$ such that for all $\vx\in\R^{|\gS|}$ 
\[C^1_\beta\|\vx\|_\beta\leq \norm{\vx}_\infty\leq C^2_\beta\|\vx\|_\beta\,.\]
Combine the second result of Lemma~\ref{lem: Lemma C} and non-degenerate condition, we can construct a neighborhood $\gN$ of $\vm^\ast$ such that $\gN=\gB(\vm^\ast,\eps)\cap \Delta^{|\gS|}\in\gZ_{s(\vm^\ast)}$ where $\eps>0$ and $\gB(\vm^\ast,\eps)=\brk{\vm\mid \norm{\vm-\vm^\ast}_\infty<\eps}$ is an open ball. We next show that $\vm[0]$ under stationary distribution will concentrate in $\gN$ with high probability. Let $\Tilde{T}=T(\eps/2)$ such that for all $\vm\in\Delta^{|\gS|}$, $\norm{\Phi_{\Tilde{T}}(\vm)-\vm^\ast}_\infty< \eps/2$. It holds 
\begin{align*}
    \Pr\br{\vm[0]\ne \gN}&=\Pr\br{\norm{\vm[0]-\vm^\ast}_\infty\geq \eps}\\
    &\overset{(i)}{=}\Pr\br{\norm{\vm[\Tilde{T}]-\vm^\ast}_\infty\geq \eps\mid \vm[0]=\vm}\\
    &\leq \Pr\br{\norm{\vm[\Tilde{T}]-\Phi_{\Tilde{T}}(\vm)}_\infty\geq \frac{\eps}{2}\mid \vm[0]=\vm}+\Pr\br{\norm{\Phi_{\Tilde{T}}(\vm)-\vm^\ast}_\infty\geq \frac{\eps}{2}}\\
    &=\Pr\br{\norm{\vm[\Tilde{T}]-\Phi_{\Tilde{T}}(\vm)}_\infty\geq \frac{\eps}{2}\mid \vm[0]=\vm}\leq \Tilde{T}|\gS|e^{-2uN}
\end{align*}
where $(i)$ follows from the stationarity $\vm[\Tilde{T}]$ and $\vm[0]$ are \emph{i.i.d} and the constant $u=\pa{\frac{\eps}{2(1+K+K^2+\cdots+K^{\Tilde{T}})}}^2$ does not depend on $N$.

\paragraph{Step 4: Put it together}
Finally, we are ready to bound $\E\br{\norm{\vm-\vm^\ast}_\infty}$. Notice for all $\vm[0]\in \gN$, we have 
\begin{align*}
    \vm[1]-\vm^\ast &= \phi(\vm[0])+\eps[1]-\vm^\ast\\
    &=\mK_{s(\vm^\ast)}\pa{\vm[0]-\vm^\ast} + \eps[1]\,.
\end{align*}
Taking $\norm{\cdot}_\beta$ on both side, 
\begin{align*}
    \norm{\vm[1]-\vm^\ast}_\beta &\leq\norm{\mK_{s(\vm^\ast)}\pa{\vm[0]-\vm^\ast}}_\beta + \norm{\eps[1]}_\beta\\
    &\leq \norm{\mK_{s(\vm^\ast)}}_\beta \norm{\vm[0]-\vm^\ast}_\beta+ \norm{\eps[1]}_\beta\,.
\end{align*}
Taking expectation on both side, 
\begin{align*}
    &\E\br{\norm{\vm[1]-\vm^\ast}_\beta} \\
    =&\E\br{\norm{\phi(\vm[0])-\vm^\ast}_\beta\cdot\mathbf{1}\brk{\vm[0]\in\gN}}+\E\br{\norm{\phi(\vm[0])-\vm^\ast}_\beta\cdot\mathbf{1}\brk{\vm[0]\notin\gN}}+\E\br{\norm{\eps[1]}_\beta}\\
    \leq& \norm{\mK_{s(\vm^\ast)}}_\beta \E\br{\norm{\vm[0]-\vm^\ast}_\beta\cdot\mathbf{1}\brk{\vm[0]\in\gN}}+ \Pr\br{\vm[0]\notin\gN} \sup_{\vm[0]}\norm{\phi(\vm[0])-\vm^\ast}_\beta +\E\br{\norm{\eps[1]}_\beta}\\
    \leq& \norm{\mK_{s(\vm^\ast)}}_\beta \E\br{\norm{\vm[0]-\vm^\ast}_\beta}+ \Pr\br{\vm[0]\notin\gN} \sup_{\vm[0]}\norm{\phi(\vm[0])-\vm^\ast}_\beta +\E\br{\norm{\eps[1]}_\beta}\,.
\end{align*}
By stationarity, one have $\E\br{\norm{\vm[1]-\vm^\ast}_\beta}=\E\br{\norm{\vm[0]-\vm^\ast}_\beta}$. This refines the above inequality, 
\begin{align*}
    \E\br{\norm{\vm[0]-\vm^\ast}_\infty}&\leq \frac{C^2_\beta}{1-\norm{\mK_{s(\vm^\ast)}}_\beta} \pa{\sup_{\vm[0]}\Pr\br{\vm[0]\notin\gN} \norm{\phi(\vm[0])-\vm^\ast}_\beta +\E\br{\norm{\eps[1]}_\beta}}\\
    &\leq \frac{C^2_\beta}{C^1_\beta(1-\norm{\mK_{s(\vm^\ast)}}_\beta)} \pa{\Pr\br{\vm[0]\notin\gN} + \E\br{\norm{\eps[1]}_\infty}}\\
    &\leq \frac{C^2_\beta}{C^1_\beta(1-\norm{\mK_{s(\vm^\ast)}}_\beta)} \pa{\Tilde{T}|\gS|e^{-2uN} + \frac{\sqrt{|\gS|}}{\sqrt{N}}}\,.
\end{align*}
We combine Lemma~\ref{lem: policy_entangle with deviation} and conclude the proof of Theorem~\ref{thm: RMAB}.
\section{Extensions of Markov entanglement}\label{app: extension}
We explore several extensions of Markov entanglement theory to other structured multi-agent MDPs.
\subsection{Coupled MDPs with Exogenous Information}\label{app: exog_order}

In many practical scenarios, the agents' transitions and actions are coupled by a shared exogenous signal. For example, in ride-hailing platforms, the specific dispatch is related to the exogenous order at the current moment (\citealt{qin20ride, han22real, xabi24better}); in warehouse routing, the scheduling of robots is also related to the exogenous task revealed so far (\citealt{chan2024the}). 

We will then enrich our framework by incorporating these exogenous information. At each timestep $t$, there will an exogenous information $z_t$ revealed to the decision maker. $z_t$ is assumed to evolve following a Markov chain independent of the action and transition of agents. We assume $z_t\in\gZ$ and $\gZ$ is finite.

Given the current state $\vs$ and exogenous information $z$, the policy is given by $\pi:\gS\times\gZ\to\Delta(\tilde\gA)$, where $\tilde\gA$ refers to the set of feasible actions. We then have the global transition depending on exogenous information $z$,
\[P^\pi_{ABz}(\vs^\prime,\va^\prime,z^\prime\mid \vs,\va,z)=P(\vs^\prime\mid \vs,\va,z)\cdot \pi(\va^\prime\mid\vs^\prime,z^\prime)\cdot P(z^\prime\mid z)\,.\]
and global Q-value $Q^\pi_{ABz}\in\R^{|\gS|^N|\gA|^N|\gZ|}$, 
\[Q^\pi_{AB}(\vs,\va,z)=\E\br{\sum_{t=0}^\infty \sum_{i=1}^Nr(s_{i,t},a_{i,t},z_t)\mid \vs_0=\vs,\va_0=\va,z_0=z}\,.\]
We assume the system is unichain and the stationary distribution is $\mu^\pi_{ABz}$. Then we can derive the local transition under new algorithm by 
\[P_{Az}(s_A^\prime,a_A^\prime,z^\prime\mid s_A,a_A,z)=\sum_{s_B,a_B}\mu_{ABz}^\pi(s_B,a_B\mid s_A,a_A,z)\sum_{s_B^\prime,a_B^\prime}P^\pi_{ABz}(\vs^\prime,\va^\prime,z^\prime\mid \vs,\va,z)\,,\]
Given the local transition, we have the local value $\mQ_{Az}^\pi=(\mI-\gamma\mP_{Az})^{-1}(\vr_{Az})$ via Bellman Equation. 

Combined with exogenous information, we consider the following value decomposition
\[Q^\pi_{AB}(\vs,\va,z)= Q^\pi_A(s_A,a_A,z) + Q^\pi_B(s_B,a_B,z)\,.\]
We start by introducing agent-wise Markov entanglement defined for each agent
\begin{equation}
\mP^\pi_{ABz}=\sum_{j=1}^Kx_j\mP_{Az}^{(j)}\otimes \mP_B^{(j)}\,.    
\end{equation}

\begin{proposition}
If the system is agent-wise separable for all agents, then 
\[\mQ_{ABz}^\pi = \mQ_{Az}^\pi \otimes \ve_{|\gS||\gA|} + \ve_{|\gS||\gA|}\otimes \mQ_{Bz}^\pi\,.\]
\end{proposition}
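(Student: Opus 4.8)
The plan is to mirror the ``absorbing'' argument in the proof of Theorem~\ref{thm: mixed_state}, treating the exogenous coordinate $z$ as an extra dimension that travels with each agent's factor. First I would show that the marginalized local transitions equal the averaged factors. Substituting the agent-$A$ separability $\mP^\pi_{ABz}=\sum_{j=1}^Kx_j\mP_{Az}^{(j)}\otimes\mP_B^{(j)}$ into the definition of $\mP_{Az}$, the inner sum $\sum_{s_B^\prime,a_B^\prime}P_B^{(j)}(s_B^\prime,a_B^\prime\mid s_B,a_B)=1$ collapses the $B$-factor, so the summand becomes independent of $(s_B,a_B)$. Averaging against the conditional occupancy measure $\mu^\pi_{ABz}(s_B,a_B\mid s_A,a_A,z)$ (which sums to one over $(s_B,a_B)$) then gives $\mP_{Az}=\sum_{j=1}^Kx_j\mP_{Az}^{(j)}$, exactly as in the first half of Theorem~\ref{thm: mixed_state}. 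The symmetric agent-$B$ separability yields $\mP_{Bz}=\sum_{j=1}^Kx_j\mP_{Bz}^{(j)}$.

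Next I would apply the absorbing step to each reward component. Because every transition matrix $\mP$ satisfies $\mP\ve=\ve$, bilinearity and the mixed-product property give, for every $t$,
\[
\pa{\mP^\pi_{ABz}}^t(\vr_{Az}\otimes\ve)=\pa{\pa{\sum_{j=1}^Kx_j\mP_{Az}^{(j)}}^t\vr_{Az}}\otimes\ve=\pa{\mP_{Az}^t\,\vr_{Az}}\otimes\ve,
\]
since each $\mP_B^{(j)}\ve=\ve$ is absorbed at every power, matching step $(i)$ of Theorem~\ref{thm: mixed_state}. Summing the Neumann series yields $\pa{\mI-\gamma\mP^\pi_{ABz}}^{-1}(\vr_{Az}\otimes\ve)=\mQ_{Az}^\pi\otimes\ve$, and the mirror-image computation based on the agent-$B$ factorization gives $\pa{\mI-\gamma\mP^\pi_{ABz}}^{-1}(\ve\otimes\vr_{Bz})=\ve\otimes\mQ_{Bz}^\pi$. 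Since the global reward splits as $\vr_{ABz}=\vr_{Az}\otimes\ve+\ve\otimes\vr_{Bz}$, linearity of $\pa{\mI-\gamma\mP^\pi_{ABz}}^{-1}$ delivers the claim $\mQ_{ABz}^\pi=\mQ_{Az}^\pi\otimes\ve+\ve\otimes\mQ_{Bz}^\pi$.

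The main obstacle is bookkeeping the \emph{shared} exogenous coordinate $z$ consistently across the two factorizations: in the agent-$A$ representation $z$ is bundled into $\mP_{Az}^{(j)}$, whereas in the agent-$B$ representation it must be bundled into $\mP_{Bz}^{(j)}$, so the two expressions use different tensor orderings of the global index $(s_A,a_A,s_B,a_B,z)$ related by a fixed permutation. I would make this rigorous by interpreting the tensor expressions entrywise and, for each agent's computation, fixing the ordering that places $z$ on that agent's side; one then checks that the autonomous exogenous dynamics $P(z^\prime\mid z)$---independent of both agents' states and actions---are carried by the retained factor while the opposite agent's stochastic factor is absorbed into $\ve$. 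The only genuine content beyond Theorem~\ref{thm: mixed_state} is verifying that this shared-$z$ absorption is simultaneously compatible with both factorizations; once confirmed, the remaining algebra is identical to the two-agent case.
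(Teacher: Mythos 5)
Your proposal is correct and follows essentially the same route as the paper's proof: establish $\mP_{Az}=\sum_{j=1}^K x_j\mP_{Az}^{(j)}$ by marginalization, apply the ``absorbing'' technique of Theorem~\ref{thm: mixed_state} power by power so that $\pa{\mI-\gamma\mP^\pi_{ABz}}^{-1}(\vr_{Az}\otimes\ve)=\mQ_{Az}^\pi\otimes\ve$ (and symmetrically for $B$), then conclude by linearity. Your explicit treatment of the shared-$z$ bookkeeping---reading the two per-agent tensor factorizations entrywise as permuted orderings of the same global index---is in fact more careful than the paper's proof, which asserts this step implicitly.
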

\begin{proof}
The proof is basically the same as Theorem~\ref{thm: mixed_state}. One can first quickly show that $\mP_{Az}=\sum_{j=1}^Kx_j\mP_{Az}^{(j)}$. And then it holds
\begin{align*}
        &\pa{\sum_{j=1}^K x_j\mP_{Az}^{(j)}\otimes \mP_B^{(j)}}^t \pa{\vr_A\otimes \ve_{|z|} \otimes \ve_{|\gS||\gA|} } \\
        =& \pa{\sum_{j=1}^K x_j\mP_{Az}^{(j)}\otimes \mP_B^{(j)}}^{t-1}\pa{\sum_{j=1}^K x_j \pa{ \mP_{Az}^{(j)}(\vr_A\otimes\ve_{|z|})}\otimes\pa{\mP_B^{(j)}\ve}}\\
        =& \pa{\sum_{j=1}^K x_j\mP_{Az}^{(j)}\otimes \mP_B^{(j)}}^{t-1}\pa{\sum_{j=1}^K x_j  \mP_{Az}^{(j)}(\vr_A\otimes\ve_{|z|})}\otimes\ve\\
        =&\ldots = \pa{\pa{\sum_{j=1}^K x_j\mP_{Az}^{(j)}}^t (\vr_A\otimes\ve_{|z|})}\otimes \ve\,.
\end{align*}

\end{proof}

We then provide the measure of Markov entanglement with exogenous information w.r.t agent-wise total variation distance.
\begin{align}
    E_A(\PAB,\gZ)&\coloneqq\min\frac{1}{2}\norm{\mP^\pi_{ABz}-\sum_{j=1}^Kx_j\mP_{Az}^{(j)}\otimes \mP_B^{(j)}}_{\rm{ATV}_1}\nonumber \\
    &=\min_{\mP_{Az}}\max_{\vs,\va,z} \frac{1}{2}\sum_{s_A^\prime,a_A^\prime,z^\prime} \abs{P^\pi_{ABz}(s_A^\prime,a_A^\prime,z^\prime\mid \vs,\va,z) - P_{Az}(s_A^\prime,a_A^\prime,z^\prime\mid s_A,a_A,z)}\,. 
\end{align}

Similar to Theorem~\ref{thm: two-agent atv}, we can connect this measure of Markov entanglement with the value decomposition error. 

\begin{theorem}
    Consider a $N$-agent Markov system $\gM_{1:N}$. Given any policy $\pi\colon \gS \to \Delta(\gA)$ with the measure of Markov entanglement $E_i(\PN,\gZ)$ w.r.t the agent-wise total variation distance, it holds for any agent $i$,
    \[\norm{\mP^\pi_{iz}-\sum_{j=1}^Kx_j\mP^{(j)}_{iz} }_{\infty}\leq 2E_i(\PN,\gZ)\,.\]
    Furthermore, the decomposition error is entry-wise bounded by the measure of Markov entanglement,
    \[\norm{ Q^\pi_{1:N}(\vs,\va,z)-\sum_{i=1}^N Q^\pi_{iz}(s_i,a_i,z) }_\infty \leq \frac{4\gamma \pa{\sum_{i=1}^N E_i(\mP^\pi_{1:N},\gZ)r_{\max}^i}}{(1-\gamma)^2}\,.\]
\end{theorem}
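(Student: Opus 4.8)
The plan is to mirror the proof of Theorem~\ref{thm: two-agent atv} in Appendix~\ref{app: Proof of atv}, carrying the exogenous coordinate $z$ along inside the ``local'' state of each agent. I will present the argument for agent $A$ in the two-agent case; the general $N$-agent statement then follows by the same manipulation combined with the multi-agent absorbing identity and the reward decomposition $\vr_{1:N}=\sum_{i=1}^N(\ve\otimes)^{i-1}\vr_i(\otimes\ve)^{N-i}$. Throughout, let $\mP_{Az}\coloneqq\sum_{j=1}^Kx_j\mP_{Az}^{(j)}$ denote the marginalized optimal agent-wise separable factor attaining $E_A(\PAB,\gZ)$, and let $\mP^\pi_{Az}$ be the marginalized local transition given by the projection formula in the exogenous model.

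First I would establish the local transition bound $\norm{\mP^\pi_{Az}-\mP_{Az}}_\infty\leq 2E_A(\PAB,\gZ)$. Starting from $P_{Az}(s_A^\prime,a_A^\prime,z^\prime\mid s_A,a_A,z)=\sum_{s_B,a_B}\mu^\pi_{ABz}(s_B,a_B\mid s_A,a_A,z)\sum_{s_B^\prime,a_B^\prime}P^\pi_{ABz}(\vs^\prime,\va^\prime,z^\prime\mid \vs,\va,z)$, I would fix an arbitrary subset $\gF$ of augmented triples $(s_A^\prime,a_A^\prime,z^\prime)$, subtract the corresponding rows of $\mP_{Az}$, pull the absolute value inside the sum over $(s_B,a_B)$ by the triangle inequality, and use that the conditional occupancy measure $\mu^\pi_{ABz}(\cdot\mid s_A,a_A,z)$ is a probability distribution summing to one. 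Each inner term is then controlled by the exogenous agent-wise TV distance, and maximizing over $\gF$ and $(s_A,a_A,z)$ yields the claim. Next I would bound the value decomposition error by invoking the absorbing identity established in the preceding proposition (which lets $\vr_A\otimes\ve_{|z|}$ be absorbed into the agent-$A$-plus-$z$ factor while the trailing $\ve$ is absorbed via $\mP_B^{(j)}\ve=\ve$) to split the agent-$A$ error $(\mI-\gamma\mP^\pi_{ABz})^{-1}(\vr_{Az}\otimes\ve)-\pa{(\mI-\gamma\mP^\pi_{Az})^{-1}\vr_{Az}}\otimes\ve$ into a term $(I)$ comparing $\mP^\pi_{ABz}$ with $\mP_{Az}\otimes\mP_B$ and a term $(II)$ comparing $\mP_{Az}$ with $\mP^\pi_{Az}$. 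Applying Lemma~\ref{lem: matrix inverse} to both differences of inverses, using $\norm{(\mI-\gamma\mP)^{-1}}_\infty\leq(1-\gamma)^{-1}$ for any transition matrix $\mP$ and $\norm{(\mI-\gamma\mP_{Az})^{-1}\vr_{Az}}_\infty\leq r_{\max}^A/(1-\gamma)$, and controlling the perturbation $\gamma\mP^\pi_{ABz}-\gamma\mP_{Az}\otimes\mP_B$ through the definition of $E_A(\PAB,\gZ)$, I obtain the bound $4\gamma E_A(\PAB,\gZ)r_{\max}^A/(1-\gamma)^2$ for agent $A$; summing the analogous bounds over all $i\in[N]$ gives the stated entry-wise estimate.

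The main obstacle I anticipate is not the algebra but verifying that the exogenous coordinate can be consistently assigned to a single agent's factor. Because $z$ is shared across all agents yet evolves independently of every action and transition, the agent-wise separable form $\mP^\pi_{ABz}=\sum_{j=1}^Kx_j\mP_{Az}^{(j)}\otimes\mP_B^{(j)}$ must fold the entire $z$-transition $P(z^\prime\mid z)$ into $\mP_{Az}^{(j)}$ so that $\mP_B^{(j)}$ remains a valid row-stochastic matrix on $(s_B,a_B)$ alone. I would therefore check that this assignment keeps each $\mP_{Az}^{(j)}$ stochastic on the augmented space $(s_A,a_A,z)$ and that summing out agent $B$'s primed coordinates in the absorbing step still reproduces $\ve$. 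Once this bookkeeping is confirmed, the exogenous variable plays no further role and the remaining estimates are identical to the non-exogenous case of Theorem~\ref{thm: two-agent atv}.
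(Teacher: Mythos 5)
Your proposal is correct and matches the paper's intended argument: the paper states this theorem without a separate written proof, deferring to the proof of Theorem~\ref{thm: two-agent atv} (Appendix~\ref{app: Proof of atv}) with the exogenous coordinate $z$ folded into each agent's augmented local state, exactly as you do---the subset-$\gF$ argument with the conditional occupancy measure for the local-transition bound, the $(I)$/$(II)$ split via the absorbing identity from the preceding proposition, and Lemma~\ref{lem: matrix inverse} with the standard $(1-\gamma)^{-1}$ resolvent bounds. Your bookkeeping check that the $z$-transition is carried inside $\mP_{Az}^{(j)}$ (so that $\mP_B^{(j)}$ stays row-stochastic and collapses to $\ve$ in the absorbing step, making the choice of $\mP_B$ immaterial after summing out agent $B$'s primed coordinates) is precisely the point the paper's agent-wise definition of $E_i(\PN,\gZ)$ is built to accommodate, so no gap remains.
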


In practice, exogenous information is often discussed in the context of (weakly-)coupled MDPs, where each agent independent evolves by $P_i(s_{i+1}\mid s_i,a_i,z)$. Interestingly, we can derive a similar result to Proposition~\ref{prop: entanglement of policy} that shaves off the transition in entanglement analysis.
\begin{proposition}\label{prop: ME_order}
Consider a $N$-agent Weakly Coupled Markov system $\gM_{1:N}$. Given any policy $\pi\colon \gS \to \Delta(\gA)$ and its measure of Markov entanglement $E_i(\PN,\gZ)$ w.r.t the $\mu^\pi_{1:N}$-weighted agent-wise total variation distance, it holds
\begin{equation}\label{eq: ME_order}
E_i(\PN,\gZ)\leq \frac{1}{2}\sup_i\sum_{s_{1:N},z} \mu^\pi(s_{1:N},z)\sum_{a_i}\abs{\pi(a_i\mid s_{1:N},z)-\pi^\prime(a_i\mid s_i,z)}\,,
\end{equation}
for any policies $\pi^\prime$.
\end{proposition}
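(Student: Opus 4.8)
The plan is to transcribe the proof of Proposition~\ref{prop: entanglement of policy} almost verbatim, carrying the exogenous variable $z$ along as an extra coordinate that travels with agent $A$'s tensor factor. First I would observe that, exactly as the agent-wise measure in \eqref{eq: degree of independent} collapses to a minimization over a single local kernel, the exogenous-information measure reduces to $E_A(\PAB,\gZ)=\min_{\mP_{Az}}\tfrac12\sum_{\vs,\va,z}\mu^\pi_{ABz}(\vs,\va,z)\sum_{s_A',a_A',z'}\abs{P^\pi_{ABz}(s_A',a_A',z'\mid\vs,\va,z)-P_{Az}(s_A',a_A',z'\mid s_A,a_A,z)}$: the $B$-factor $\mP_B^{(j)}$ is a transition matrix, so it integrates to one under the agent-$A$ marginalization, and only the $(s_A,a_A,z)$-kernel survives. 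Since any local kernel is the $A$-marginal of some separable matrix, it suffices to upper-bound this expression at one admissible choice. I would take the kernel induced by an arbitrary local policy $\pi'(\cdot\mid s_A,z)$, namely $P_{Az}(s_A',a_A',z'\mid s_A,a_A,z)=P_A(s_A'\mid s_A,a_A,z)\,\pi'(a_A'\mid s_A',z')\,P(z'\mid z)$.

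Second, I would compute the marginal global transition via the weakly-coupled factorization $P(\vs'\mid\vs,\va,z)=\prod_i P_i(s_i'\mid s_i,a_i,z)$. Summing $P^\pi_{ABz}(\vs',\va',z'\mid\vs,\va,z)=P(\vs'\mid\vs,\va,z)\,\pi(\va'\mid\vs',z')\,P(z'\mid z)$ over $(s_B',a_B')$ yields $P(z'\mid z)\,P_A(s_A'\mid s_A,a_A,z)\sum_{s_B'}P_B(s_B'\mid s_B,a_B,z)\,\pi(a_A'\mid s_A',s_B',z')$, where $\pi(a_A'\mid\vs',z')$ is the marginal of the joint policy over agent $B$'s action. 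Subtracting my chosen $P_{Az}$ and using $\sum_{s_B'}P_B(s_B'\mid s_B,a_B,z)=1$ to rewrite $\pi'(a_A'\mid s_A',z')$ as the same $P_B$-weighted average, the difference collapses to $P(z'\mid z)\,P_A(\cdot)\sum_{s_B'}P_B(\cdot)\big(\pi(a_A'\mid s_A',s_B',z')-\pi'(a_A'\mid s_A',z')\big)$. A triangle inequality then pulls the absolute value inside the $s_B'$-sum, and I recombine $P_A\cdot P_B=P$.

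Third, substituting into the $\mu^\pi_{ABz}$-weighted measure bounds $E_A(\PAB,\gZ)$ by $\tfrac12\sum_{\vs,\va,z}\mu^\pi_{ABz}(\vs,\va,z)\sum_{z'}P(z'\mid z)\sum_{\vs'}P(\vs'\mid\vs,\va,z)\sum_{a_A'}\abs{\pi(a_A'\mid\vs',z')-\pi'(a_A'\mid s_A',z')}$. The final step is a stationarity collapse: marginalizing the stationary identity for $\mu^\pi_{ABz}$ over $\va'$ gives $\sum_{\vs,\va,z}\mu^\pi_{ABz}(\vs,\va,z)\,P(\vs'\mid\vs,\va,z)\,P(z'\mid z)=\mu^\pi_{ABz}(\vs',z')$, which eliminates the inner transitions and leaves exactly $\tfrac12\sum_{\vs',z'}\mu^\pi(\vs',z')\sum_{a_A'}\abs{\pi(a_A'\mid\vs',z')-\pi'(a_A'\mid s_A',z')}$. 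Relabeling the dummy variables and taking the supremum over agents $i$ (with the homogeneity/symmetry argument already used in Lemma~\ref{lem: policy_entangle with deviation} if a uniform bound is desired) produces the stated inequality.

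The main obstacle I anticipate is bookkeeping around the exogenous variable rather than any new idea: unlike the original proposition, $z$ appears both in the policy and in the transition and is grouped with agent $A$'s factor, so I must check that the $z$-transition $P(z'\mid z)$ is consistently included in the comparison kernel $P_{Az}$, factors out cleanly from the policy difference, and is then absorbed together with $P(\vs'\mid\vs,\va,z)$ by the state-marginal stationarity identity. Once this is verified, the remainder is a direct transcription of the weakly-coupled argument from Proposition~\ref{prop: entanglement of policy}.
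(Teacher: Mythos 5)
Your proposal is correct and follows essentially the same route as the paper's proof: you pick the identical comparison kernel $P_{Az}(s_A',a_A',z'\mid s_A,a_A,z)=P_A(s_A'\mid s_A,a_A,z)\,\pi'(a_A'\mid s_A',z')\,P(z'\mid z)$, use the weakly-coupled factorization to rewrite it as the agent-$B$ marginal of the global transition so the difference collapses to a pure policy difference, apply the triangle inequality, and absorb the transition kernels via the stationarity of $\mu^\pi$. The only cosmetic difference is that you carry $P(z'\mid z)$ as an explicit factor while the paper folds it into $P(\vs',z'\mid\vs,\va,z)$; the computations coincide.
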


\begin{proof}
    \begin{align*}
        E_A(\pi,\gZ)&=\frac{1}{2}\sum_{\vs,\va,z} \mu(\vs,\va,z)\sum_{s_A^\prime,a_A^\prime,z^\prime} \abs{P^\pi_{ABz}(s_A^\prime,a_A^\prime,z^\prime\mid \vs,\va,z) - P_{Az}(s_A^\prime,a_A^\prime,z^\prime\mid s_A,a_A,z)}\\
        &=\frac{1}{2}\sum_{\vs,\va,z} \mu(\vs,\va,z)\sum_{s_A^\prime,a_A^\prime,z^\prime} \abs{\sum_{s_B^\prime}P^\pi_{ABz}(\vs^\prime,a_A,z^\prime\mid \vs,\va,z) - P_{Az}(s_A^\prime,z^\prime\mid s_A,a_A,z)\pi^\prime(a_A^\prime\mid s_A^\prime,z^\prime)}\\
        &=\frac{1}{2}\sum_{\vs,\va,z} \mu(\vs,\va,z)\sum_{s_A^\prime,a_A^\prime,z^\prime} \abs{\sum_{s_B^\prime}P^\pi_{ABz}(\vs^\prime,a_A,z^\prime\mid \vs,\va,z) - \sum_{s_B^\prime}P(\vs^\prime,z^\prime\mid \vs,\va,z)\pi^\prime(a_A^\prime\mid s_A^\prime,z^\prime)}\\
        &=\frac{1}{2}\sum_{\vs,\va,z} \mu(\vs,\va,z)\sum_{s_A^\prime,a_A^\prime,z^\prime} \abs{ \sum_{s_B^\prime}P(\vs^\prime,z^\prime\mid \vs,\va,z)\pa{\pi(a_A^\prime\mid \vs^\prime,z^\prime)-\pi^\prime(a_A^\prime\mid s_A^\prime,z^\prime)}}\\
        &\leq \frac{1}{2}\sum_{\vs,\va,z}\mu(\vs,\va,z)\sum_{\vs^\prime,z^\prime} P(\vs^\prime,z^\prime\mid \vs,\va,z) \sum_{a_A^\prime} \abs{\pi(a_A^\prime\mid \vs^\prime,z^\prime)-\pi^\prime(a_A^\prime\mid s_A^\prime,z^\prime)}\\
        &=\frac{1}{2} \sum_{\vs^\prime,z^\prime} \mu(\vs^\prime,z^\prime) \sum_{a_A^\prime} \abs{\pi(a_A^\prime\mid \vs^\prime,z^\prime)-\pi^\prime(a_A^\prime\mid s_A^\prime,z^\prime)}\,.
    \end{align*}
\end{proof}
\subsection{Factored MDPs}
Another common class of multi-agent MDPs is Factored MDPs (FMDPs, \citealt{carlos01multiagent, carlos02efficient, ian14near}), which explicitly model the structured dependencies in state transitions. For instance, in a server cluster, the state transition of each server depends only on its neighboring servers. Formally, we define

\begin{definition}[Factored MDPs]
    An $N$-agent MDP $\gM_{1:N} (\gS, \gA, \mP , \vr_{1:N} , \gamma)$ is a factored MDP if each agent $i$ has neighbor set $Z_i\in[N]$ such that its transition is affected by all its neighbors, i.e. $P(s_i^\prime \mid \vs,\va)=P(s_i^\prime \mid s_{Z_i},a_{Z_i})$. 
\end{definition}
The neighbor set $|Z_i|$ is often assumed to be much smaller compared to the number of agents $N$. This helps to encode exponentially large system very compactly. We show this idea can also be captured in Markov entanglement. Consider the measure of Markov entanglement w.r.t ATV distance in~\eqref{eq: degree of independent},
\begin{align*}
    E_A(\PAB)
    &= \min_{\mP_A} \max_{(\vs,\va)\in\gS\times\gA}D_{\rm{TV}}\Big(\PAB(\cdot,\cdot\mid \vs,\va), \mP_A(\cdot,\cdot\mid s_A,a_A)\Big)\\
    &= \min_{\mP_A} \max_{(\vs,\va)\in\gS\times\gA}D_{\rm{TV}}\Big(\PAB(\cdot,\cdot\mid s_{Z_A},a_{Z_A}), \mP_A(\cdot,\cdot\mid s_A,a_A)\Big)\,.
\end{align*}
Thus we conclude the agent-wise Markov entanglement will only depend on its neighbor set.
\subsection{Fully Cooperative Markov Games}\label{sec: shared reward}
In fully cooperative settings, only a global reward will be reviewed to all agents. Unlike the modeling in section~\ref{sec: model}, this global reward may not necessarily be decomposed as the summation of local rewards. In this case, we propose meta algorithm~\ref{alg: meta_shared} as an extension of meta algorithm~\ref{alg: meta}.

\begin{algorithm}[h]
\caption{Q-value Decomposition with Shared Reward}
\begin{algorithmic}[1]\label{alg: meta_shared}
\REQUIRE{Global policy $\pi$; horizon length $T$.}
\STATE Execute $\pi$ for $T$ epochs and obtain $\gD=\brk{(s_{AB}^t,a_{AB}^t,r_{AB}^t, s_{AB}^{t+1}, a_{AB}^{t+1})}_{t=1}^{T-1}$.
\STATE Each agent $i\in\{A,B\}$ fits $Q^\pi_i$ using local observations $\gD_i=\brk{(s_i^t,a_i^t,r_i,s_{i}^{t+1}, a_i^{t+1})}_{t=1}^{T-1}$ where the local reward $(\vr_A,\vr_B)$ is learned via solving
\[\min_{\vr_A,\vr_B} \sum_{t=1}^T\Big(r_{AB}^t(\vs,\va)- (r_A(s_A^t,a_A^t)+r_B(s_B^t,a_B^t)) \Big)^2\,.\]
\end{algorithmic}
\end{algorithm}

This algorithm follows similar framework of meta algorithm~\ref{alg: meta} and differs at we now learn the closet local reward decomposition from data. When the reward is completely decomposable, meta algorithm~\ref{alg: meta_shared} recovers meta algorithm~\ref{alg: meta}. Thus intuitively, the more accurate we can decompose the global reward, the less decomposition error we have. Formally, we define the measure of reward entanglement 
\begin{equation}\label{eq: reward_entangle}
e(\vr_{AB})\coloneqq \min_{\vr_A,\vr_B} \norm{\vr_{AB}-(\vr_A\otimes \ve+\ve\otimes \vr_B)}_{\muAB}\,.
\end{equation}
This measure characterizes how accurate we can decompose the global reward under stationary distribution. We then obtain an extension of Theorem~\ref{thm: rho-weighted decomp}
\begin{proposition}\label{prop: shared reward}
    Consider a fully cooperative two-agent Markov system $\gM_{AB}$. Given any policy $\pi\colon \gS \to \Delta(\gA)$ with the measure of Markov entanglement $E_A(\PAB),E_B(\PAB)$ w.r.t the $\mu^\pi_{AB}$-weighted agent-wise total variation distance and the measure of reward entanglement $e(\vr_{AB})$, it holds
    \[\Big\lVert Q^\pi_{AB}-\pa{Q^\pi_A\otimes \ve + \ve \otimes Q^\pi_B} \Big\rVert_{\mu^\pi_{AB}} \leq \frac{e(\vr_{AB})}{1-\gamma} + \frac{4\gamma \pa{E_A(\PAB)r_{\max}^A+E_B(\PAB)r_{\max}^B}}{(1-\gamma)^2}\,,\]
    where $r_{\max}^A, r_{\max}^B$ is the bound of optimal solution of \eqref{eq: reward_entangle}.
\end{proposition}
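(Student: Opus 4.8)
The plan is to reduce the shared-reward setting to the decomposable-reward setting already handled by Theorem~\ref{thm: rho-weighted decomp}, paying only an additional price proportional to the reward entanglement. Let $\vr_A^\ast,\vr_B^\ast$ denote the optimal local rewards attaining the minimum in \eqref{eq: reward_entangle}, so that $\norm{\vr_{AB}-(\vr_A^\ast\otimes\ve+\ve\otimes\vr_B^\ast)}_{\muAB}=e(\vr_{AB})$; by construction these are exactly the local rewards fitted in Meta Algorithm~\ref{alg: meta_shared}, and $r_{\max}^A,r_{\max}^B$ are bounds on $\vr_A^\ast,\vr_B^\ast$. I would introduce the intermediate global Q-value computed from the \emph{decomposable} surrogate reward,
\[\tilde Q^\pi_{AB}\coloneqq(\mI-\gamma\PAB)^{-1}\pa{\vr_A^\ast\otimes\ve+\ve\otimes\vr_B^\ast}\,,\]
and split the target by the triangle inequality for the $\muAB$-norm:
\[\norm{Q^\pi_{AB}-\pa{Q^\pi_A\otimes\ve+\ve\otimes Q^\pi_B}}_{\muAB}\leq \underbrace{\norm{Q^\pi_{AB}-\tilde Q^\pi_{AB}}_{\muAB}}_{(A)}+\underbrace{\norm{\tilde Q^\pi_{AB}-\pa{Q^\pi_A\otimes\ve+\ve\otimes Q^\pi_B}}_{\muAB}}_{(B)}\,.\]

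For term $(A)$, I would write $Q^\pi_{AB}-\tilde Q^\pi_{AB}=(\mI-\gamma\PAB)^{-1}\big(\vr_{AB}-(\vr_A^\ast\otimes\ve+\ve\otimes\vr_B^\ast)\big)$ and expand the inverse as the Neumann series $\sum_{t\ge 0}\gamma^t(\PAB)^t$. The key fact is that $\muAB$ is the stationary distribution of $\PAB$, so $\muAB^\top\PAB=\muAB^\top$ and hence $\norm{\PAB\vx}_{\muAB}\le\norm{\vx}_{\muAB}$ for every $\vx$; this is precisely the contraction step $(i)$ used in the proof of Theorem~\ref{thm: rho-weighted decomp} in Appendix~\ref{app: proof of mu-weight}. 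Applying this bound term-by-term and summing the geometric series $\sum_{t\ge0}\gamma^t=1/(1-\gamma)$ yields $(A)\le e(\vr_{AB})/(1-\gamma)$.

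Term $(B)$ is exactly the decomposition-error quantity controlled by Theorem~\ref{thm: rho-weighted decomp}, but now instantiated with the decomposable reward $\vr_A^\ast\otimes\ve+\ve\otimes\vr_B^\ast$: here $\tilde Q^\pi_{AB}$ is the global Q-value for that reward, while $Q^\pi_A,Q^\pi_B$ are the local Q-values produced by local TD learning on $\vr_A^\ast,\vr_B^\ast$—which is what Meta Algorithm~\ref{alg: meta_shared} computes. Since the entanglement measures $E_A(\PAB),E_B(\PAB)$ and the local transitions $\PA,\PB$ depend only on $\PAB$ (and not on the reward), Theorem~\ref{thm: rho-weighted decomp} applies verbatim with $r_{\max}^A,r_{\max}^B$ bounding $\vr_A^\ast,\vr_B^\ast$, giving $(B)\le 4\gamma(E_A(\PAB)r_{\max}^A+E_B(\PAB)r_{\max}^B)/(1-\gamma)^2$. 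Adding the two bounds yields the stated inequality.

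I expect the only delicate point to be the bookkeeping that certifies term $(B)$ as a genuine instance of Theorem~\ref{thm: rho-weighted decomp}: one must confirm that the local transitions and local Q-values entering this proposition coincide with those of Theorem~\ref{thm: rho-weighted decomp} once the global reward is replaced by its optimal decomposition, and that the reward bounds $r_{\max}^A,r_{\max}^B$ refer consistently to $\vr_A^\ast,\vr_B^\ast$. Everything else reduces to the Neumann-series contraction in the $\muAB$-norm, which is routine.
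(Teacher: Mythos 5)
Your proof is correct and is essentially the paper's intended argument: the paper states Proposition~\ref{prop: shared reward} without proof, presenting it as an extension of Theorem~\ref{thm: rho-weighted decomp}, and your triangle-inequality split into a reward-entanglement term (bounded by the Neumann series together with the stationary-distribution contraction $\norm{\PAB\vx}_{\muAB}\le\norm{\vx}_{\muAB}$, giving $e(\vr_{AB})/(1-\gamma)$) plus a verbatim application of Theorem~\ref{thm: rho-weighted decomp} to the surrogate decomposable reward $\vr_A^\ast\otimes\ve+\ve\otimes\vr_B^\ast$ is exactly that extension, valid because $E_A(\PAB)$, $E_B(\PAB)$, $\PA$, $\PB$, and $\muAB$ depend only on the transition dynamics and not on the reward. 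One cosmetic caveat: your aside that $\vr_A^\ast,\vr_B^\ast$ are ``exactly'' what Meta Algorithm~\ref{alg: meta_shared} fits is not literally right, since \eqref{eq: reward_entangle} is a $\muAB$-weighted $\ell_1$ projection while the algorithm minimizes squared error, but this plays no role in the bound, which only needs the optimizer of \eqref{eq: reward_entangle} as you use it.
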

Although Proposition~\ref{prop: shared reward} offers a theoretical guarantee for general two-agent fully cooperative Markov games, its utility is greatest in systems with low reward and transition entanglement. Fully cooperative settings remain inherently challenging--for instance, even the asymptotically optimal Whittle Index may achieve only a $\frac{1}{N}$-approximation ratio for RMABs with global rewards (\citealt{raman2024global}). In practice, most research (\citealt{peter18value, rashid2020monotonic}) relies on sophisticated deep neural networks to learn decompositions in such settings. We thus defer a more refined analysis of fully cooperative scenarios to future work.

\section{Simulation environments}
All simulation code is available at this \href{https://github.com/shuzechen/markov_entanglement}{Github link}.

\subsection{Circulant RMAB}\label{app: simulation}
In this section, we empirically study the value decomposition for index policies. 
\paragraph{Circulant RMAB} A circulant RMAB has four states indexed by $\{0, 1, 2, 3\}$. Transition kernels $P_a = {p(s, 0, s')}_{s, s' \in S}$ for action $a = 0$ and $a = 1$ are given by
\[\mP_0 = 
\begin{pmatrix}
    1/2 &  \ 0  & \ 0  &\ 1/2\\
    1/2 &\ 1/2 &\ 0 &\ 0\\
    0 &\ 1/2 &\ 1/2 &\ 0\\
    0 &\ 0 &\ 1/2 &\ 1/2
\end{pmatrix}, 
\ 
\mP_1 = 
\begin{pmatrix}
    1/2 &  \ 1/2  & \ 0  &\ 0\\
    0 &\ 1/2 &\ 1/2 &\ 0\\
    0 &\ 0 &\ 1/2 &\ 1/2\\
    1/2 &\ 0 &\ 0 &\ 1/2
\end{pmatrix}.
\]
The reward solely depends on the state and is unaffected by the action: \[r(0, a) = -1, \ r(1, a) = 0, \ r(2, a) = 0, \ r(3, a) = 1; \forall a \in \{0, 1\}.\] We set the discount factor to $\gamma = 0.5$ and require $N/5$ arms to be pulled per period. Initially, there are $N / 6$ arms in state $0$, $N / 3$ arms in state $1$ and $N / 2$ arms in state $2$, the same as \cite{zhang2022near}. We then test an index policy with priority: state $2>$ state $1 >$ state $0 >$ state $3$.

\paragraph{Monte-Carlo estimation of Markov entanglement}
For each RMAB instance, we simulate a trajectory of length $T=6N$ and collect data for the later $5N$ epochs. Notice RMAB is a special instance of WCMDP, we thus apply \eqref{eq: monte-carlo-wcmdp}.
\begin{align}
    E_i(\mP_{1:N}^\pi)\approx\frac{1}{2}\min_{\pi^\prime}\frac{1}{T}\sum_{t=1}^T\sum_{a_i}\abs{\pi(a_i\mid \vs)-\pi^\prime(a_i\mid s_i)}\label{eq: appendix_monte_carlo}
\end{align}
Notice \eqref{eq: appendix_monte_carlo} is \emph{convex} for $\pi^\prime$ and $\pi^\prime$ only takes support of size $|S||A|=8$. we thus apply efficient convex optimization solvers. We replicate this experiment for $10$ independent runs to obtain the mean estimation and standard error in the left panel of Figure~\ref{fig: RMAB_Exp_main_text}.

\paragraph{Learning local Q-values}
For each RMAB instance, we simulate a trajectory of length $T=6N$, reserving the later $T=5N$ epochs as the training phase for each agent to fit local Q-value functions. During testing, we estimate the $\mu$-weighted decomposition error using $50$ simulations sampled from the stationary distribution.

The ground-truth $Q^\pi_{1:N}$ is approximated via Monte Carlo learning (\citealt{Sutton18RL}), with each estimate derived from $30$-step simulations averaged over $3N$ independent runs. Error bars represent the standard error for both Monte Carlo estimates and $\mu$-weighted decomposition errors.

In addition to $\mu$-weighted error, we introduce a concept of relative error, defined as $\norm{Q^\pi_{1:N}(\vs,\va)-\sum_{i=1}^NQ^\pi_i(s_i,a_i)}_{\mu^\pi_{1:N}}/\norm{Q^\pi_{1:N}}_{\mu^\pi_{1:N}}$. This relative error reflects the approximate ratio of our value decomposition. We present our simulation results in Figure~\ref{fig: RMAB_Exp}.

\begin{figure}[h]
\centering
\begin{tabular}{l l}
        \includegraphics[height=.35\textwidth]{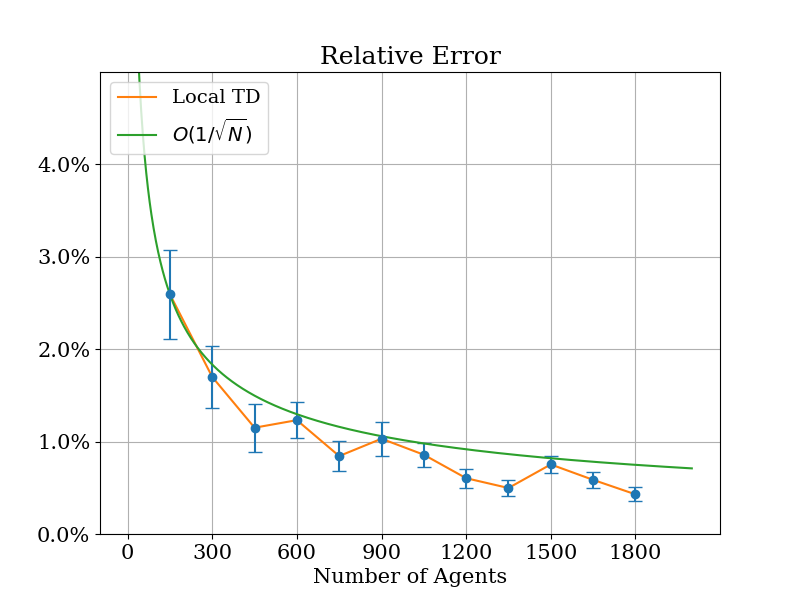}
        &
        \includegraphics[height=.35\textwidth]{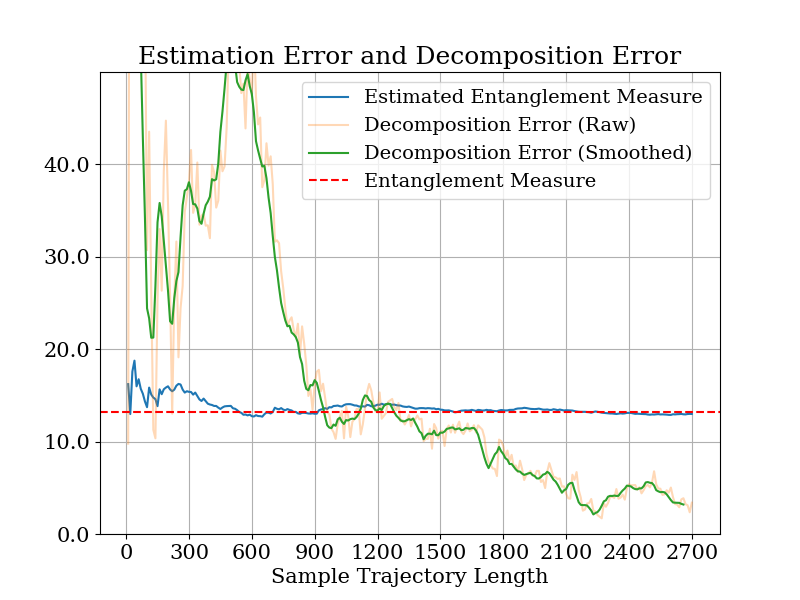}
    \end{tabular}
    \caption{Value Decomposition error in circulant RMAB under an index policy. \emph{Left:} Relative error, $\norm{\textrm{decomposition error}}_\mu/\norm{Q^\pi_{1:N}}_\mu$. \emph{Right: }Different errors in RMAB with $900$ agents: empirical estimation of Markov entanglement (blue); $\mu^\pi_{1:N}$-weighted decomposition error (green); the true measure of Markov estimated with $T=10N$ samples (red dashed line). }
    \label{fig: RMAB_Exp}
\end{figure}
It immediately follows that the relative error decays at rate $\gO(1/\sqrt{N})$ and we notice the relative error is no larger than 3\% over all data points. 

\paragraph{Sample Complexity and Computation} While each RMAB instance has an exponentially large state space $|S|^N$, we show that our empirical estimation of Markov entanglement—along with the decomposition error—converges quickly with $T=5N$. Specifically, we illustrate these errors for an RMAB instance with $900$ agents in the right panel of Figure~\ref{fig: RMAB_Exp}. We see that the empirical estimation of Markov entanglement converges in $T<N$ samples, demonstrating its efficiency.

\subsection{A Ride-hailing Simulator}\label{app: exp_RH}
In this section, we empirically study the Markov entanglement and value decomposition for a ride-hailing simulator.

\paragraph{Ride-hailing Simulator} We collect 810,000 trip records from NYC yellow cabs spanning January through December 2024. The city is partitioned into 268 neighborhood zones, with each trip record containing the pickup and destination zones (see Figure~\ref{fig: taxi_xone}). To simplify the state space, we aggregate geographically proximate zones within Manhattan into $14$ consolidated zones and treat all zones outside Manhattan as a single $15$-th zone, yielding a local state space of size 15. We then fit the order distribution over these aggregated zones using the trip record data, with rewards defined as the average tolls paid for trips between zone pairs.

\begin{figure}[h]
    \centering
    \includegraphics[width=0.4\linewidth]{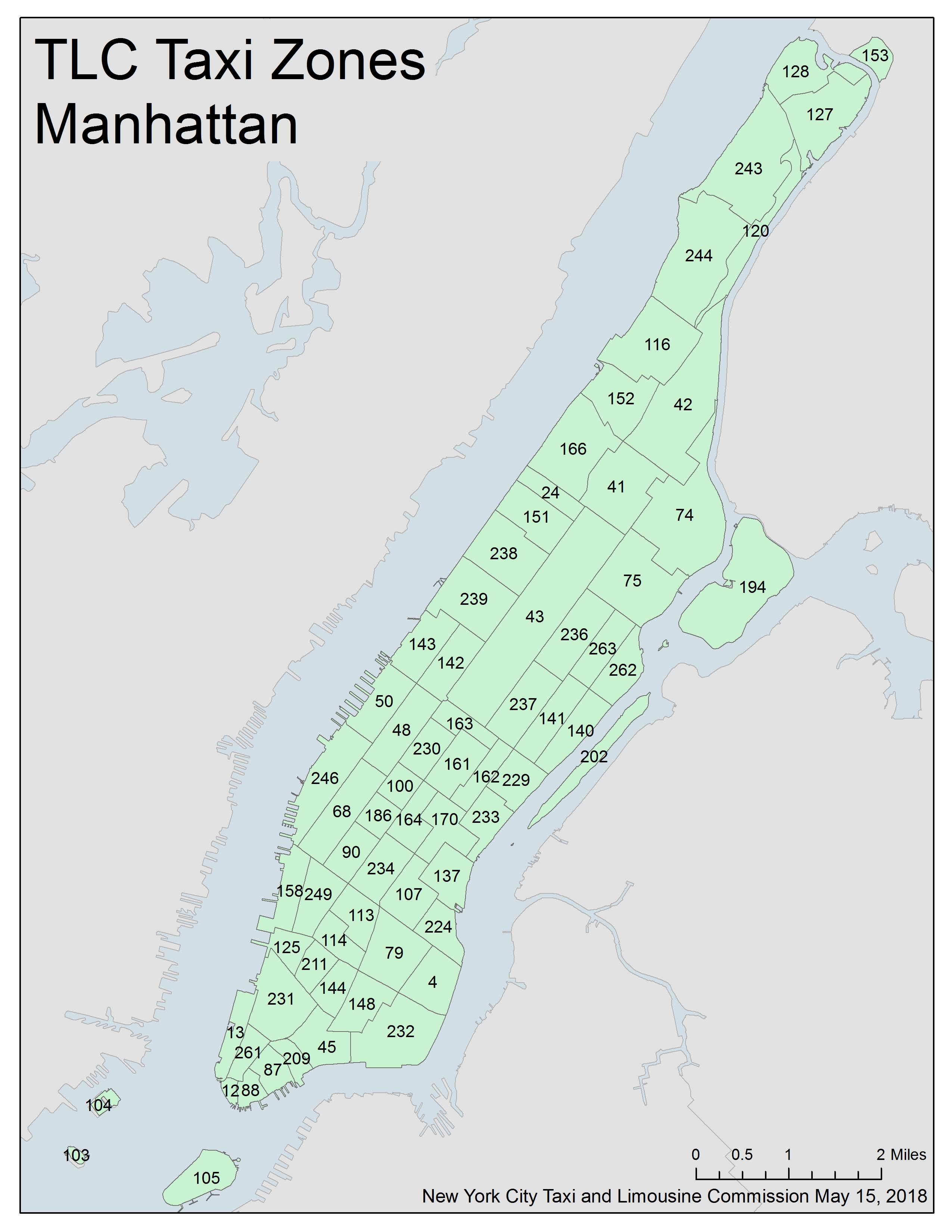}
    \caption{Taxi zone map of Manhattan \cite{NYCTLCData}}
    \label{fig: taxi_xone}
\end{figure}

At each timestep, we sample $0.1N$ orders from this fitted distribution. Each driver's local action is represented as a $(0.1N+1)$-dimensional binary vector with a single element equal to $1$, indicating either acceptance of a specific order or remaining idle. The dispatching mechanism implements a matching policy that minimizes total pickup distance, estimated using actual trip distances from the data. For tractability, our simulator does not incorporate travel delays; we leave this extension for future work. Finally, each idle driver may relocate to a neighboring zone with probability $0.05$.

\paragraph{Monte-Carlo estimation of Markov entanglement}
We apply \eqref{eq: ME_order} in Proposition~\ref{prop: ME_order},
\begin{align}
    E_i(\PN,\gZ)&\leq \frac{1}{2}\sup_i\sum_{s_{1:N},z} \mu^\pi(s_{1:N},z)\sum_{a_i}\abs{\pi(a_i\mid s_{1:N},z)-\pi^\prime(a_i\mid s_i,z)}\label{eq: me-wcmdp-order}\\
    &\approx \min_{\pi^\prime}\frac{1}{2T}\sum_{t=1}^T \sum_{a_i}\Big|\pi(a_i\mid \vs^t,z^t)-\pi^\prime(a_i\mid s_i^t,z^t)\Big|\,.\label{eq: monte-carlo-wcmdp-order}
\end{align}
We first notice that $E_i(\PN,\gZ)$ is identical across all drivers due to their homogeneity in our simulator. This property enables us to aggregate estimates from individual drivers to update a centralized entanglement estimator. Specifically, 
\begin{equation}\label{eq: central_estimate}
E(\PN,\gZ)\approx \min_{\pi^\prime}\frac{1}{2T}\sum_{t=1}^T \sum_{k=1}^d \frac{|N(s^t_k)|}{N} \sum_{a_i}\Big|\pi(a_i\mid \vs^t,z^t)-\pi^\prime(a_i\mid s_k,z^t)\Big|\,,
\end{equation}
where $d$ is the local state space size and $N(s^t_k)$ is the number of agents at state $s_k$ at timestep $t$. \eqref{eq: central_estimate} can also be verified as taking average of all estimated $E_i(\PN,\gZ)$. This centralized estimation is $N$ times more sample-efficient thanks to the homogeneity of agents.

However, \eqref{eq: central_estimate} still suffers from the curse of dimensionality due to the high-dimensional exogenous order space $\gZ$. Recall we sample $0.1 N$ orders at each timestep, yielding $d^{0.2N}$ possible combinations of orders in $\gZ$. To address this issue, we take advantage of the exogeneity of $\gZ$ and transform \eqref{eq: me-wcmdp-order},
\begin{align*}
    E_i(\PN,\gZ)&\leq \frac{1}{2}\sum_{s_{1:N},z} \mu^\pi(s_{1:N},z)\sum_{a_i}\abs{\pi(a_i\mid s_{1:N},z)-\pi^\prime(a_i\mid s_i,z)}\\
    &= \frac{1}{2}\sum_{z} P(z) \sum_{s_{1:N},z} \mu^\pi(s_{1:N})\sum_{a_i}\abs{\pi(a_i\mid s_{1:N},z)-\pi^\prime(a_i\mid s_i,z)}\\
    &\approx \frac{1}{T}\sum_{l=1}^T\min_{\pi^\prime(\cdot|s,z^l)}\frac{1}{2T}\sum_{t=1}^T \sum_{a_i}\Big|\pi(a_i\mid \vs^t,z^l)-\pi^\prime(a_i\mid s_i^t,z^l)\Big|\,.
\end{align*}
Combined with \eqref{eq: central_estimate} we obtain our final estimator of Markov entanglement measure for the ride-hailing setting.

\paragraph{Learning local Q-values} In Appendix~\ref{app: exog_order}, we extend our Markov entanglement theory with local Q-value defined as $Q^\pi_i(s_i,a_i,z)$. As mentioned above, $z$ has exponentially large support $d^{0.2N}$, which renders original tabular TD learning untractable. To address this issue, we again take advantage of the ride-hailing structure. Notice that we can define local value function using the following Bellman equation
\begin{equation}\label{eq: RH-TD}
Q^\pi_i(s_i,a_i,z)=r_i(s_i,a_i,z)+\sum_{s^\prime_i} p(s^\prime_i|s_i,a_i,z) V^\pi_i(s_i^\prime)\,.
\end{equation}
The key idea is that the transition $p(s^\prime_i|s_i,a_i,z)$ turns out to be very easy to estimate for the ride-hailing setting. When $a_i$ corresponds to taking certain order, then the transition is fixed since the driver will transit to the destination zone of the order; when $a_i$ corresponds to stay idle, $p(s^\prime_i|s_i,a_i,z)$ refers to the relocation probability that does not depend on $z$. This idea then reduces learning local Q-value to learning local value functions $V^\pi_i$, which has only constant support $d$. Furthermore, we notice that since drivers are homogeneous. Thus we can aggregate their local TD updates to learn a central local value function, improving sample efficiency by a factor of $N$.

Finally, we note \eqref{eq: RH-TD} is exactly how the real-world ride-hailing platform Lyft conducts value decomposition. Our local value function corresponds to what is called the Online Supply Values (OSV) in~\cite{han22real}.


\end{appendices}



\end{document}